\newtheorem{thm}{Theorem}
\newtheorem{lemma}{Lemma}[section]
\DeclareMathAlphabet\mathbfcal{OMS}{cmsy}{b}{n}
\newcommand{\be}{\begin{equation}}
\newcommand{\ee}{\end{equation}}
\newcommand{\bea}{\begin{eqnarray}}
\newcommand{\eea}{\end{eqnarray}}
\newcommand{\beas}{\begin{eqnarray*}}
	\newcommand{\eeas}{\end{eqnarray*}}
\newcommand{\bbR}{\mathbb{R}}
\newcommand{\cC}{\mathcal{C}}
\newcommand{\cF}{\mathcal{F}}
\newcommand{\cT}{\mathcal{T}}
\newcommand{\MM}{\mathrm{(MM)}}
\newcommand{\MN}{\mathrm{MN}}
\newcommand{\LN}{\mathrm{LN}}
\newcommand{\bern}{\mathrm{Bern}}
\newcommand{\X}{{\mathbf{X}}}
\newcommand{\B}{{\mathbf{B}}}
\newcommand{\Cov}{{\rm Cov}}
\newcommand{\widesim}[2][1.5]{
  \mathrel{\overset{#2}{\scalebox{#1}[1]{$\sim$}}}
}
\newcommand{\argmin}{\mathop{\rm arg\min}}
\newcommand{\bbP}{\mathbb{P}}
\newcommand{\bbE}{\mathbb{E}}
\newcommand{\ie}[0]{\emph{i.e., }}
\newcommand{\eg}[0]{\emph{e.g., }}
\newcommand{\ones}{{{\mathbbm{1}}}}
\newcommand{\ind}[1]{\ones_{\{#1\}}}
\newcommand*{\rom}[1]{\expandafter\@slowromancap\romannumeral #1@}
\begin{document}
\title{Context-dependent self-exciting point processes: \\ models, methods, and risk bounds in high dimensions}
\author{Lili Zheng$^1$, Garvesh Raskutti$^1$, Rebecca Willett$^{2}$, Benjamin Mark$^3$}
	
	\date{}
	
	\maketitle
\footnotetext[1]{Department of Statistics, University of Wisconsin-Madison}
\footnotetext[2]{Departments of Statistics and Computer Science, University of Chicago}
\footnotetext[3]{Department of Mathematics, University of Wisconsin-Madison}
	
	\bigskip
	\begin{abstract}
High-dimensional autoregressive point processes model how current events trigger or inhibit future events, such as activity by one member of a social network can affect the future activity of his or her neighbors.
While past work has focused on estimating the underlying network structure based solely on the times at which events occur on each node of the network,
this paper examines the more nuanced problem of estimating {\em context-dependent} networks that reflect how features associated with an event (such as the content of a social media post) modulate the strength of influences among nodes. 
Specifically, we leverage ideas from compositional time series and
regularization methods in  machine learning to conduct network estimation for high-dimensional \emph{marked} point processes. 
Two models and corresponding estimators are considered in detail: an autoregressive multinomial model suited to categorical marks and a logistic-normal model suited to marks with mixed membership in different categories.
Importantly, the logistic-normal model leads to a convex negative log-likelihood objective and captures dependence across categories. We provide theoretical guarantees for both estimators, which we validate by simulations and a synthetic data-generating model. We further validate our methods through two real data examples and demonstrate the advantages and disadvantages of both approaches. 
\end{abstract}


\section{Introduction}

High-dimensional self-exciting point processes arise in a broad range of applications. For instance, in a social network, we may observe a time series of members' activities,
such as posts on social media where each person's post can influence their neighbors' future posts (\eg 
\cite{stomakhin2011reconstruction,romero2011influence}).
In the broadcast of social events, news media sources play a key role and influential news media sources often trigger others to post new articles \mbox{\citep{leskovec2009meme,farajtabar2017fake}}. In electrical systems, cascading chains
of power failures reveal critical information about the underlying
power distribution network \citep{rudin2011machine,ertekin2015reactive}. 
During epidemics, networks among computers or people are reflected by the time at which each node becomes infected \citep{ganesh2005effect,yang2013epidemics}. In biological neural networks, firing neurons can trigger or inhibit the firing of their neighbors, so that information about the network structure is embedded within spike train observations \citep{linderman2016bayesian, fletcher2014scalable,hall2015online,pillow2008spatio,gerhard2017stability}. 
The above processes are {\em self-exciting} in that the
likelihood of future events depends on past events (\ie a particular
type of
autoregressive process).

In many applications, events are associated with feature vectors
describing the events. For instance, interactions in a social network
have accompanying text, images, or videos; and power failures are
accompanied by information about current-carrying cables, cable ages,
and cable types. This feature vector associated with an event is referred to as a {\em mark} in
the point process literature. Prior works \citep{hall2016inference, mark2018network} describe methods and theoretical guarantees for network influence estimation given multivariate event data without accounting for the type or context of the event. The contribution of this paper focuses on {\em estimation methods and theoretical guarantees for context-dependent network structures} which exploit marks. 
  The key idea is that different categories of events are characterized
  by different (albeit related) functional networks; we think of the
  feature vector as revealing the {\em context} of each event, and our
  task is to infer context-specific functional networks. Allowing for marks provides a much richer model class that can
reflect, for instance, that people interact in a social network
differently when interactions are family-focused vs.\ work-focused
vs.\ political \citep{puniyani2010social,feller2011divided,williams2013system}. Learning richer models like these allows our methods
to have much stronger predictive capabilities, and to provide more insights in the network structure.


However, developing a statistical model for marked self-exciting point processes is a non-trivial task. One particular challenge is that we usually cannot determine the exact category of the event. For example, a post on social media may exhibit membership in several topics \citep{blei2003latent}; an infected patient's symptoms can be caused by different diseases \citep{woodbury1978mathematical}; a new product released to the market could contain several features or styles. Some natural-seeming models lead to computationally-intractable estimators, while others fail to account for ambiguity in the marks. 
In this paper we propose two models that suit distinct scenarios:

\begin{enumerate}[(i)]
\item {\bf Multinomial Model:} This model is applied when each event (\ie its mark) naturally belongs to a single category. For example, a tweet may clearly belong to a single category (\eg ``political''). 
\item {\bf Logistic-normal Model:} This model is applied when each event is a mixture of multiple categories (\ie mixed membership). For example, a news article may belong to two or more categories (\eg ``political'' and ``finance'') and we may only have measurements of the relative extent to which it's in each category. 
\end{enumerate}

To the best of our knowledge,
the multinomial model we consider appeared first in \cite{tank2017granger}, while no theoretical guarantee was provided. From both a modeling and theoretical perspective, the logistic-normal model is more nuanced. It employs the logistic-normal distribution widely used in compositional data analysis (\eg  \citet{aitchison1982statistical,brunsdon1998time,ravishanker2001compositional}). 
The logistic-normal model has advantages over other mixed membership models such as the Dirichlet distribution and the more recent Gumbel soft-max distribution since it leads to a convex negative log-likelihood function and models dependence among sub-compositions of the membership vector, which will be explained in detail in the beginning of Section \ref{sec:set_up_LN_cts}.
\paragraph{High-dimensional setting:}
Throughout this paper we focus on the \emph{high-dimensional} setting, where the number of nodes in the network is large and grows with sample size. We assume the number of edges within the huge network to be \emph{sparse}: each  node should only be influenced by a limited number of other nodes. We state this condition more formally in  Section \ref{sec:theory}.

\subsection{Contributions} 
Our contributions are summarized as follows:
\begin{itemize}
\item For both models, we present estimation algorithms based on minimizing a convex loss function using a negative log-likelihood loss plus a regularization term that accounts for the sparsity of networks but shared network structure between models corresponding to different categories. 
\item Furthermore, we establish risk bounds that characterize the error decay rate as a function of network size, sparsity, shared structure, and number of observations, and these bounds are illustrated with a variety of simulation studies. 
\item Finally, we validate the hypothesis that the logistic-normal is more suitable for mixed membership settings while the multinomial model is more suitable for settings with a clear dominant category through experimental results on real data from two datasets and a synthetic data-generating model. 
The synthetic data model is based on a noisy logistic-normal distribution with some nodes having events with a single dominant category and other nodes following a mixed membership setting. The multinomial model tends to correctly detect the edges between nodes with a single dominant category while the logistic-normal approach tends to correctly detect the edges corresponding to nodes with mixed membership categories. We further validate the hypothesis with two datasets: (1) a political tweets data set focusing on the network which varies according to political leanings of tweets and (2) online media data set where the network depends on topics of memes. The networks detected for both datasets tend to support the above hypothesis. 

\end{itemize}

\subsection{Related Work}\label{sec:related_work}
There has been substantial literature on recovering network structure using time series of event data in recent years, including continuous-time approaches \citep{zhou2013learning,yang2017online} based on Hawkes process \citep{hawkes1971spectra} and discrete-time approaches \citep{linderman2016bayesian, fletcher2014scalable,hall2016inference,mark2018network}. 
Our work follows the line of works (discrete time approaches): \cite{hall2016inference, mark2018network}, but with the additional challenge of incorporating the context information of events. \cite{tank2017granger} considers the multinomial model with exact categorical information of events but provides no theoretical guarantees. 


Another popular approach aiming to recover the text-dependent network structure in social media is the cascade analysis \citep{lerman2010information,yu2017topicbased,yu2018learning}, which focuses on the diffusion of information, \eg retweeting or sharing the same hyperlink. However, it is also possible for users to interact in social media by posting about similar topics (\eg showing condolence for shooting events) or arguing about opposite opinions (\eg tweets sent by presidential candidates) without sharing exactly the same text. This kind of interaction is captured by our approach but not by the cascade analysis. Due to the nature of our models, we can also study time series of event data with any categorical marks (either exact or with uncertainty/mixed membership), without diffusion of information involved. Examples include the stock price changes with corresponding business news as side information. We can also analyze multi-node compositional time series (\cite{brunsdon1998time,ravishanker2001compositional} are existing works on single-node compositional time series) if we consider a special case of the logistic-normal model \eqref{eq:model_disc} and \eqref{eq:model_cts} with $q=1$.
Our work also incorporates proof techniques from the high-dimensional statistics literature (\eg \cite{bickel2009simultaneous, raskutti2010restricted}), whilst incorporating the nuances of temporal dependence, non-linearity and context-based information not captured in prior works. 

The remainder of this paper is organized as follows: we elaborate on our problem formulations and corresponding estimators in Section \ref{sec:model_est}; theoretical guarantees on estimation errors are provided in Section \ref{sec:theory}; we also present simulation results on synthetic data and our synthetic model example in Section \ref{sec:numeric_exp} and real data experiments in Section \ref{sec:real_data}, respectively.

\section{Problem Formulation and Estimators}\label{sec:model_est}
We begin by introducing basic notation. For any two tensors $A$ and $B$ of the same dimension, let $\langle A, B\rangle$ denote the Euclidean inner product of $A$ and $B$. Also, define the Frobenius norm of tensor $A$ as 
$\|A\|_{F}=\langle A, A\rangle^{\frac{1}{2}}.$

For any $4$th-order tensor $A\in \bbR^{n_1\times n_2\times n_3\times n_4}$, define the
regularization norm $\|\cdot\|_{R}$ as 
\begin{equation}\label{eq:nmR}
    \|A\|_{R}=\sum_{m,m'}\|A_{m,:,m',:}\|_{F}.
\end{equation} 
For any matrix $A$, we let $\lambda_{\min}(A)$ denote the smallest eigenvalue of $A$.
Let
$\ind{E} = {\footnotesize \begin{cases}1,& \text{if } E \text{ true} \\ 0,& \text{else}\end{cases}}$ be the indicator function.

$M$ refers to the number of nodes (multiple time series) and let $X^t\in \mathbb{R}^{M\times K}$ be the observed data during time period $t$ for $t = 0,1,...,T$, where $K$ is the number of categories of events.
For each $1\leq m\leq M$, $0\leq t\leq T$, if there is no event, $X^t_m\in \mathbb{R}^K$ is a zero vector. 
For times and nodes with events, we consider two different observation models. The first model is the multinomial model (Sec.~\ref{sec:set_up_mult}) corresponding to the setting in which
each event only belongs to a single category. In this case, if the event at time $t$ and node $m$ is in category $k \in \{1,\ldots,K\}$, then we let $X^t_m=e_k$, where $e_k$ is the $k$-th vector in the canonical basis of $\mathbb{R}^K$. 
The second is the logistic-normal model (Sec.~\ref{sec:setup_mixed}) corresponding to the setting in which each event has mixed category membership and that membership is potentially observed with noise. In this case, we let $X^t_m$ be a vector on the simplex $\triangle^{K-1}$, with non-negative elements summing up to one. The following two sections address these two cases separately. 

\subsection{Multinomial Model}\label{sec:set_up_mult}
When each event belongs to a single category, the distribution of $\{X^{t+1}_m\}_{m=1}^M$ conditioned on the past data $X^t$ can be modeled as independent multinomial random vectors. Specifically, let tensor $A^{\MN}\in \mathbb{R}^{M\times K\times M\times K}$ encode the context-dependent network, and each entry $A^{\MN}_{mkm'k'}$ is the influence exerted upon \{node $m$, category $k$\} by \{node $m'$, category $k'$\}. We will refer to this influence as {\em absolute} influence, contrasted with the relative influence and overall influence in the logistic-normal model introduced later.
That is, an event from node $m'$ in category $k'$ may increase or decrease the likelihood of a future event by node $m$ in category $k$, and $A^{\MN}_{mkm'k'}$ parameterizes that change in likelihood. We can also think of this network as a collection of $K\times K$ subnetworks indexed by $\{A^{\MN}_{:,k,:,k'}\in \bbR^{M\times M},1\leq k,k'\leq K\}$, where each sub-network is the influence among the $M$ nodes, for a  pair of categories $(k, k')$. Further define $\nu^{\MN}\in \mathbb{R}^{M\times K}$ as the intercept term where each entry $\nu^{\MN}_{mk}$ determines the event rate of \{node $m$, category $k$\} when there are no past stimuli. The overall event rate is parameterized by the {\em intensity}. Then the intensity of \{node $m$, category $k$\} at time $t+1$ given the past is 
$$
\mu^{t+1}_{mk}=\langle A^{\MN}_{mk},X^t\rangle+\nu^{\MN}_{mk}=\sum_{m',k'} A^{\MN}_{mkm'k'} X^t_{m'k'}+\nu^{\MN}_{mk},
$$
and the conditional distribution of $X^{t+1}_m$ is
\begin{align}
\begin{split}
    \mathbb{P}(X_m^{t+1}=e_k|X^t)=&\frac{e^{\mu^{t+1}_{mk}}}{1+\sum_{k'=1}^Ke^{\mu^{t+1}_{mk'}}},\, 1\leq k\leq K \\
    \mathbb{P}(X_m^{t+1}=0|X^t)=&\frac{1}{1+\sum_{k'=1}^Ke^{\mu^{t+1}_{mk'}}}.
\label{eq:mult}
\end{split}
\end{align}
This is also the multinomial logistic transition distribution (mLTD) model considered in \citet{tank2017granger}. 

To estimate the parameter $A^{\MN}\in \mathbb{R}^{M\times K\times M\times K}$, one straightforward method is to find the minimizer of the penalized negative log-likelihood:
\begin{equation}\label{eq:est_mult}
    \widehat{A}^{\MN}=\argmin_{A\in \mathbb{R}^{M\times K\times M\times K}} L^{\MN}(A)+\lambda\|A\|_{R},
\end{equation}
where
\begin{align}
\begin{split}
L^{\MN}(A)
    =\frac{1}{T}\sum_{t=0}^{T-1} \sum_{m=1}^M\Big[f&(\langle A_m, X^{t}\rangle+\nu^{\MN}_m)-\sum_{k=1}^K (\langle A_{mk}, X^t\rangle+\nu^{\MN}_{mk}) X^{t+1}_{mk}\Big],
\end{split}
\end{align}
and $f:\bbR^K\rightarrow \bbR$ is defined by $f(x)=\log \left(\sum_{i=1}^K e^{x_i}+1\right)$. Note that $\|A\|_{R}$ is the group sparsity penalty defined in \eqref{eq:nmR}.

\subsection{Logistic-normal Model}\label{sec:setup_mixed}
When there is mixed membership, for each $0\leq t\leq T, 1\leq m\leq M$, the $K \times 1$ vector $X^t_m$ is either the zero vector or a vector on the simplex corresponding to the mixed membership probability of categories, thus we need to address the distribution in two parts: the probability mass of
$\ind{X^{t+1}_m\neq 0}$, and the distribution of $X^{t+1}_m$ given $X^{t+1}_m\neq 0$. 

Let $Z^{t+1}_m\in \triangle^{K-1}$ be a random vector on the simplex with a distribution to be specified shortly. We model the distribution of $\{X^{t+1}_m\}_{m=1}^M$ conditioned on the past as:
\begin{equation}\label{eq:model_disc}
    \begin{split}
        X^{t+1}_m =  &\begin{cases} Z^{t+1}_m, &\text{ with probability $q_m^{t+1}$},\\
0_K, &\text{ with probability 1-$q_m^{t+1}$},
\end{cases}
\end{split}
\end{equation}
and further assume conditional independence of entries for $\{X^{t+1}_m\}_{m=1}^M$. For $q^{t+1}\in [0,1]^M$, each element is the probability that an event occurs at the corresponding node and time $t+1$. We specify how $q^{t+1}$ is modeled later. 

\subsubsection{Modeling $Z^t$}\label{sec:set_up_LN_cts}
$Z^{t+1}_m$ may be modeled by two kinds of distributions widely used for compositional data: the Dirichlet distribution \citep{bacon2011short} and the logistic-normal distribution \citep{aitchison1982statistical}. The Dirichlet model gains its popularity in Bayesian statistics, but makes the limiting assumption that the sub-compositions are independent. More specifically, for any r.v. $X\in \bbR^K \sim \mathrm{Dir}(\alpha)$, 
$$
\left(\frac{X_1}{\sum_{i=1}^{k}X_i}, \dots, \frac{X_{k}}{\sum_{i=1}^{k}X_i}\right)\quad \text{and}\quad \left(\frac{X_{k+1}}{\sum_{i=k+1}^{K}X_i}, \dots, \frac{X_{K}}{\sum_{i=k+1}^{K}X_i}\right)$$ are independent for any $1\leq k\leq K-1$. Another difficulty associated with the Dirichlet modelling is the non-convexity of the negative log-likelihood objective which presents challenges both in terms of run-time and from a statistical perspective. 

Hence we employ the logistic-normal distribution which (i) has log-concave density function and thus provides fast run-time and more tractable theoretical analysis; (ii) incorporates the potential dependence among sub-compositions in different categories by introducing dependent Gaussian noise in the log-ratio \citep{atchison1980logistic,blei2006correlated}. 
The logistic-normal distribution is also related to the Gumbel-Softmax distribution \citep{jang2016categorical}, which has gained popularity in approximating a categorical distribution using a continuous one. The difference is that the logistic-normal distribution assumes the noise to be Gaussian and is thus more amenable to statistical analysis,  whereas the Gumbel-Softmax
employs the Gumbel distribution.

Specifically, for any $t\geq 0, 1\leq m\leq M$, given  $\{X^{t'}\}_{t'=0}^t$,
\begin{equation}\label{eq:model_cts}
\begin{split}
&\log \frac{Z^{t+1}_{mk}}{Z^{t+1}_{mK}}=\mu^{t+1}_{mk}+\epsilon^{t+1}_{mk}, \quad 1\leq k\leq K-1,\\
&\{\epsilon^{t+1}_m\}_{t,m}\widesim{i.i.d.} \mathcal{N}(0,\Sigma), \; \Sigma \in \bbR^{(K-1)\times (K-1)},
    \end{split}
\end{equation}
where $\mu_{mk}^{t+1}$ is a function of the $\{X^{t'}\}_{t'=0}^t$ that we specify below.
Here the $K$th category is used as a baseline category, so that we could transform $Z^{t+1}_m\in \triangle^{K-1}$ to log-ratios $\log \frac{Z^{t+1}_{mk}}{Z^{t+1}_{mK}}, k=1,\dots,K-1$, which take values on the entire $\bbR^{K-1}$ and can be modeled by a multivariate normal distribution.

$\mu^{t+1}_{mk}$ is the relative intensity of \{node $m$, category $k$\} compared to \{node $m$, category $K$\} at time $t+1$, given the past. $\epsilon^{t+1}_{m}\in \mathbb{R}^{(K-1)}$ is a Gaussian noise vector with covariance $\Sigma\in \mathbb{R}^{(K-1)\times (K-1)}$. 
To model $\mu^{t+1}_{mk}$, let $A^{\LN}\in \mathbb{R}^{M \times (K-1) \times M \times K}$ 
encode the network, where $A^{\LN}_{mkm'k'}$ is the relative influence exerted upon \{node $m$, category $k$\} relative to \{node $m$, category $K$\} by \{node $m'$, category $k'$\}. We can also think of this network as a collection of $K\times (K-1)$ relative sub-networks among the $M$ nodes, parameterized by $\{A^{\LN}_{:,k,:,k'}\in \bbR^{M\times M},1\leq k\leq K-1,1\leq k'\leq K\}$. Let $\nu^{\LN}\in \mathbb{R}^{M\times (K-1)}$ be the corresponding intercept term, where $\nu^{\LN}_{mk}$ is the intensity of \{node $m$, category $k$\} compared to \{node $m$, category $K$\}. Then we define 
$$
\mu^{t+1}_{mk}=\left\langle A^{\LN}_{mk},X^{t}\right\rangle+\nu^{\LN}_{mk}.
$$

Using a different baseline category does not change our model form, but only reparameterizes the model parameters. Specifically, if we take a different category, say $l$, as the baseline and want to model the distribution of $\log \frac{Z^{t+1}_{mk}}{Z^{t+1}_{ml}}$, then the model \eqref{eq:model_cts} can be equivalently written as: 
\begin{equation*}
\begin{split}
&\log \frac{Z^{t+1}_{mk}}{Z^{t+1}_{ml}}=\left\langle \widetilde{A}^{\LN}_{mk},X^t\right\rangle+\widetilde{\nu}^{\LN}_{mk}+\widetilde{\epsilon}^{t+1}_{mk}, \quad 1\leq k\leq K, k\neq l,\\
&\{\widetilde{\epsilon}^{t+1}_m\}_{t,m}\widesim{i.i.d.} \mathcal{N}(0,\widetilde{\Sigma}), \; \widetilde{\Sigma} \in \bbR^{(K-1)\times (K-1)},
    \end{split}
\end{equation*}
where $\widetilde{A}^{\LN}_{mk}=A^{\LN}_{mk}-A^{\LN}_{ml}, k\notin \{l,K\}$, 
$\widetilde{A}^{\LN}_{mK}=-A^{\LN}_{ml}$; $\widetilde{\nu}^{\LN}_{mk}=\nu^{\LN}_{mk}-\nu^{\LN}_{ml}, k\notin \{l,K\}$, $\widetilde{\nu}^{\LN}_{mK}=-\nu^{\LN}_{ml}$; $\widetilde{\epsilon}^{t+1}_m$ is transformed from $\epsilon^{t+1}_m$ through a linear full rank transformation, thus $\widetilde{\Sigma}$ is still of full rank (function of $\Sigma$). The interpretation of $\widetilde{A}^{\LN}_{mk}$ is the relative influence exerted upon \{node $m$, category $k$\} relative to \{node $m$, category $l$\}. Therefore, our model is invariant to the choice of the baseline category; this choice only affects the interpretation of parameters. It is up to the practitioner to choose the baseline depending on what they want to learn. In particular, if we choose a baseline category where the influence upon other categories is weak, then the relative influences upon other categories compared to the baseline are close to the absolute influences upon them. In Section \ref{sec:real_data} we will discuss our choice of the baseline category for each real data example.

\subsubsection{Modeling $q^t$}
We now discuss models for the event probability $q^{t+1}$ in \eqref{eq:model_disc} and study the following two cases:
(a) $q^{t+1}$ is a constant vector across $t$ which can be specified by $q\in\mathbb{R}^M$ and (b) $q^{t+1}$ depends on the past $X^t$. 

\paragraph{Constant $q^t=q$:}
This model is reasonable if we consider event rates that are constant over time 
or  multi-node compositional time series. For example, users on social media may have constant activity levels or compositional data (\eg labor/expenditure statistics) for each node (\eg state/country) are released on a regular schedule. The latter case can be thought of as a special case with $q=1$ 
\footnote{In this case, all $X^t_m$ are non-zero and constrained in the $(K-1)$-dimensional simplex $\triangle^{K-1}$, 
so for identifiability we have to take $X^{t}_{:,1:(K-1)}\in \mathbb{R}^{M\times (K-1)}$ 
instead of $X^t\in \mathbb{R}^{M\times K}$ as the covariate for predicting $X^{t+1}$, 
and thus assume $A^{\LN}\in \mathbb{R}^{M\times (K-1)\times M\times (K-1)}$. 
The problem would not be too different and the theoretical result still hold true with slight modification.}. 
   
In the case of constant $q^t$, we only estimate $A^{\LN}\in \mathbb{R}^{M\times (K-1)\times M\times K}$, and assume $\nu^{\LN}$ to be known for ease of exposition, while $q$ and the covariance matrix $\Sigma$ are unknown nuisance parameters.
 We define the estimator as the minimizer of a penalized squared error loss:
    \begin{equation}\label{eq:est_cq}
\widehat{A}^{\LN}=\argmin_{A\in \bbR^{M\times (K-1)\times M\times K}} L^{\LN}(A)+\lambda \|A\|_{R},
\end{equation}
 where 
 \begin{equation}\label{eq:gsm_loss}
 \begin{split}
     L^{\LN}(A)=&\frac{1}{2T}\sum_{t,m}\ind{X^{t+1}_m\neq 0}\|Y^{t+1}_m-\mu^{t+1}_m(A_m)\|_2^2,\\
      Y^t_{mk}=&\begin{cases}
\log(X^t_{mk}/X^t_{mK}),&X^t_m\neq 0\\
0,&X^t_m= 0
\end{cases}, \qquad 1\leq k\leq K-1\\
\mu^t_{m}(A)=&\langle A_{m},X^{t-1}\rangle+\nu^{\LN}_{m}\in \bbR^{K-1}.
 \end{split}
\end{equation}

Note that if $\Sigma=I_{K-1}$, the squared loss is exactly the negative log-likelihood loss, while for a general $\Sigma$, this loss is still applicable without knowing $\Sigma$. One may note that $q$ does not appear in the objective function. This is due to the fact that the log-likelihood can be written as summation of a function of $A$ and a function of $q$, and thus we could directly minimize an objective function that does not depend on $q$.

\paragraph{$q^t$ depends on past events:}
We model $q^{t+1}$ using the logistic link: for $1\leq m\leq M$, 
\begin{equation}\label{eq:q_A_dis}
    q_m^{t+1}=\frac{\exp\{\langle B^{\bern}_m,X^t\rangle+\eta^{\bern}_m\}}{1+\exp\{\langle B^{\bern}_m,X^t\rangle+\eta^{\bern}_m\}},
\end{equation} 
where $B^{\bern}\in \mathbb{R}^{M\times M\times K}$, and $B^{\bern}_{mm'k'}$ is the overall influence exerted on node $m$ by \{node $m'$, category $k'$\}, while $\eta^{\bern}\in \mathbb{R}^{M}$ is the offset parameter. 


If we set $B^{\bern}=0$ this reduces to the constant $q^t=q$ case with $q_m=\left(1+\exp\{-\nu^{\LN}_{m}\}\right)^{-1}$. In general, our goal is to jointly estimate $A^{\LN}$ and $B^{\bern}$, while $\nu^{\LN}$ and $\eta^{\bern}$ are assumed known for ease of exposition, and the covariance matrix $\Sigma$ is regarded as an unknown nuisance parameter. 

%


The loss function $L^{\LN}(A)$ defined in \eqref{eq:gsm_loss} can still be used to estimate $A^{\LN}$; while for $B^{\bern}$, we can define $L^{\bern}(B)$ as the log-likelihood loss of the Bernoulli distributed $\ind{X^t_m\neq 0}$:
\begin{equation}\label{eq:Ber_loss}
\begin{split}
    &L^{\bern}(B)=\frac{1}{T}\sum_{t,m} f(\langle B_m, X^t\rangle+\eta_m^{\bern}) -(\langle B_m, X^t\rangle+\eta_m^{\bern})\ind{X^{t+1}_m\neq 0},
\end{split}
\end{equation}
where $f:\bbR\rightarrow \bbR$ is defined by $f(x)=\log \left(e^{x}+1\right)$.
To exploit the sparsity structure shared by $A^{\LN}$ and $B^{\bern}$, we pool the two loss functions together and add a group sparsity penalty on $A^{\LN}$ and $B^{\bern}$. To account for various noise levels $\Sigma$, we put different weights on the two losses, and intuitively the weight on $L^{\LN}(A)$ should be smaller if $\Sigma$ is large. Formally, 
\begin{equation}\label{eq:est_q_past}
\begin{split}
    (\widehat{A}^{\LN},\widehat{B}^{\bern}&)=\argmin_{\substack{A\in \mathbb{R}^{M\times (K-1)\times M\times K}\\B\in \mathbb{R}^{M\times M\times K}}} \alpha L^{\LN}(A) +(1-\alpha)L^{\bern}(B) +\lambda R_{\alpha}(A,B).
\end{split}
\end{equation}
The penalty term $R_{\alpha}(A,B)$ is defined as $$R_{\alpha}(A,B)=\sum_{m,m'}\left(\alpha\|A_{m,:,m',:}\|_{F}^2+(1-\alpha)\|B_{m,m',:}\|_2^2\right)^{\frac{1}{2}}.$$ If we let $\alpha=0.5$, this type of estimator has been widely seen in the literature of multi-task learning \citep{zhang2017survey,obozinski2006multi,lounici2009taking}. When $\alpha=0$ or $1$, we are estimating $A^{\LN}$ or $B^{\bern}$ only and the penalty is $\lambda \|A\|_R$ or $\lambda \|B\|_R$, respectively.

\subsection{Interpreting the Relative and Absolute Network Parameters}\label{sec:network_interpretation}

So far we have defined an absolute network parameter $A^{\MN}$ for the multinomial model and a relative network parameter $A^{\LN}$ for the logistic-normal model. In this section we discuss how to interpret and connect these parameters. As we have mentioned previously, $A^{\MN}\in \bbR^{M\times K\times M\times K}$ is an {\em absolute} network parameter where each entry measures the {\em absolute} influence for each node, category pair; $A^{\LN}\in\bbR^{M\times (K-1)\times M\times K}$ is a {\em relative} network parameter whose entries measure the {\em relative} influence on each node category pair {\em relative} to the same node and a ``baseline'' category (encoded as category $K$) chosen by the practitioner. 

Note that there exists a simple transformation from the absolute network $A^{\mathrm{abs}}$ to the relative network $A^{\mathrm{rel}}$ as follows:
\begin{equation}\label{eq:network_transform}
    A^{\mathrm{rel}}_{mk}=A^{\mathrm{abs}}_{mk}-A^{\mathrm{abs}}_{mK},\quad 1\leq m\leq M, 1\leq k\leq K-1.
\end{equation}
However, the absolute network can not be determined from the relative network due to that we don't know the absolute influence upon the baseline category. 
For comparison purposes, one could contrast the estimated relative network transformed for the multinomial approach
using Eq.~\eqref{eq:network_transform} and estimated relative network for the logistic-normal approach. In the simulations and real data sections, we present three network estimates, the estimated absolute and relative networks for the multinomial model and the estimated relative network for the logistic-normal model .

\subsection{Connection to Prior Work}

In this section, we discuss connections between our model and existing approaches in the literature.

\paragraph{Connection to Point Process Literature:}
Our work is most closely related to \cite{hall2016inference}, which discusses a discrete-time modeling approach for point process data. More specifically, they investigate high-dimensional generalized linear autoregressive process:
\begin{equation}
    X^{t+1}|X^t\sim P(\nu+A^{*}X^t),
\end{equation}
where $\{X_t\in \bbR^M\}_{t=0}^T$ is the observed time series data, $\nu\in \bbR^M$ is a known offset parameter, and $A^*\in \bbR^{M\times M}$ is the network parameter of interest. \cite{hall2016inference} specify $P$ to be the product measure of independent Poisson or Bernoulli distributions. Specifically, for a Bernoulli autoregressive process the model is:
\begin{equation}\label{eq:model_BAR}
    \bbP(X^{t+1}|X^t)=\prod_{m=1}^M\frac{\exp\{(\nu_m+A_m^{*\top} X^t)X^{t+1}_m\}}{1+\exp\{\nu_m+A_m^{*\top} X^t\}}.
\end{equation}
This model ignores the context/categorical information of the events, which is what our methods aim to capture.

When there is only one exact category for each event, the multinomial model \eqref{eq:mult} can estimate the category-dependent network as a natural extension from Bernoulli autoregressive process. However, when the event presents imprecise mixed membership in multiple categories, there is no established model that can be directly applied or naturally extended for this type of data. Our logistic-normal approach \eqref{eq:model_disc}, \eqref{eq:model_cts} combines ideas from compositional time series and autoregressive process framework.

As illustrated by \cite{mark2018network}, the multivariate Hawkes process \citep{hawkes1971spectra,daley2003introduction,yang2017online} can be discretized and represented as a Poisson generlized linear ARMA model. \cite{mark2018network} consider analysis that involves a moving average term, while since the focus of this paper is mainly network influence, we only consider the autoregressive model without a moving average term.

\paragraph{Connection to Compositional Time Series:}

Compositional time series arise from the study of labor statistics \citep{brunsdon1998time}, expenditure shares \citep{mills2010forecasting} and industrial production \citep{kynvclova2015modeling}. In a classical setup, one would observe a time series 
$\{X^t\}_{t=0}^{T}$ where 
$X^t\in \bbR^K$ 
lies on a simplex 
$\triangle^{K-1}$, representing the composition of a quantity of interest (i.e. proportion belonging to each category). Directly modeling
compositional time series data is difficult because the observations are all constrained on the simplex. This challenge can be avoided by modeling the data after transforming the data via taking the log of ratios between each category and some baseline category as discussed earlier. 
In classical compositional time series analysis, we might use an ARMA model to describe the transformed data.

Our {\em logistic-normal} model is closely connected to the compositional time series models, but deviates from this classical setting in two ways.
On the one hand, even when we consider the special case where event probability $q=1$, we have a multi-variate compositional time series (one for each node in our network), and so our model reflects not only an ARMA model for each node independently, but also the autoregressive model of {\em interactions} between them. A more significant difference is that we consider the scenario where there is {\em no event} during a time period $t$ for node $m$ meaning $X^t_m= 0_{K}$ instead of lying on the simplex. This presents a significant methodological challenge as discussed earlier and we cannot simply apply the log ratio transformations to all $X^t_m$. Hence we introduce a latent variable $Z^t_m$ lying on the simplex to address this issue: we only apply the log-ratio transformation on $Z^t_m$ when modeling the conditional distribution of $Z^{t}_m$ given $X^{t-1}$, and with probability $q^t_m$ we observe $X^t_m=Z^t_m$, otherwise $X^t_m=0_K$. 

\section{Theoretical Guarantees}\label{sec:theory}
In this section we derive the estimation error bounds for the three estimators defined in Section \ref{sec:set_up_mult}, Section \ref{sec:setup_mixed}, under each corresponding set-up. We first introduce sparsity and boundedness notions that will appear in the theoretical results. 
In particular, for the multinomial model \eqref{eq:mult}, we define the following notions:
\begin{itemize}
    \item[(i)]{\bf Group sparsity parameters}: For $1\leq m\leq M$, let $S^{\MN}_m:=\{m':\|A^{\MN}_{m,:,m',:}\|_{F}>0\}$ be the set of nodes that have influence on node $m$ in any category, sparsity $\rho^{\MN}_m:=|S^{\MN}_m|$, and $\rho^{\MN}:=\max_{1\leq m\leq M}\rho^{\MN}_m$. Further let $s^{\MN}:= \sum_{m=1}^M{\rho^{\MN}_m}$. 
    \item[(ii)]{\bf Boundedness parameters}: Let $R^{\MN}_{\max}:=\|A^{\MN}\|_{\infty,\infty,1,\infty}=\max_{m,k}\sum_{m'}\max_{k'}|A^{\MN}_{mkm'k'}|$. 
\end{itemize}    
For the logistic-normal model with constant event probability (\eqref{eq:model_disc}, \eqref{eq:model_cts} with $q^t=q$), we can define $S^{\LN}_m$, $\rho^{\LN}_m$, $\rho^{\LN}$, $s^{\LN}$ and $R_{\max}^{\LN}$ similarly from above, except that we substitute $A^{\MN}$ by $A^{\LN}$. While for the logistic-normal model with event probability depending on the past (\eqref{eq:model_disc}, \eqref{eq:model_cts}, \eqref{eq:model_BAR}), we assume shared sparsity in $A^{\LN}$ and $B^{\bern}$ among nodes, and both of them need to be bounded. Thus under this model, we define $S^{\LN,\bern}_m$, $\rho^{\LN,\bern}_m$, $\rho^{\LN,\bern}$, $s^{\LN,\bern}$ and $R_{\max}^{\LN,\bern}$ similarly from above, except that we substitute $A^{\MN}$ by the concatenated tensor $(A^{\LN},B^{\bern})\in \bbR^{M\times K\times M\times K}$ (concatenated in the second dimension).
\subsection{Multinomial Model}\label{sec:thm_mult}

\begin{thm}\label{thm:mult}
Consider the generation process \eqref{eq:mult} and estimator \eqref{eq:est_mult}. If $\lambda=CK\sqrt{\frac{\log M}{T}}$, $T\geq C_1(\rho^{\MN})^2\log M$, then with probability at least $1-C\exp\{-c\log M\}$, 
\begin{equation*}
\left\|\widehat{A}^{\MN}-A^{\MN}\right\|^2_{F}\leq C_2\frac{s^{\MN}\log M}{T},\quad \left\|\widehat{A}^{\MN}-A^{\MN}\right\|_{R}\leq C_3s^{\MN}\sqrt{\frac{\log M}{T}},
\end{equation*}
where constants $c, C>0$ are universal constants, while $C_1, C_2, C_3>0$ depend only on $R^{\MN}_{\max}$, $\|\nu^{\MN}\|_{\infty}$ and $K$. 
\end{thm}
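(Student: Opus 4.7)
The result is a standard high-dimensional M-estimation bound for a group-penalized negative log-likelihood, but applied to a non-i.i.d.\ autoregressive multinomial process. I would follow the classical three-step recipe: (a) a \emph{basic inequality} coming from the optimality of $\widehat{A}^{\MN}$, (b) a high-probability bound on the score $\nabla L^{\MN}(A^{\MN})$ in the dual norm, which justifies the choice $\lambda \asymp K\sqrt{\log M / T}$, and (c) a restricted strong convexity (RSC) lower bound on the loss around $A^{\MN}$. A useful structural observation is that both $L^{\MN}$ and the penalty $\|\cdot\|_R$ decouple across the index $m$ (row of $A$), so the problem reduces to $M$ separate group-lasso-regularized multinomial-logistic regressions with groups indexed by $m'$ and blocks of size $K\times K$; the global bound is then obtained by summing the per-row bounds.

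\textbf{Score bound.} For each fixed $m$, the gradient block $[\nabla L^{\MN}(A^{\MN})]_{m,:,m',:}$ is a $K\times K$ matrix whose $(k,k')$-entry equals
\[
\frac{1}{T}\sum_{t=0}^{T-1}\bigl(p^{t+1}_{mk}-X^{t+1}_{mk}\bigr)\,X^{t}_{m'k'},
\]
where $p^{t+1}_{mk}$ is the true multinomial probability implied by \eqref{eq:mult}. Since $X^{t+1}_m - p^{t+1}_m$ is a bounded martingale difference with respect to the natural filtration and $|X^t_{m'k'}|\le 1$, each entry is a sum of bounded martingale differences. Azuma--Hoeffding plus a union bound over the $K^2$ entries of a block yields $\|[\nabla L^{\MN}(A^{\MN})]_{m,:,m',:}\|_F \lesssim K\sqrt{\log M / T}$ uniformly in $(m,m')$ with probability $\ge 1 - M^{-c}$. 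This is exactly what is needed so that $\lambda/2 \ge \|\nabla L^{\MN}(A^{\MN})\|_{R^{*}}$, where $\|\cdot\|_{R^{*}}=\max_{m,m'}\|\cdot_{m,:,m',:}\|_F$ is the dual of $\|\cdot\|_R$. The basic inequality then gives the usual cone condition
\[
\sum_{m'\notin S^{\MN}_m}\|\Delta_{m,:,m',:}\|_F \;\le\; 3\sum_{m'\in S^{\MN}_m}\|\Delta_{m,:,m',:}\|_F,
\]
with $\Delta = \widehat{A}^{\MN}-A^{\MN}$, which in particular yields $\|\Delta_m\|_R \le 4\sqrt{\rho^{\MN}_m}\,\|\Delta_m\|_F$.

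\textbf{Restricted strong convexity.} This is the main technical obstacle. The per-row Hessian is
\[
\nabla^{2}_{A_m} L^{\MN}(A_m) \;=\; \frac{1}{T}\sum_{t=0}^{T-1} H\bigl(\mu^{t+1}_m\bigr)\otimes X^{t}(X^{t})^{\top},
\]
where $H(\mu)$ is the softmax Jacobian (after appending the baseline $0$). Because the intensities are uniformly bounded on the segment between $A^{\MN}$ and $\widehat{A}^{\MN}$ by a constant $c(R^{\MN}_{\max},\|\nu^{\MN}\|_\infty)$, the smallest eigenvalue of $H(\mu)$ is bounded below by a constant $c_1>0$, reducing RSC for $L^{\MN}$ to an RSC statement for the empirical Gram tensor $\widehat{\Gamma}_T := \frac{1}{T}\sum_t X^t (X^t)^\top$ (viewed as an $MK\times MK$ matrix). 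The plan is: (i) show that the chain $\{X^t\}$ admits a stationary distribution whose Gram matrix $\Gamma := \mathbb{E}_\pi[X^t(X^t)^\top]$ satisfies $\lambda_{\min}(\Gamma)\ge c_2>0$, leveraging the fact that the multinomial probabilities are bounded away from $0$ and $1$ under the boundedness of $R^{\MN}_{\max}$ and $\|\nu^{\MN}\|_\infty$; and (ii) use a martingale-based concentration (in the spirit of Hall--Willett--Raskutti and Mark--Raskutti--Willett, rather than strong mixing) to show that, restricted to the cone, $\widehat{\Gamma}_T$ is within a factor $1/2$ of $\Gamma$ once $T \gtrsim (\rho^{\MN})^2 \log M$. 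The $(\rho^{\MN})^2$ appears because a discretization/covering argument over the restricted set of directions supported on at most $\rho^{\MN}$ blocks incurs a $\rho^{\MN}$ factor in the deviation, which must be squared to yield the required Gram concentration.

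\textbf{Finishing up and expected difficulties.} Combining RSC with constant $\kappa = c_1 c_2 /2$ and the score bound gives, per row, $\kappa\|\Delta_m\|_F^2 \le 2\lambda\|\Delta_m\|_R \le 8\lambda\sqrt{\rho^{\MN}_m}\|\Delta_m\|_F$, hence $\|\Delta_m\|_F^2 \lesssim \rho^{\MN}_m\lambda^2/\kappa^2$ and $\|\Delta_m\|_R \lesssim \rho^{\MN}_m\lambda/\kappa$. Summing over $m$ using $s^{\MN}=\sum_m \rho^{\MN}_m$ and substituting $\lambda \asymp K\sqrt{\log M/T}$ yields exactly the claimed bounds, with the $K$-dependence absorbed into $C_2,C_3$. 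The step I expect to require the most care is the RSC argument, for two intertwined reasons: the empirical Gram matrix must be controlled via martingale deviations rather than i.i.d.\ concentration, and the lower bound on $\lambda_{\min}(\Gamma)$ under the stationary distribution of the context-dependent chain needs a careful argument that the categorical probabilities cannot concentrate on a lower-dimensional face of the simplex — something that follows from the presence of the ``no-event'' outcome and the uniform boundedness of $\mu^{t+1}_{mk}$, but must be quantified to pin down the dependence of $C_1$ on $R^{\MN}_{\max}$, $\|\nu^{\MN}\|_\infty$, and $K$.
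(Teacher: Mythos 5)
Your overall architecture matches the paper's: the loss and penalty decouple over $m$, the score is controlled entrywise by Azuma--Hoeffding to justify $\lambda\asymp K\sqrt{\log M/T}$, the cone condition follows from the basic inequality, the softmax Hessian eigenvalue is bounded below using boundedness of the intensities, and the $(\rho^{\MN})^2\log M$ sample size arises from controlling the empirical Gram matrix on the cone. Two places where your plan diverges from (and is weaker than) the paper's proof deserve attention. First, your RSC step aims to show $\widehat{\Gamma}_T\approx\Gamma:=\mathbb{E}_\pi[X^t(X^t)^\top]$ for a stationary distribution $\pi$. A pure martingale argument only concentrates $\widehat{\Gamma}_T$ around its predictable compensator $\frac{1}{T}\sum_t\mathbb{E}[X^t(X^t)^\top\mid\mathcal{F}_{t-1}]$, not around the stationary mean; closing the gap to $\Gamma$ would reintroduce exactly the mixing/ergodicity considerations you say you want to avoid. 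The paper sidesteps this entirely: it lower-bounds the compensator \emph{pathwise} by writing $\mathbb{E}[X^tX^{t\top}\mid\mathcal{F}_{t-1}]\succeq \mathrm{Cov}(X^t\mid\mathcal{F}_{t-1})$, noting the conditional covariance is block-diagonal across nodes, and showing each block satisfies $u^\top\mathrm{Cov}(X^t_m\mid\mathcal{F}_{t-1})u\ge p^t_{m,K+1}\min_k p^t_{mk}\|u\|_2^2$ with all probabilities bounded away from zero because $\|A^{\MN}\|_{\infty,\infty,1,\infty}\le R^{\MN}_{\max}$. No stationary distribution is ever needed, and the deviation term is handled by the elementary bound $|U_k^\top P U_k|\le\|U_k\|_1^2\|P\|_\infty\lesssim K^2\rho^{\MN}_m\|U\|_F^2\|P\|_\infty$ on the cone rather than by a covering argument. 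Second, you assert that the intensities are uniformly bounded ``on the segment between $A^{\MN}$ and $\widehat{A}^{\MN}$,'' but this presupposes that $\widehat{A}^{\MN}$ is already known to lie in a bounded neighborhood of $A^{\MN}$, which is circular. The paper resolves this with a localization lemma (if a convex function $F$ satisfies $F(0)=0$ and $F>0$ on the sphere $\|\Delta\|_R=C$, then $F(\Delta)\le 0$ forces $\|\Delta\|_R<C$), so that it suffices to prove the bound under the a priori restriction $\|\Delta_m\|_R\le C$. Both issues are repairable, but as written your RSC step would not go through without substituting the compensator-based argument.
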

\noindent The proof can be found in Section \ref{sec:proof_mult}.

This type of estimation error bound is widely seen in the high-dimensional statistics literature (see \eg~\cite{bickel2009simultaneous,zhang2017survey}). 
As in \citep{hall2016inference} and \citep{mark2018network}, a martingale concentration inequality is applied to adapt to the time series setting, and the major difference in this proof from past work includes lower bounds on the strong convexity parameter for our multinomial loss function, and the eigenvalues of covariance matrices of multinomial random vectors. 


\subsection{Logistic-normal Model with Constant $q^t=q$}\label{sec:thm_cq}
\begin{thm}\label{thm:cq}
Consider the generation process \eqref{eq:model_disc}, \eqref{eq:model_cts} with $q^t=q$, and estimator \eqref{eq:est_cq}. If $T\geq C_1\frac{(\rho^{\LN})^2\log M}{\min_{m} q_{m}^2}$, $\lambda=CK\max_{k}\Sigma_{kk}\sqrt{\frac{\max_{m}T_{m}\log M}{T^2}}$, where $T_{m}=\sum_{t=1}^{T}\ind{X^{t}_{m}\neq 0}$, 
then with probability at least $1-C\exp\{-c\log M\}$,
\begin{equation}
    \begin{split}
        \|\widehat{A}^{\LN}-A^{\LN}\|_{F}^2 \leq C_2\frac{\max_{m}q_{m}}{\min_m q_m^2}\frac{s^{\LN}\log M}{T},\\
         \|\widehat{A}^{\LN} -A^{\LN}\|_{R} \leq C_3s^{\LN}\sqrt{\frac{\max_{m}q_{m}}{\min_m q_m^2}\frac{\log M}{T}},
    \end{split}
\end{equation}
Here $c, C>0$ are universal constants, while constants $C_1, C_2, C_3>0$ depend only on $R^{\LN}_{\max}$, $\|\nu^{\LN}\|_{\infty}$, $\|\Sigma\|_{\infty}$, $\lambda_{\min}(\Sigma)$ and $K$.
\end{thm}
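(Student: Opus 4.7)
The proof will follow the standard M-estimator template (basic inequality $+$ gradient concentration $+$ restricted strong convexity), adapted to time-series data via martingale concentration. Setting $\widehat\Delta := \widehat A^{\LN} - A^{\LN}$, optimality of $\widehat A^{\LN}$ and convexity of $L^{\LN}$ yield
\begin{equation*}
L^{\LN}(\widehat A^{\LN}) - L^{\LN}(A^{\LN}) \;\le\; \lambda\bigl(\|A^{\LN}\|_R - \|\widehat A^{\LN}\|_R\bigr).
\end{equation*}
Provided $\lambda \ge 2 \|\nabla L^{\LN}(A^{\LN})\|_{R^*}$, where $\|\cdot\|_{R^*}$ denotes the dual norm $\max_{m,m'}\|\cdot_{m,:,m',:}\|_F$, a standard decomposition argument confines $\widehat\Delta$ to the cone $\{\Delta : \|\Delta_{S^c}\|_R \le 3 \|\Delta_S\|_R\}$, where $S=\{(m,m') : m' \in S^{\LN}_m\}$ indexes the row-groups on the true support.

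\textbf{Gradient bound.} At the truth, whenever $X^{t+1}_m\neq 0$ the residual $Y^{t+1}_m - \mu^{t+1}_m(A^{\LN}_m)$ equals the Gaussian noise $\epsilon^{t+1}_m \sim \mathcal{N}(0,\Sigma)$, so
\begin{equation*}
[\nabla L^{\LN}(A^{\LN})]_{mkm'k'} \;=\; -\frac{1}{T}\sum_{t=0}^{T-1}\ind{X^{t+1}_m\neq 0}\,\epsilon^{t+1}_{mk}\,X^t_{m'k'},
\end{equation*}
which is a martingale difference sum with respect to the natural filtration, since both $\epsilon^{t+1}_m$ and the event indicator $\ind{X^{t+1}_m\neq 0}$ are conditionally independent of the past (and of each other). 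The per-term conditional variance is at most $q_m \Sigma_{kk} (X^t_{m'k'})^2\le q_m\Sigma_{kk}$, and the quadratic variation along the sum is bounded by $q_m \Sigma_{kk} T_{m'}/T^2$. A sub-Gaussian martingale deviation inequality (in the spirit of \cite{hall2016inference, mark2018network}) combined with a union bound over $m,m',k,k'$ delivers
\begin{equation*}
\max_{m,m'}\|[\nabla L^{\LN}(A^{\LN})]_{m,:,m',:}\|_F \;\lesssim\; K\max_k\Sigma_{kk}\sqrt{\tfrac{\max_m T_m \log M}{T^2}}
\end{equation*}
with probability at least $1-C\exp(-c\log M)$, justifying the stated choice of $\lambda$.

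\textbf{Restricted strong convexity.} The more demanding ingredient is to establish
\begin{equation*}
\bigl\langle \widehat\Delta,\, \nabla^2 L^{\LN}\, \widehat\Delta\bigr\rangle \;=\; \frac{1}{T}\sum_{t,m}\ind{X^{t+1}_m\neq 0}\sum_{k=1}^{K-1}\bigl(\langle\widehat\Delta_{mk,:,:},X^t\rangle\bigr)^2 \;\ge\; \kappa\,\|\widehat\Delta\|_F^2
\end{equation*}
on the restricted cone, with $\kappa\gtrsim \min_m q_m$. In expectation this quantity equals $\sum_m q_m\sum_k \bbE\langle\widehat\Delta_{mk,:,:},X^t\rangle^2$, so the main population-level task is to lower bound $\lambda_{\min}$ of the stationary covariance of $\mathrm{vec}(X^t)$ restricted to the cone. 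I plan to do this by conditioning on $X^{t-1}$: the conditional law of $X^t_m$ is a mixture of a point mass at $0$ (mass $1-q_m$) with a logistic-normal law (mass $q_m$) whose Gaussian innovation has full-rank covariance $\Sigma$, yielding a uniform lower bound of order $\min_m q_m\cdot\lambda_{\min}(\Sigma)$ after transferring from the log-ratio coordinates back to $X^t$ via a bounded Jacobian. The sample-to-population step then uses a martingale Bernstein inequality on the covariance increments, under the growth condition $T\gtrsim (\rho^{\LN})^2\log M/\min_m q_m^2$ and the trivial bound $X^t\in[0,1]^{M\times K}$ to control the Bernstein variance proxy; restricting to sparse directions of size $\rho^{\LN}$ per node keeps the union bound tractable.

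\textbf{Putting it together.} Combining the two ingredients in the basic inequality and using that $\widehat\Delta$ lies in the cone yields the quadratic inequality
\begin{equation*}
\kappa\,\|\widehat\Delta\|_F^2 \;\le\; \tfrac{3}{2}\lambda\sqrt{s^{\LN}}\,\|\widehat\Delta\|_F,
\end{equation*}
from which both stated error bounds follow after solving for $\|\widehat\Delta\|_F$ and using $\|\widehat\Delta\|_R \le 4\sqrt{s^{\LN}}\,\|\widehat\Delta\|_F$. I expect the restricted-eigenvalue step to be the main obstacle: the mixture structure of $X^t$, together with the need for martingale concentration at the stated rate without independence across $t$, is where most of the technical effort concentrates, and it is also where the $\max_m q_m/\min_m q_m^2$ factor in the final bound arises (from $\max_m q_m$ in the gradient variance and from $\min_m q_m$ squared in the restricted-eigenvalue constant).
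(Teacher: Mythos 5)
Your proposal matches the paper's proof in all essentials: a per-group basic inequality and cone argument, a martingale (Azuma/MGF) deviation bound on $\nabla L^{\LN}(A^{\LN})$ exploiting the Gaussian residuals and the event counts $T_m$ to justify $\lambda$, a restricted eigenvalue condition with constant of order $q_m$ obtained by lower-bounding $\lambda_{\min}\bigl(\Cov(X^t\mid\cF_{t-1})\bigr)$ via non-degeneracy of the zero-inflated logistic-normal conditional law plus a martingale concentration step for the empirical covariance under $T\gtrsim (\rho^{\LN})^2\log M/q_m^2$, and finally the bound $|\mathcal{T}_m|\lesssim q_mT$ to convert $\max_m T_m$ into $\max_m q_m T$. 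The only cosmetic differences are that the paper argues node-by-node (so the RE constant is $q_m$ rather than $\min_m q_m$) and justifies the uniform positive lower bound on the conditional covariance by continuity of eigenvalues in the bounded model parameters rather than by your Jacobian-transfer heuristic, which as stated would need the inverse of the softmax map to be controlled.
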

\noindent The proof is provided in Section \ref{sec:proof_cq}.

The error bounds in Theorem \ref{thm:cq} have an extra factor depending on $q$. 
If $q_m=q_0$ for $1\le m \le M$ and some $0<q_0<1$, then this factor becomes $\frac{1}{q_0}$. If $q_m$'s differ too much from each other, a better choice is to use specific $\lambda_m=C\sqrt{\frac{|\mathcal{T}_{m}|\log M}{T^2}}$ for the estimation of each $A^{\LN}_m$, which would lead to a term $\frac{1}{q_m}$ instead of $\frac{\max_{m'}q_{m'}}{q_m^2}$ in the error bounds. This extra factor can be understood as follows: under the multinomial model \eqref{eq:mult}, the number of samples for estimating $A^{\MN}_m$ is $T$, while in this section, the expected number of samples is $q_mT$ for estimating $A^{\LN}_m$. 

The estimation error rates for the other two models do not depend on $\bbP(X^{t}_m\neq 0)$ ($q^t_m$ under this model) in this way, since no event at time $t$ also reveals useful information for estimating their network parameters: $P(X^t_m\neq 0|X^{t-1})$ depends on network parameters under the other two models, but is a constant under this model.

\subsection{Logistic-normal Model with $q^t$ Depending on the Past}\label{sec:thm_q_past}
\begin{thm}\label{thm:q_A}
Consider the generation process \eqref{eq:model_disc}, \eqref{eq:model_cts}, \eqref{eq:q_A_dis} and estimator \eqref{eq:est_q_past} for some $0\leq \alpha<1$.\footnote{Although Theorem \ref{thm:q_A} is only stated for $0\leq\alpha<1$, our proof also leads to the same estimation error bound if $\alpha=1$, for $T\geq C_1(\rho^{\LN,\bern})^2\log M$ instead of $T\geq \frac{C_1}{1-\alpha}(\rho^{\LN,\bern})^2\log M$.} If $T\geq \frac{C_1}{1-\alpha}(\rho^{\LN,\bern})^2\log M$, $\lambda=C_2(\alpha)K\sqrt{\frac{\log M}{T}}$
then with probability at least $1-C\exp\{-c\log M\}$,
\begin{equation*}
\begin{split}
    \alpha\|\widehat{A}^{\LN}-A^{\LN}\|_{F}^2+(1-\alpha)\|\widehat{B}^{\bern}-B^{\bern}\|_{F}^2 \leq& C_3 C_2(\alpha)\frac{s^{\LN,\bern}\log M}{T},\\
    R_{\alpha}(\widehat{A}^{\LN}-A^{\LN},\widehat{B}^{\bern}-B^{\bern})\leq &C_4 C_2(\alpha)s^{\LN,\bern}\sqrt{\frac{\log M}{T}},
\end{split}
\end{equation*}
where $c>0$ is a universal constant, $C_1, C_3, C_4>0$ depend only on $K$, $R^{\LN,\bern}_{\max}$, $\|\Sigma\|_{\infty}$, $\lambda_{\min}(\Sigma)$, $\|\nu^{\LN}\|_{\infty}$, $\|\eta^{\bern}\|_{\infty}$, and $C_2(\alpha)=\left[C_5\max_k \Sigma_{kk}\alpha+C_6(1-\alpha)\right]^{\frac{1}{2}}$ for some universal constants $C_5, C_6>0$.
\end{thm}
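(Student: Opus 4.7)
The plan is to follow the same penalized M-estimation template used for Theorems~\ref{thm:mult} and \ref{thm:cq}: derive a basic inequality from optimality of $(\widehat{A}^{\LN},\widehat{B}^{\bern})$, choose $\lambda$ large enough to dominate the dual norm of the score at the true parameter so that the error lies in a group-sparsity cone, and then close the loop with a restricted strong convexity (RSC) lower bound on the weighted loss. Writing $\Delta_A=\widehat{A}^{\LN}-A^{\LN}$, $\Delta_B=\widehat{B}^{\bern}-B^{\bern}$, and $L_\alpha(A,B)=\alpha L^{\LN}(A)+(1-\alpha)L^{\bern}(B)$, the key observation is that if $(A,B)$ is viewed as an element of the product space with inner product $\alpha\langle\cdot,\cdot\rangle_A+(1-\alpha)\langle\cdot,\cdot\rangle_B$, then $R_\alpha$ is exactly the group $\ell_1/\ell_2$ norm in this geometry, and the standard cone-reduction argument gives $R_\alpha(\Delta_A,\Delta_B)\le 4\sum_{m,m'\in S}\bigl(\alpha\|\Delta_{A,m,:,m',:}\|_F^2+(1-\alpha)\|\Delta_{B,m,m',:}\|_2^2\bigr)^{1/2}$ on the event $\{\lambda\ge 2 R_\alpha^{\ast}(\alpha\nabla L^{\LN}(A^{\LN}),(1-\alpha)\nabla L^{\bern}(B^{\bern}))\}$, where $S$ is the joint support.

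Next I would bound the dual norm of the score. The $(m,m')$-block of $\nabla L^{\LN}(A^{\LN})$ is $\frac{1}{T}\sum_{t=0}^{T-1}\ind{X^{t+1}_m\neq 0}\,\epsilon^{t+1}_m\otimes X^t_{m'}$, a matrix-valued martingale difference sequence with Gaussian sub-Gaussian increments whose variance proxy is $\max_k\Sigma_{kk}$; the corresponding block of $\nabla L^{\bern}(B^{\bern})$ is $\frac{1}{T}\sum_t\bigl(q^{t+1}_m-\ind{X^{t+1}_m\neq 0}\bigr)X^t_{m'}$, a bounded martingale difference. Applying the sub-Gaussian/Bernstein martingale tail bounds already invoked in the proofs of Theorems~\ref{thm:mult} and \ref{thm:cq} to each block, combining them via the $\alpha$-weighted Euclidean norm, and union-bounding over the $M^2$ blocks yields a deviation of order $\bigl[C_5\alpha\max_k\Sigma_{kk}+C_6(1-\alpha)\bigr]^{1/2}K\sqrt{\log M/T}$, which matches the stated $\lambda=C_2(\alpha)K\sqrt{\log M/T}$.

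For RSC, the Hessian of $L_\alpha$ is block diagonal in $(A,B)$ with blocks $\alpha\nabla^2 L^{\LN}$ and $(1-\alpha)\nabla^2 L^{\bern}$. Each block has the form $\frac{1}{T}\sum_t w_m^t\,X^t(X^t)^\top$ (tensored with $I_{K-1}$ for $A$), with $w_m^t=\ind{X^{t+1}_m\neq 0}$ for the logistic-normal piece and $w_m^t=f''(\langle B_m^{\bern},X^t\rangle+\eta_m^{\bern})$ for the Bernoulli piece; both weights are bounded below by positive constants that depend on $\|\nu^{\LN}\|_\infty$, $\|\eta^{\bern}\|_\infty$, $R^{\LN,\bern}_{\max}$, and $K$. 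The population covariance $\bbE[X^t(X^t)^\top]$ has strictly positive minimum eigenvalue under the boundedness assumptions, as already verified for Theorem~\ref{thm:cq}. A martingale matrix concentration together with a Raskutti--Wainwright--Yu-style group restricted-eigenvalue argument lifts the population bound to the empirical Hessian restricted to the cone, provided $T\gtrsim \frac{1}{1-\alpha}(\rho^{\LN,\bern})^2\log M$. Combining RSC with the basic inequality and the cone constraint then gives the two stated bounds, with the extra $(1-\alpha)$ in the sample-size condition appearing because the Bernoulli RSC constant carries that factor in the combined Hessian.

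The principal obstacle is producing a joint RSC statement that reads cleanly as $\alpha\|\Delta_A\|_F^2+(1-\alpha)\|\Delta_B\|_F^2$ while the cone constraint mixes $\alpha$ and $(1-\alpha)$ under a square root. The workaround is exactly the product-space viewpoint mentioned above: once $(A,B)$ is endowed with the $\alpha$-weighted inner product, both the penalty and the quadratic lower bound on $L_\alpha$ become an ordinary group $\ell_1/\ell_2$ norm and an ordinary quadratic form, reducing the analysis to the single-loss pattern already used twice in this paper. A secondary technical nuisance is that $\ind{X^{t+1}_m\neq 0}$ appearing in $\nabla L^{\LN}$ and $\nabla^2 L^{\LN}$ depends on $B^{\bern}$, but only through $q^{t+1}_m$, which is bounded away from $0$ and $1$ by the boundedness of $B^{\bern}$ and $\eta^{\bern}$; conditioning on the past and using the tower property handle this. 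When $\alpha=1$ the Bernoulli terms disappear from both the score and the Hessian, which is why the footnote's sample-size condition $T\ge C_1(\rho^{\LN,\bern})^2\log M$ drops the $1/(1-\alpha)$ factor.
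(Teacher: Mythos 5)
Your proposal is correct and follows essentially the same route as the paper's proof: the $\sqrt{\alpha}/\sqrt{1-\alpha}$ product-space concatenation, the blockwise martingale deviation bounds combined into $C_2(\alpha)$, the cone reduction, and the joint restricted-eigenvalue / strong-convexity step are exactly the paper's Lemmas \ref{lem:db_GSM_q_A}, \ref{lem:db_BAR}, \ref{lem::rsc_BAR} and \ref{lem:REC_q_A}. The one imprecision is the claim that the logistic-normal Hessian weight $\ind{X^{t+1}_m\neq 0}$ is ``bounded below by a positive constant'' --- it is not pointwise, only its conditional expectation is --- but you already note that conditioning on the past and the tower property repair this, which is precisely how the paper's restricted-eigenvalue lemma proceeds.
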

\noindent The proof can be found in Section \ref{sec:proof_q_A}.

When $0<\alpha<1$, the estimation errors for $A^{\LN}$ and $B^{\bern}$ are implied directly, although they may be loose in their dependence on $\alpha$. It's difficult to determine an optimal $\alpha$ for estimation based on the theoretical result. Intuitively we need $\alpha$ to be away from 0 and 1 so that we boost the estimation performance by pooling the two estimation tasks together.
We will demonstrate the interplay between $\alpha$ and the noise level $\Sigma$ in terms of estimation errors in the numerical results in Section \ref{sec:simulation_q_A}.

\section{Synthetic Data Simulation}\label{sec:numeric_exp}
In this section, we validate our approach in two ways:
First we use synthetic data generated according to the three aforementioned models to validate our theoretical results on the rates of estimation error; and then test our method(s) on data generated from a synthetic mixture model, which is a hybrid of the multinomial model \eqref{eq:mult} and the logistic-normal model with $q$ depending on the past (\eqref{eq:model_disc}, \eqref{eq:model_cts} and \eqref{eq:q_A_dis}). The latter aims to compare our methods and provide guidelines for practitioners on which approach is more suitable. For all numerical experiments, we use the standard proximal gradient descent algorithm with a group sparsity penalty \citep{wright2009sparse} to solve the optimization problem, after a reparameterization\footnote{To solve \eqref{eq:est_mult} and \eqref{eq:est_cq}, we reparameterize $\{A_m\}_{m=1}^M$ to vectors, with group size $K^2$ and $K(K-1)$ respectively. To solve \eqref{eq:est_q_past}, we reparameterize $\{(\sqrt{\alpha} A_m,\sqrt{1-\alpha}B_m)\}_{m=1}^M$ to vectors in $\bbR^{MK^2}$ with group size $K^2$. A vectorial soft-threshold method can then be used for the projection step.}.

\subsection{Estimation Error Rates}\label{sec:simulation_3models}
For each of the three generation processes defined in Section \ref{sec:model_est}, we investigate the performance of the corresponding estimators \eqref{eq:est_mult},\eqref{eq:est_cq} and \eqref{eq:est_q_past}. For all the figures in this section, the mean of 50 trials are shown and error bars are three times the standard error of the mean. 
\subsubsection{Multinomial}\label{sec:simulation_mult}
The synthetic data is generated according to \eqref{eq:mult} (initial data $\{X^0_m\}_{m=1}^M$ are i.i.d. multinomial random vectors) and $A^{\MN}$ is estimated by \eqref{eq:est_mult}. 
Under all settings, for each $m$, the $\rho_m^{\MN}=\frac{s^{\MN}}{M}$ non-zero slices $A^{\MN}_{m,:,m',:}$ are sampled uniformly from $1\leq m'\leq M$. We set $K=2$, and given that $A^{\MN}_{m,:,m',:}$ is non-zero, each of its $K^2$ entries is sampled independently from $U(-2,2)$. To ensure the same baseline event rate under the three generation processes, which is set as $0.8$, we let $\nu^{\MN}=(\log \frac{4}{K})_{M\times K}$. 
Across the experiment we use penalty parameter $\lambda=0.12\times K\sqrt{\frac{\log M}{T}}$ where $0.12$ arises from cross-validation.\footnote{Since the time series data is not exchangeable, we make a modification to the $k$-fold cross-validation. For each candidate $\lambda$, the algorithm is run on 5 subsets of the data $\{X^t\}_{t=0}^T$, each including 80\% of consecutive data points: $\{X^{t_i},\dots,X^{{t_i}+0.8T}\}, i=1,\dots, 5$, where $t_i=0.05*T*(i-1)$. The estimators learned by each subset are tested on the rest 20\% of the data, and we choose the $\lambda$ that results in lowest average log-likelihood loss 
.} The scaling of mean squared error $\|\widehat{A}^{\MN}-A^{\MN}\|_F^2$ with respect to sparsity $s^{\MN}$, dimension $M$ and sample size $T$ are shown in Figure \ref{fig:MSE_mult}. 
\begin{figure}[ht]
\centering
\includegraphics[width=0.4\linewidth]{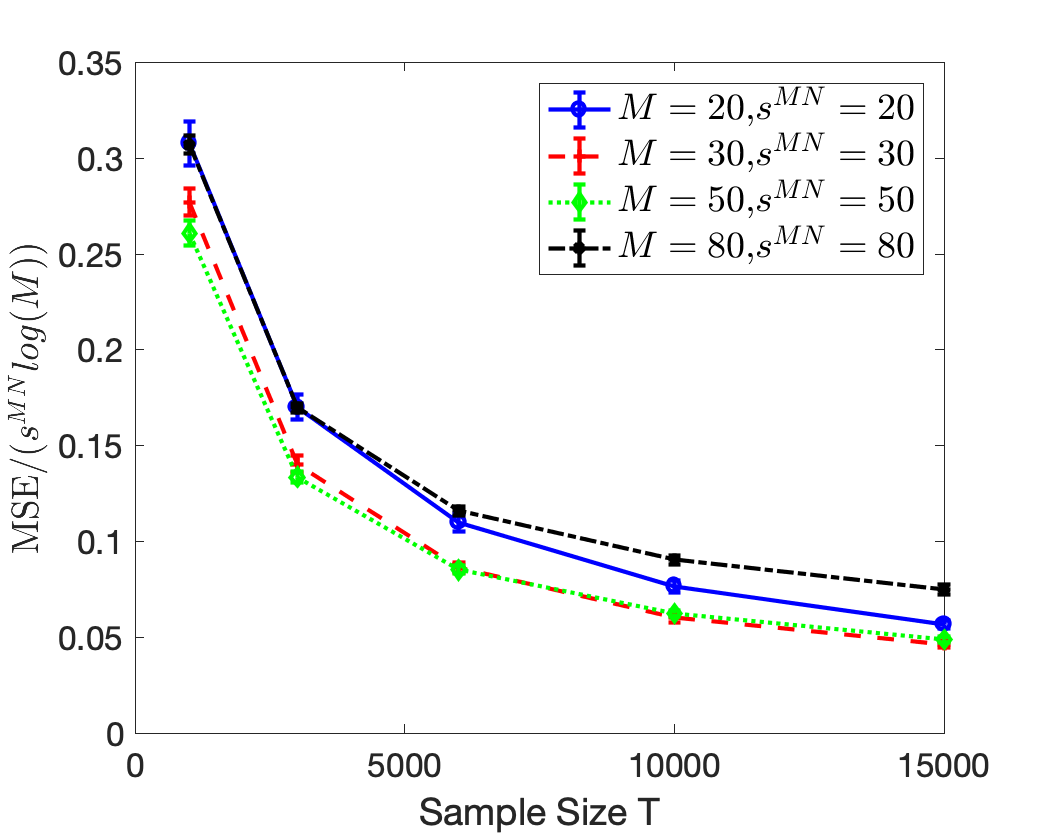}
\includegraphics[width=0.4\linewidth]{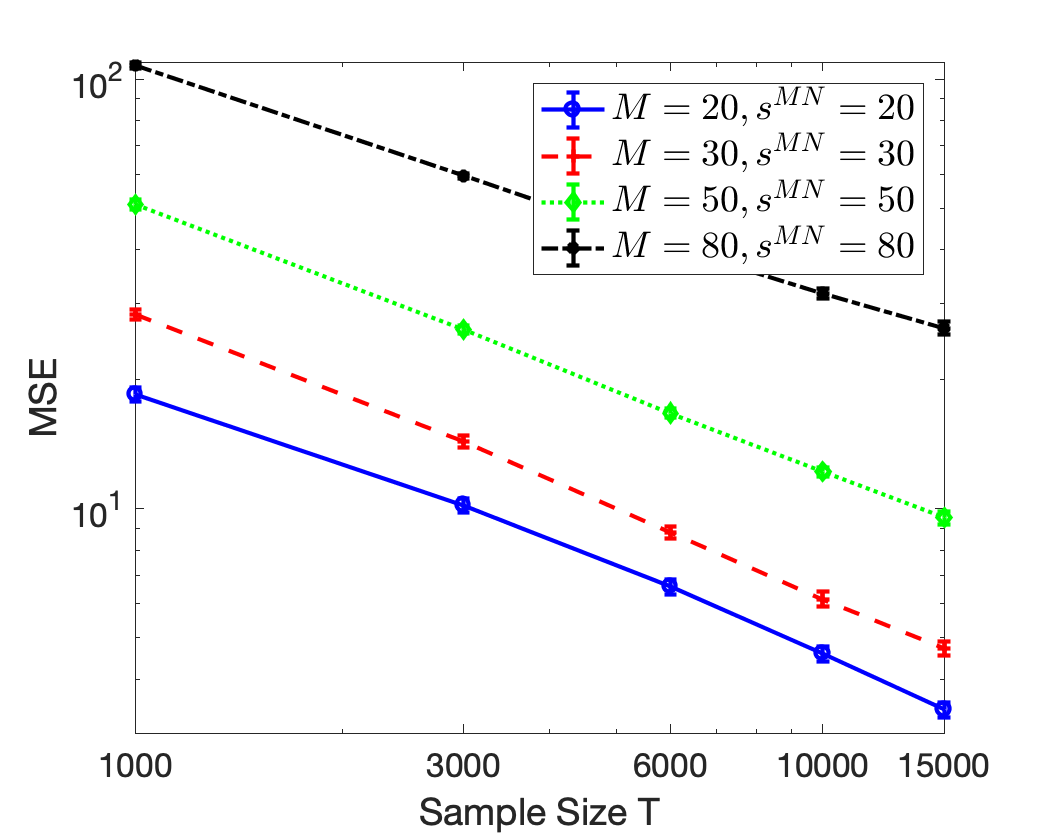}
\caption{\small MSE/$(s^{\MN}\log M)$, MSE vs. $T$ under the multinomial data generation process, and estimator \eqref{eq:est_mult}, where the second plot has a log-scale. The scaling of $\|\widehat{A}^{\MN}-A^{\MN}\|_F^2$ with respect to $s^{\MN}\log M$ is similar from the theoretical bound. Its scaling w.r.t. $T$ is a little larger than $\frac{1}{T}$, since the multinomial log-likelihood loss has a low curvature under our set-up of $A$.}
\label{fig:MSE_mult}
\end{figure}
\subsubsection{Mixed Membership with $q^t=q$}\label{sec:simulation_cq}
Here the data is generated under \eqref{eq:model_disc} (initial data $\{X^0_m\}_{m=1}^M$ are i.i.d.\ multinomial random vectors) and \eqref{eq:model_cts} with constant vector $q=(0.8)^{M\times 1}$, and the estimator is as specified in \eqref{eq:est_cq}. We set $K=2$, the covariance $\Sigma=I_{(K-1)\times (K-1)}$ and intercept $\nu^{\MM}=0^{M\times (K-1)}$. 
$A^{\LN}\in\bbR^{M\times (K-1)\times M\times K}$ is generated in the same way as in Section \ref{sec:simulation_mult} except that dimension is different. The penalty parameter $\lambda$ is set as $0.13\times K\sqrt{\frac{\log M}{T}}$ where $0.13$ arises from cross-validation. The scaling of mean squared error $\|\widehat{A}^{\LN}-A^{\LN}\|_F^2$ with respect to sparsity $s^{LN}$, dimension $M$ and sample size $T$ are shown in Figure \ref{fig:MSE_cq}. 
\begin{figure}[ht]
\centering
\includegraphics[width=0.4\linewidth]{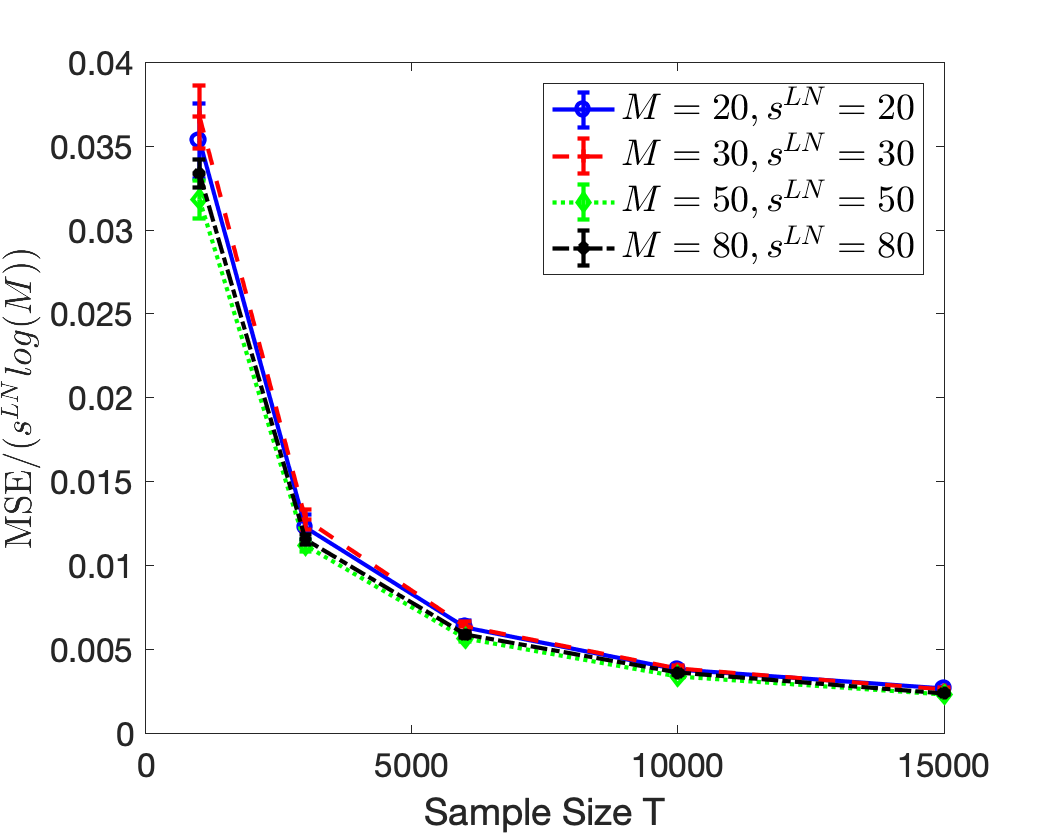}
\includegraphics[width=0.4\linewidth]{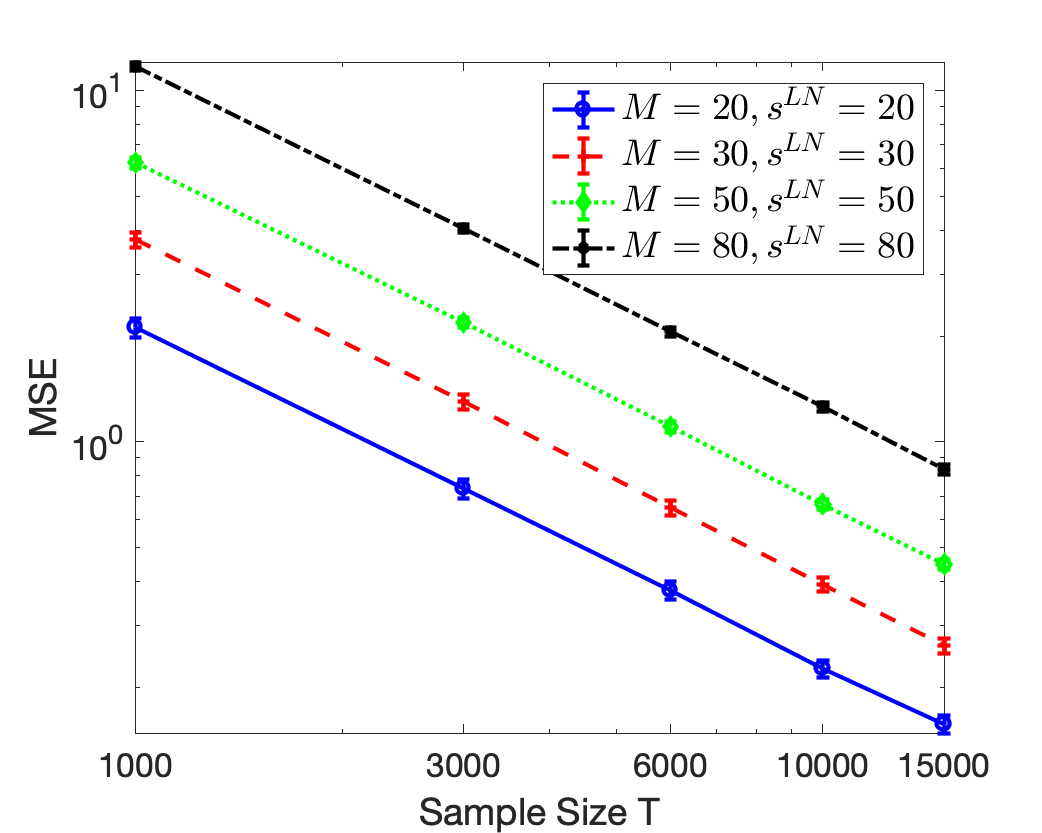}
\caption{\small MSE/$(s^{\LN}\log M)$, MSE vs. $T$ under the logistic-normal data generation process with constant $q^t$, and estimator \eqref{eq:est_cq}, where the second figure is under log-scale. The scaling of MSE aligns well with Theorem \ref{thm:cq} in $s^{\LN}, M$ and $T$.}
\label{fig:MSE_cq}
\end{figure}

\subsubsection{Mixed Membership with $q^t$ Depending on the Past}\label{sec:simulation_q_A}
We generate data according to \eqref{eq:model_disc}, \eqref{eq:model_cts} and \eqref{eq:q_A_dis} (initial data $\{X^0_m\}_{m=1}^M$ are i.i.d. multinomial random vectors), and estimate $A^{\LN}$ and $B^{\bern}$ using \eqref{eq:est_q_past}. For each $1\leq m\leq M$, we sample the support set $S_m$ uniformly from $1\leq m'\leq M$. 
Given that $A^{\LN}_{m,:,m',:}$ or $B^{\bern}_{m,m',:}$ is non-zero, each entry is sampled independently from $U(-2,2)$. We set $K=2$, the covariance $\Sigma=I_{(K-1)\times (K-1)}$, intercept $\nu^{\LN}=(0)^{M\times (K-1)}$, and $\eta^{\bern}=(\log 4)^{M\times 1}$ to ensure a base probability of $0.8$. 
The penalty parameter $\lambda=0.08\times K\sqrt{\frac{\log M}{T}}$ where $0.08$ arises from cross-validation and $\alpha=0.4$. We present the scaling of mean squared errors $\|\widehat{A}^{\LN}-A^{\LN}\|_F^2$ and $\|\widehat{B}^{\bern}-B^{\bern}\|_F^2$ in Figure \ref{fig:MSE_q_past_A} and Figure \ref{fig:MSE_q_past_B}. 
\begin{figure}[ht]
\centering
\begin{minipage}[t]{\textwidth}
\centering
\includegraphics[width=0.4\textwidth]{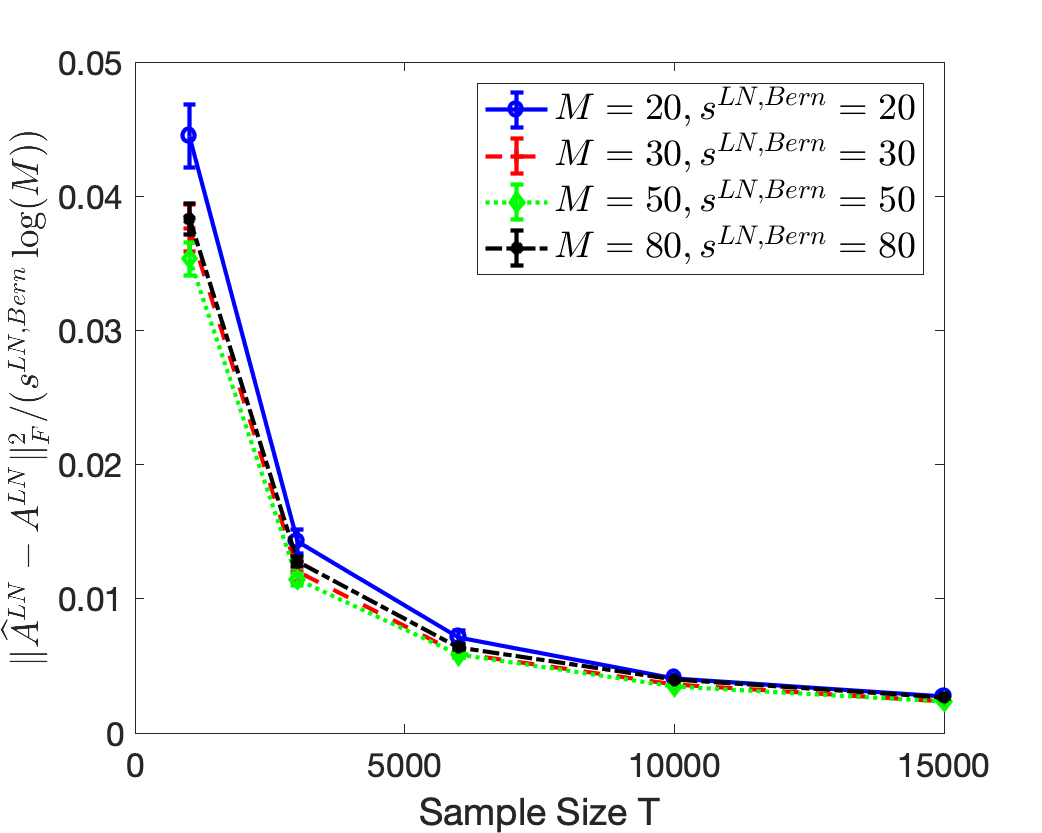}
\includegraphics[width=0.4\textwidth]{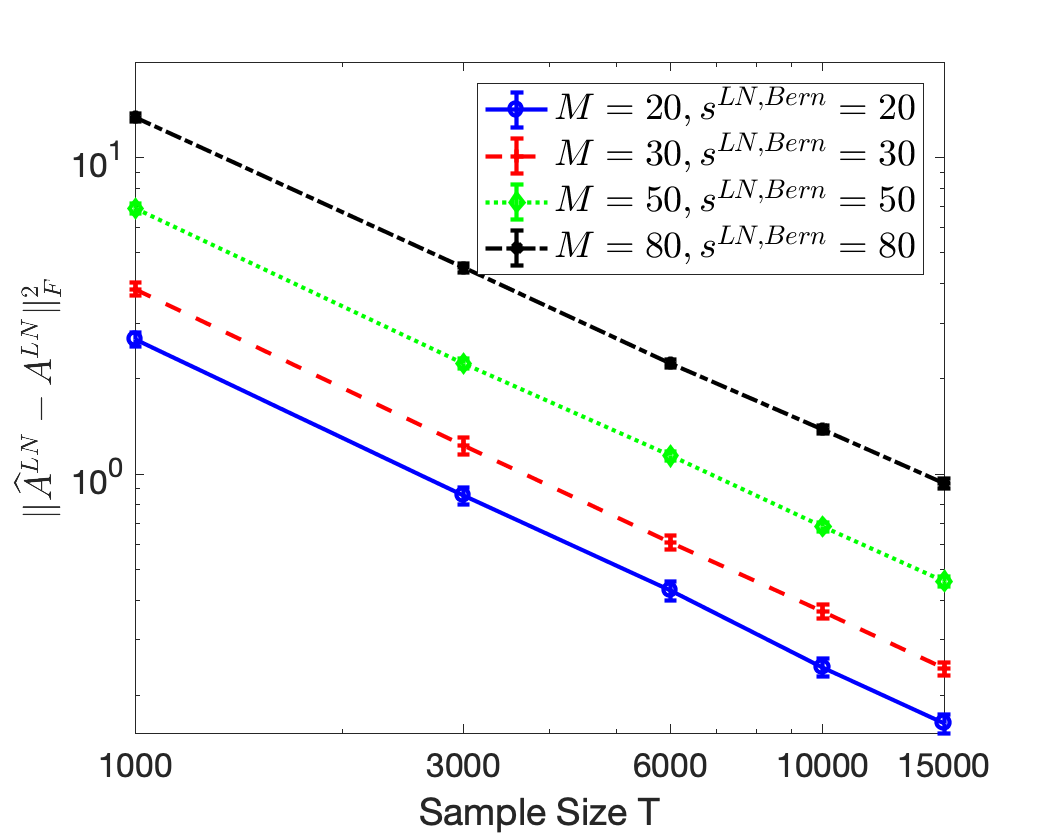}
\caption{$\frac{\|\widehat{A}^{\LN}-A^{\LN}\|_F^2}{s^{\LN, \bern}\log M},\|\widehat{A}^{\LN}-A^{\LN}\|_F^2$ vs. $T$ under the logistic-normal data generation process with $q^t$ depending on the past, and estimator \eqref{eq:est_q_past}. The second plot is under log-scale. The scaling of $\|\widehat{A}^{\LN}-A^{\LN}\|_F^2$ aligns well with Theorem \ref{thm:q_A} in $s^{\LN, \bern}, M$ and $T$.} 
\label{fig:MSE_q_past_A}
\end{minipage}
\end{figure}
\begin{figure}
    \centering
    \hfill
\begin{minipage}[t]{\textwidth}
\centering
\includegraphics[width=0.4\textwidth]{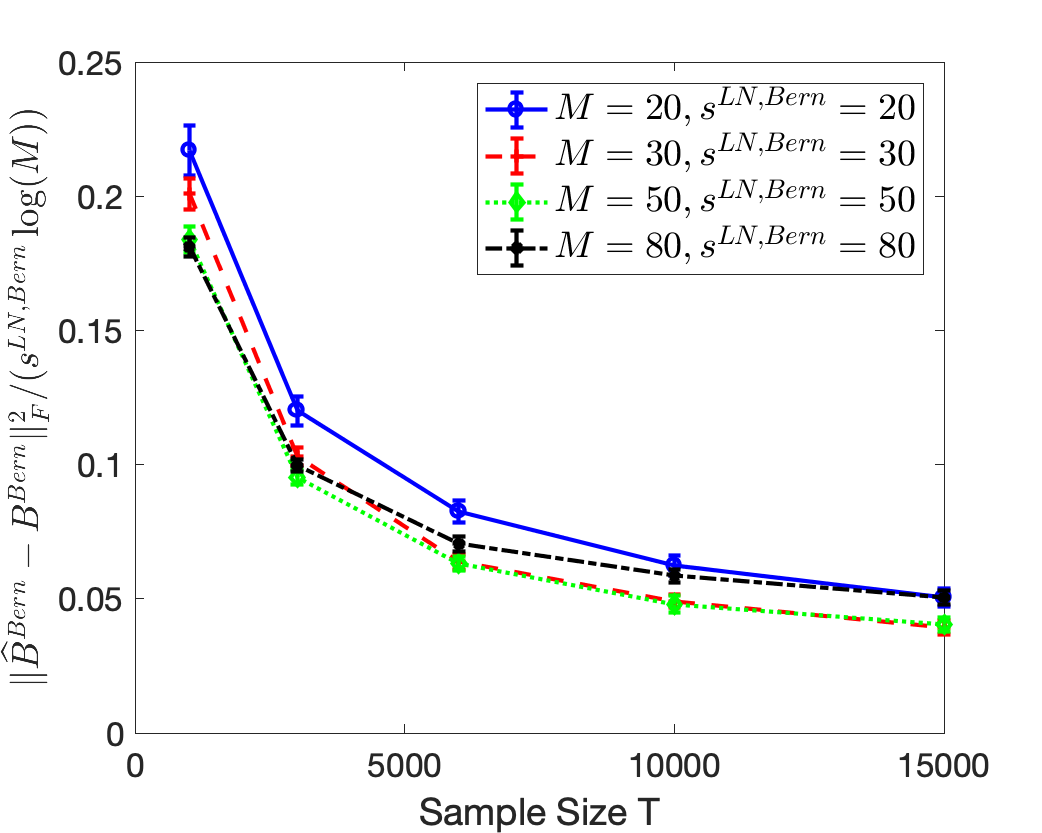}
\includegraphics[width=0.4\textwidth]{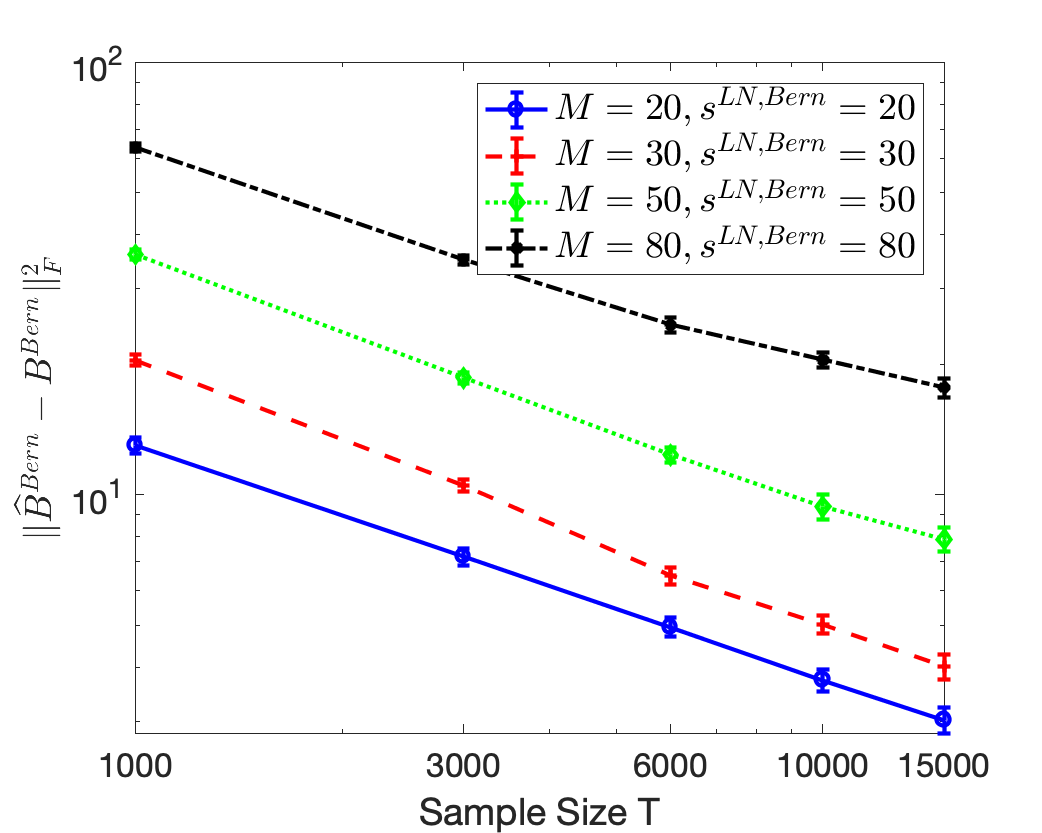}
\caption{\small $\frac{\|\widehat{B}^{\bern}-B^{\bern}\|_F^2}{s^{\LN, \bern}\log M},\|\widehat{B}^{\bern}-B^{\bern}\|_F^2$ vs. $T$ 
under the logistic-normal data generation process with $q^t$ depending on the past, and estimator \eqref{eq:est_q_past}. The scaling of $\|\widehat{B}^{\bern}-B^{\bern}\|_F^2$ w.r.t. $s^{\LN,\bern}\log M$ is similar from the theoretical bound in Theorem \ref{thm:q_A}. The second plot is under log-scale, and the scaling of $\|\widehat{B}^{\bern}-B^{\bern}\|_F^2$ w.r.t. $T$ is a little larger than $\frac{1}{T}$, since the Bernoulli log-likelihood loss has a low curvature under our set-up of $A$.}
\label{fig:MSE_q_past_B}
\end{minipage}
\end{figure}

We also check the influence of $\alpha$ on the estimation error, when the noise covariance $\Sigma$ of the logistic-normal distribution varies. We consider the setting where $M=20, s^{\LN, \bern}=20, K=2$ and $T=1000$, each non-zeros entry of $A^{\LN}, B^{\bern}$ is sampled from $U(-1,1)$. Various $\alpha$ from 0 to 1 are experimented for 20 trials, and for each trial cross-validation is used for choosing $\lambda$. We set $\Sigma=\sigma^2I_{(K-1)\times (K-1)}$ where $\sigma^2=1$ or $2$, and Figure \ref{fig:comp_alpha} shows that $\alpha$ should be smaller when $\sigma^2$ increases.
	 	 \begin{figure}[ht]
	\centering
	\begin{minipage}{\textwidth}
	\centering
		\includegraphics[width=0.4\textwidth]{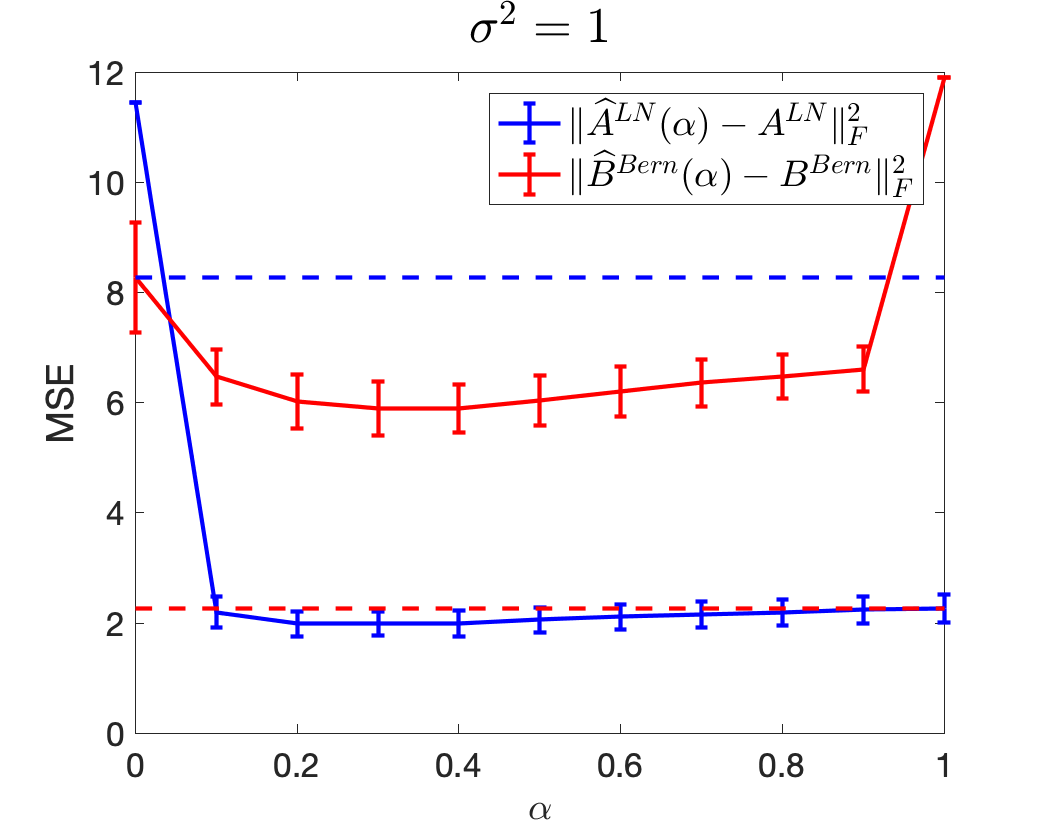}
		\includegraphics[width=0.4\textwidth]{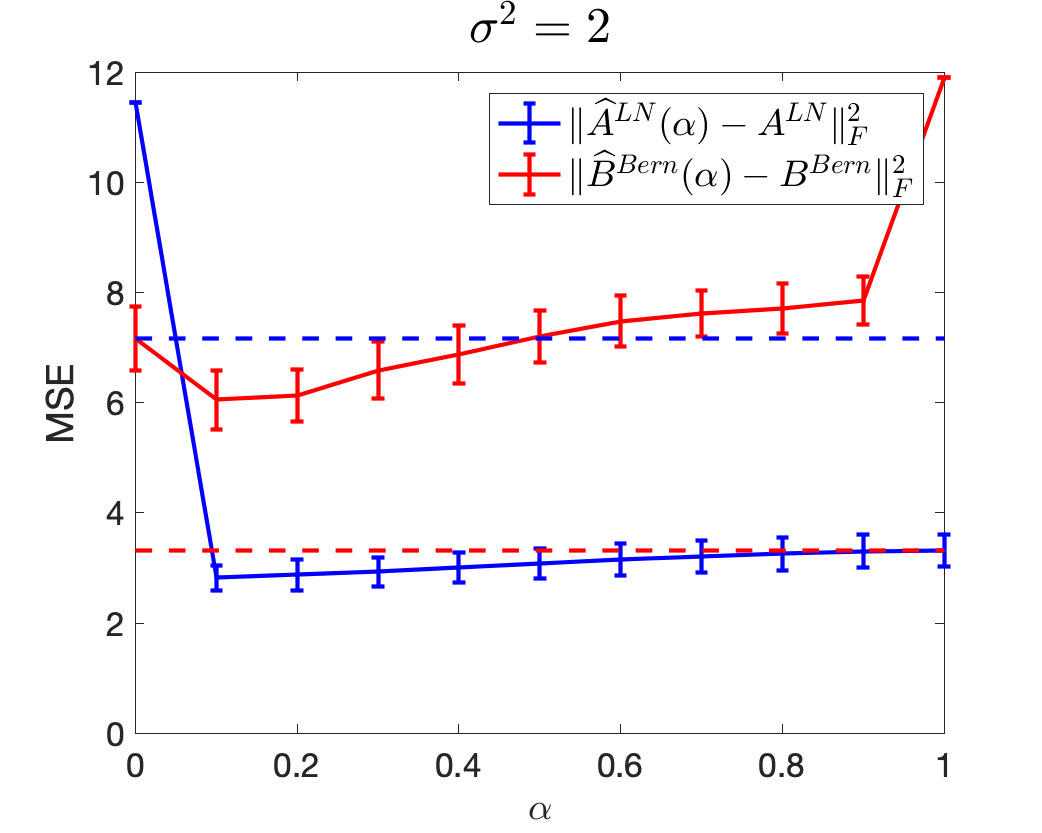}
	\caption{\small 
	The MSE of $\widehat{A}^{\LN}$ and $\widehat{B}^{\bern}$ as a function of $\alpha$. The first figure shows the results when $\sigma^2=1$, while the second one is when $\sigma^2=2$. 
	The dashed lines are $\|\widehat{A}^{\LN}(1)-A^{\LN}\|_F^2$ and $\|\widehat{B}^{\bern}(0)-B^{\bern}\|_F^2$. When $\alpha=0$ or $1$, $\widehat{A}^{\LN}$ or $\widehat{B}^{\bern}$ would stay at the initializers (set as zeros tensors); while $\|\widehat{A}^{\LN}(1)-A^{\LN}\|_F^2$, $\|\widehat{B}^{\bern}(0)-B^{\bern}\|_F^2$ would be the estimation error of separate estimations. When $\alpha$ moves from the extremes (0 or 1) to the middle, the estimation error of both are lower. When variance $\sigma^2=1$, choosing $\alpha$ around 0.4 would make $\|\widehat{A}^{\LN}(\alpha)-A^{\LN}\|_F^2$ and $\|\widehat{B}^{\bern}(\alpha)-B^{\bern}\|_F^2$ both lower than separate estimation. When $\sigma^2=2$, the figure suggests choosing smaller $\alpha$. }
	\label{fig:comp_alpha}
	\end{minipage}
\end{figure}

\subsection{Synthetic Mixture Model}\label{sec:simulation_toymodel}
The simulation study in Section \ref{sec:simulation_3models} shows that the three methods all perform well when data is generated from their corresponding generation processes. However, in reality and as we will see with our real data examples, data is unlikely to match a true model. In particular, one might expect that: (i) some nodes' events have {\em mixed memberships in different categories}, (ii) while other nodes in the network only focus on one particular category of events and thus each of their events {\em falls in one category}. This is inspired by a news media example where some media sources cover multiple topics and others focus on primarily one topic. 
We will discuss more about this phenomenon in Section \ref{sec:real_data} with the Memetracker dataset. 

In this section, we simulate a network and explore the hypothesis:
\emph{The logistic-normal approach will be more effective at estimating influences among nodes whose events exhibit mixed memberships in multiple categories; while for a node more likely to have events mainly in a single one category, the multinomial approach will be more effective.} We will validate this hypothesis both through this synthetic model and using real data in Section \ref{sec:real_data}.

\subsubsection{Set-up}

In our simulation set-up, nodes are partitioned into two sets $\mathcal{M}_1$ and $\mathcal{M}_2$, imitating the media sources that cover multiple topics and media sources focusing primarily upon one topic.
\begin{enumerate}
    \item[(i)] For each node in $\mathcal{M}_1$, the total influence it receives or exerts is the same in all categories except the baseline category, and its events are equally likely to be in those categories in the absence of outside influences. 
    Future events for these nodes depend upon the past events of neighboring nodes through the {\em logistic-normal model} with event probability $q^t$ depending on the past\footnote{The logistic-normal model with constant event probability is a special case of the model where event probability depends on the past, so it suffices to consider the latter model.} as in  \eqref{eq:model_disc}, \eqref{eq:model_cts} and \eqref{eq:q_A_dis}, so that each event has mixed membership.
    \item[(ii)] Nodes in $\mathcal{M}_2$ receive and exert influence in one category only, and its events are much more likely to fall in that category than any other category in the absence of outside influences. We refer to this category as the {\em focus} category. We model the dependence of its future events on past events of neighboring nodes through the {\em multinomial model} \eqref{eq:mult}, so that each event falls in only one category. The multinomial vectors are contaminated to be logistic-normally distributed random vectors prior to observation, since in reality we usually cannot observe exact categories of events, 
    and the logistic-normal algorithm requires each event's membership be non-zeros in all categories.
\end{enumerate}  
A more detailed explanation of the data generation process is provided in Appendix \ref{sec:dgp_toymodel}.

The true network parameters used for generating data include $\{A^{\LN}_m\in \bbR^{(K-1)\times M\times K}, B^{\bern}_m\in \bbR^{M\times K}: m\in \mathcal{M}_1\}$ and $\{A^{\MN}_m: m\in \mathcal{M}_2\}$. 
As explained in Section \ref{sec:set_up_mult} and Section \ref{sec:setup_mixed}, $A^{\LN}_m$ and $B^{\bern}_m$ encode the relative influence and overall influence exerted upon node $m$ respectively, while $A^{\MN}_m$ encodes the absolute influence exerted upon node $m$. Our detailed set-up for the network parameters are deferred to Appendix \ref{sec:dgp_toymodel}. 
We present the edges encoded by $\{A_m^{\LN}: m\in \mathcal{M}_1\}$\footnote{By our construction, $B^{\bern}_{m,m',k'}=A^{\LN}_{m,k',m',k'}$ for $k'=1,\dots,K-1$ and $B^{\bern}_{m,m',K}=0$, thus the visualization for $B^{\bern}_m$ is exactly the same as $A^{\LN}_m$.} and $\{A_m^{\MN}: m\in \mathcal{M}_2\}$ in Figure \ref{fig:toy_model_true_network}. 

There are 17 nodes ($M=17$) and 5 categories of events ($K=5$) in total: ``blue", ``black", ``red", ``green", and ``yellow" events. We set no influence in the ``yellow'' category (no yellow edge), which is used as the baseline in the logistic-normal model, so that the relative influence captured by that model is close to the absolute influence, as explained in Section \ref{sec:setup_mixed}. Purple nodes (nodes 1-5) belong to $\mathcal{M}_1$, 
		while nodes 6-17 are from $\mathcal{M}_2$. The colors of nodes 6-17 illustrate which category each node focuses on.

	\begin{figure}[ht!]
		\centering                        
		\includegraphics[width=0.3\linewidth]{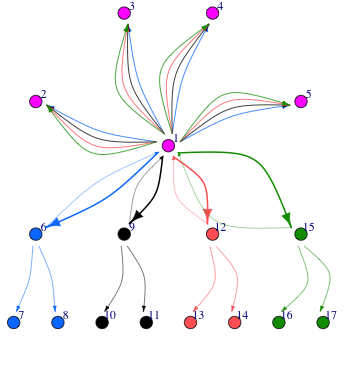}
		\caption{\small The network encoded by true parameters $\{A_m^{\LN}: m\in \mathcal{M}_1\}$ (edges pointing to nodes 1-5) and $\{A_m^{\MN}: m\in \mathcal{M}_2\}$ (edges pointing to nodes 6-17). 
		Edges pointing to nodes 1-5 are the relative influences of events in each of the first 4 categories (indicated by edge color) upon \{future events of nodes 1-5 in the same category compared to ``yellow'' category\}. Edges pointing to nodes 6-17 
		are the absolute influences of events in each category (indicated by edge color) upon future events of nodes 6-17 in the same category. Edge width is proportional to the absolute value of the corresponding influence parameter. All edges are solid, suggesting stimulatory influences. For example, edges from node 1 to node 2 suggest that the events in the first 4 categories associated with node 1 all encourage \{node 2's future events in the same category relative to the ``yellow'' category\}; 
		while the edge from node 1 to node 6 shows that ``blue" events associated with node 1 encourage ``blue" events at node 6.}\label{fig:toy_model_true_network}
	\end{figure}
	
	\subsubsection{Fitting Procedure and Estimated Networks}\label{sec:toymodel_fitting}
	After generating data $\{X^t\}_{t=0}^{T}$ according to the aforementioned procedure with $T=10000$, we obtain the estimators $\widehat{A}^{\LN}\in\bbR^{M\times (K-1)\times M\times K}$, $\widehat{B}^{\bern}\in \bbR^{M\times  M\times K}$ based on \eqref{eq:est_q_past} and $\{X^t\}_{t=0}^T$, while applying \eqref{eq:est_mult} to the rounded $\{X^t\}_{t=0}^T$ leads to the estimator $\widehat{A}^{\MN}\in \bbR^{M\times K\times M\times K}$. Here if $X^t_m=0$, its rounded version is also the zero vector; otherwise, we round $X^t_m\in \triangle^{K-1}$ to $e_k$ where $k=\arg\max_i X^t_{mi}$, and $e_k$ is the $k$th vector in the canonical basis. All tuning parameters are selected via cross-validation.\footnote{We use the same cross-validation method as that in the previous simulations, but the criterion is prediction error instead of log-likelihood loss. This is because $\alpha$ also needs to be tuned, while weighted log-likelihood loss in \eqref{eq:est_q_past} would take different forms when $\alpha$ changes. We will elaborate on the calculation of prediction errors in Section \ref{sec:real_data}.} We wish to compare the estimated networks of the two approaches with the true network so that we know which method performs better, while direct comparison is impossible for that they have different interpretations (details can be found in Section \ref{sec:network_interpretation}). 
There is a straightforward way to transform the absolute networks to relative ones, but no such transformation the other way around, as explained in Section \ref{sec:network_interpretation}. Therefore, we transform the estimated absolute network $\widehat{A}^{\MN}$ and true absolute influence $\{A^{\MN}_m: m\in \mathcal{M}_2\}$ to relative ones: $\widehat{A}_{\mathrm{rel}}^{\MN}$ and $\{A_{\mathrm{rel},m}^{\MN}, m\in \mathcal{M}_2\}$. 
We compare the estimated relative networks encoded by $\widehat{A}^{\MN}_{\mathrm{rel}}$ and by $\widehat{A}^{\LN}$ with the true relative network encoded by $\{A^{\MN}_{\mathrm{rel},m}: m\in \mathcal{M}_1\}$ and $\{A^{\LN}_m: m\in \mathcal{M}_2\}$ in Figure \ref{fig:estimated_networks_exprm2}. 
The estimated absolute network $\widehat{A}^{\MN}$ of the multinomial method is also presented to illustrate its similarity to the  transformed relative network, showing that we don't lose much information in the transformation.

	\begin{figure}[ht!]
		\centering                        
		\begin{minipage}[t]{\linewidth}
			\centering
			\subfigure[True Relative Network]{
				\includegraphics[width=0.23\linewidth]{fig/true_mixed1_overall.png}}
				\subfigure[Logistic-normal Estimate (Relative Network)]{
			\includegraphics[width=0.23\linewidth]{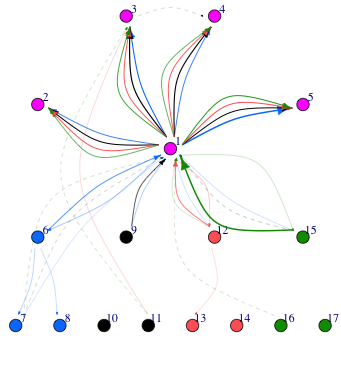}}
			\subfigure[Multinomial Estimate (Relative Network)]{
			\includegraphics[width=0.23\linewidth]{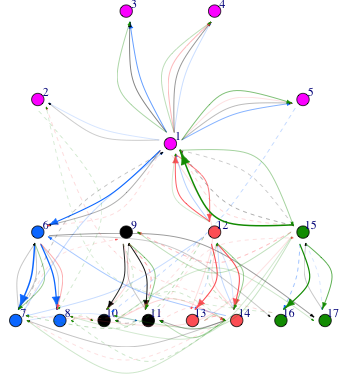}}
			\subfigure[Multinomial Estimate (Absolute Network)]{
			\includegraphics[width=0.23\linewidth]{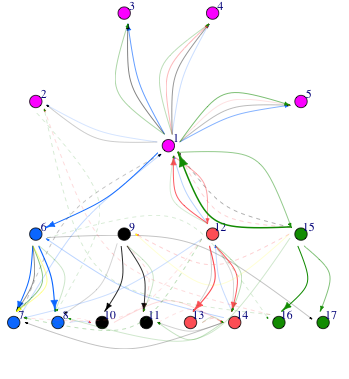}}
		\end{minipage}
		\caption{\small True relative network, and estimated networks by the multinomial and logistic-normal approaches. Solid edges are stimulatory while dashed ones are inhibitory. After normalizing the maximal absolute value of network parameters to $1$ for each network, edges whose corresponding parameters have larger absolute values than 0.1 are visualized, and edge width is proportional to that value. The absolute network estimated by the multinomial approach is the parameter this approach directly estimates, while the relative one by the multinomial approach is transformed from the absolute one. {\em We can see that the multinomial approach is more likely to underestimate the edges connecting purple nodes (nodes in $\mathcal{M}_1$), compared to the nodes 6-17 (nodes in $\mathcal{M}_2$); while the logistic-normal approach is more likely to ignore edges connecting nodes in $\mathcal{M}_2$.}}\label{fig:estimated_networks_exprm2}
	\end{figure}  
	
	We can see from the estimated networks in Figure \ref{fig:estimated_networks_exprm2} that the multinomial approach mainly picks edges correctly among nodes whose events are primarily about single categories (nodes 6-17), while the logistic-normal approach works better for nodes whose events exhibit mixed membership in multiple categories (nodes 1-5), which validates our hypothesis mentioned in the beginning of Section \ref{sec:simulation_toymodel}. This phenomenon is always true when we vary the seed for generating $\{X^t\}_{t=0}^T$, showing that it is not a result of random noise.
	
	
\section{Real Data Examples}\label{sec:real_data}
	 We validate our methodology and main hypothesis on a political tweets data set \citep{DVN/PDI7IN_2016}, and a MemeTracker data set\footnote{Data available at http://www.memetracker.org/data.html} \citep{leskovec2009meme}. These two data sets display the relative strengths and weaknesses of the multinomial and logistic-normal (event probability $q^t$ depending on the past) approaches described in Section \ref{sec:model_est} as well as advantages over existing approaches. We first elaborate on the general procedures of validating the two methods on real data sets in the following, and then discuss each example in detail (Sections \ref{sec:real_data_tweet} and \ref{sec:real_data_meme}).

	  One of the major challenges for network estimation is validation since there is no obvious ground truth. For both applications, we provide two validations: (1) prediction error performance that demonstrates the advantage of allowing influence to depend on categories; (2) a subset of directed edges are supported by external knowledge (political tweets example) or information extracted from a cascade data set (MemeTracker example) which further validates the hypothesis from the synthetic model in the previous section.
	  
	 
	 \paragraph{\bf Comparison of estimates:} Since the two approaches take different data as input (rounded data for the multinomial method and unrounded for the logistic-normal method), we also use the rounded data to measure the prediction errors for the multinomial approach and the unrounded data for the logistic-normal approach, thus they are not directly comparable. The detailed procedure for calculating prediction errors are deferred to Appendix \ref{sec:real_data_prediction}. To investigate the benefit of learning different networks for different categories, we compare the prediction errors of the two methods relative to
 (1) a context-independent network model where the influences among nodes do not depend on categories\footnote{This is equivalent to assuming a Bernoulli auto-regressive (BAR) model \citep{hall2016inference} only considering whether event occurs, and each node's events membership in categories follow the same multinomial/logistic-normal distribution over time. We use $\ell_1$ penalized MLE for estimating BAR parameter and MLE for estimating the multinomial/logistic-normal distribution parameter}
 , and (2) a constant process where the network parameters are all zeros (no influence from the past)\footnote{MLE is used for estimating the constant process parameter.}. 
	
	 
See also the note in Section~\ref{sec:network_interpretation} about comparing estimates from two different models.

\subsection{Political tweet data}\label{sec:real_data_tweet}

A central question in political science and mass communication is how politicians influence each other. Here we measure influence using the time series of their posts on Twitter.
While constructing an adjacency matrix for this network (\eg by looking at who follows whom) is a simple task, it does not reveal how the level of influences among politicians varies as a function of political tendencies of posts (\ie left-wing or right-wing). To address this challenge, we use a collection of tweets from the 2016 United States Presidential Election Tweets Data set \citep{DVN/PDI7IN_2016}, collected from Jan 1, 2016 to November 11, 2016. The collection includes $83,459$ tweets sent by 23 Twitter accounts ($M=23$): 17 presidential candidates' accounts and the House, Senate, party accounts for each party (Democrats and Republicans). We consider two categories of tweets: left-leaning and right-leaning ($K=2$), and we aim to learn the influence network among the 23 Twitter accounts that depend on the ideologies of tweets.

Due to the lack of a pre-trained NLP model for identifying political tendencies of tweets given their contents, we use the tweets from the first half of the time period (55,859 tweets from Jan 1, 2016 to June 6, 2016) to train a neural network for categorizing tweets into two political tendencies (left- and right-leaning) and apply it on the tweets from the second half of the time period. The detailed procedure for training the neural network and how we obtain the data $\{X^t\}_{t=0}^T$ is contained in Appendix \ref{sec:real_data_prep}.

Figure \ref{fig:Twitter_hist} shows the histogram of the unrounded $\{X^t_{m,2}:X^t_m\neq 0\}$, the right-leaning weights of all tweets (averaged for multiple tweets from the same user and time window). Since the sum of the left-leaning weight and right-leaning weight of any tweet equals $1$, it suffices to present only one of them.  
One important thing to note is that there are two peaks in frequency centred at 0 and 1 which suggests many clearly left-leaning tweets (0 score) or right-leaning tweets (1 score).



\begin{figure}[ht]
    \centering
    \includegraphics[width=0.4\textwidth]{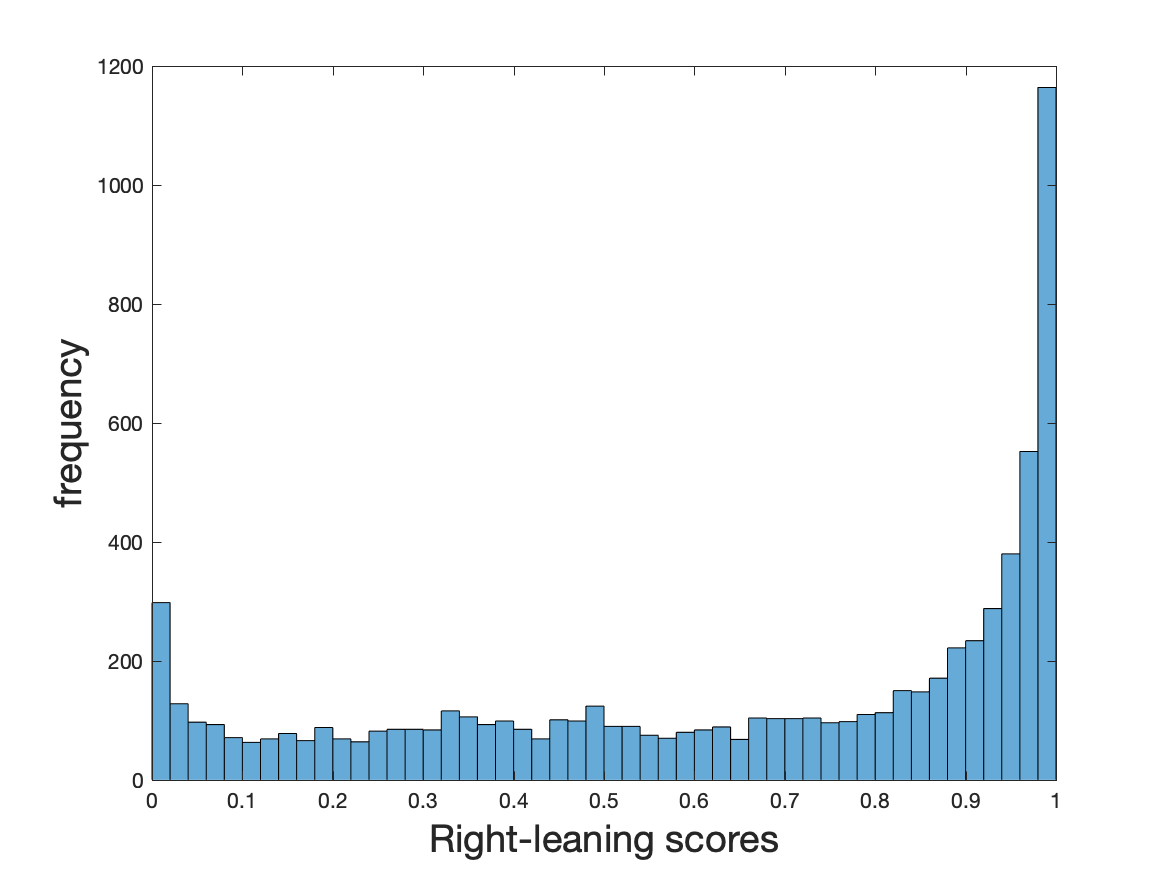}
    \caption{\small The histogram of right-leaning weights of all tweets (averaged for multiple tweets from the same user and time window) in political tweets example. The peaks in frequency at 0 and 1 suggest that the political tendencies of these tweets contain little ambiguity.}
    \label{fig:Twitter_hist}
\end{figure}

\paragraph{Prediction performance:}
We fit both models (multinomial and logistic-normal\footnote{Since there are only two categories, the baseline category for the logistic-normal model can be set arbitrarily: setting a different baseline category would only flip the sign of the relative network parameter $A^{\LN}$, while the network structure wouldn't change. We set the baseline category as ``right-leaning".}) using the first 70\% of the input data (from June 7 to September 25, 2016), and test their prediction performance on the latter 30\% (from September 26 to November 11, 2016). 
As explained before Section \ref{sec:real_data_tweet}, we calculate the prediction errors of the two fitted models and that of their corresponding fitted sub-models, which are presented in Table \ref{tab:Twitter_pred_comp_MN} and Table \ref{tab:Twitter_pred_comp_LN} respectively. 
The prediction error tables show that the multinomial approach takes advantage of the context information since our context-dependent model yields a slightly lower prediction error, but the logistic-normal approach doesn't since the context-independent approach out-performs our approach. 
\begin{table}[ht]
		\centering
		\begin{tabular}{|c|c|c|c|}
		\hline
		\multirow{2}{*}{Method}	&\multirow{2}{*}{Constant Process}&Context-independent &Multinomial\\&&Network Model&(Our Model)\\
		\hline
		Prediction Error&0.30580 &0.25520 &0.25200\\
		\hline
		\end{tabular}
	\caption{\small The prediction errors of the fitted multinomial model (full model), and that of its two sub-models: fitted constant multinomial process and context-independent network model under multinomial framework, evaluated on the hold-out data set from Sep 25, 2016 to Nov 11, 2016. We can see that the prediction error of the context-dependent network (full model) is lower than that of the context-independent one, which {\em illustrates the benefit of incorporating context information when using the multinomial method.}}\label{tab:Twitter_pred_comp_MN}
	\end{table}
\begin{table}[ht]
		\centering
		\begin{tabular}{|c|c|c|c|}
		\hline
		\multirow{2}{*}{Method}	&\multirow{2}{*}{Constant Process}&Context-independent &Logistic-normal\\& &Network Model& (Our Model)\\
		\hline
		Prediction Error&0.15800&0.14373&0.14442\\
		\hline
		\end{tabular}
	\caption{\small The prediction errors of the fitted logistic-normal model (full model), and that of its two sub-models: fitted constant logistic-normal process and context-independent network model under logistic-normal modeling framework, evaluated on the hold-out data set from Sep 25, 2016 to Nov 11, 2016. The prediction error of the fitted logistic-normal model (full model) is slightly larger than that of context-independent network model, suggesting that {\em logistic-normal approach does not capture the contextual information well}. 
	}\label{tab:Twitter_pred_comp_LN}
	\end{table}

\paragraph{Network estimates:}
After fitting the two models on the whole data set, with the same tuning parameters used in the prediction task, we present the estimated networks for both methods. Although there is no notion of ground truth, we treat the following plausible hypothesis as external knowledge: Republicans' right-leaning tweets tend to have more influence than their left-leaning tweets, encouraging other Republicans' right-leaning tweets and vice versa for Democrats and their' left-leaning tweets. As explained in Section \ref{sec:network_interpretation}, we present the absolute network estimated by the multinomial approach in Figure \ref{fig:Twitter_network_abs}, the relative networks by the multinomial and logistic-normal approaches in Figure \ref{fig:Twitter_network_rel}. 

The largest absolute entry of each of the three network parameters is normalized to one and each visualized edge width is proportional to the normalized absolute value of its corresponding parameter. For clarity, only the edges with absolute parameters larger than 0.5 are shown for each network, and blue nodes are Democrats, red nodes are Republicans. Solid edges are positive influences (stimulatory) while dashed edges are negative influences (inhibitory).\footnote{Note that for relative networks, solid edges represent positive influence on left-leaning tweets compared to right-leaning ones, while dashed edges encourage right-leaning ones.} 
As we can see from the networks in Figure \ref{fig:Twitter_network_abs} and Figure \ref{fig:Twitter_network_rel}, the edges estimated by the multinomial approach align better with our external knowledge than those estimated by the logistic-normal approach. 

The network estimates together with the prediction performance suggest that the multinomial approach works well and better than the logistic-normal approach in this example. Note that all Twitter users here have clear political tendencies, and most tweets tend to only have exactly one ideology, as shown by the histogram in Figure \ref{fig:Twitter_hist}. Since each nodes tweets tend to belong clearly to one category, the better performance of the multinomial approach is consistent with our hypothesis from the previous section.

\begin{figure}[ht!]
\centering
	\subfigure[left $\rightarrow$ left]{
	\includegraphics[width=.3\linewidth]{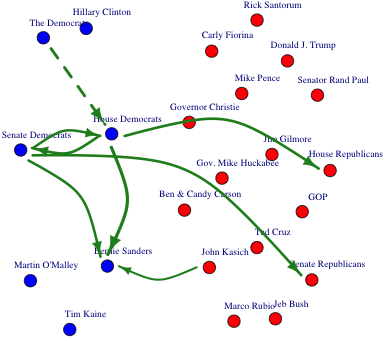}}
	\subfigure[left $\rightarrow$ right]{
	\includegraphics[width=.3\linewidth]{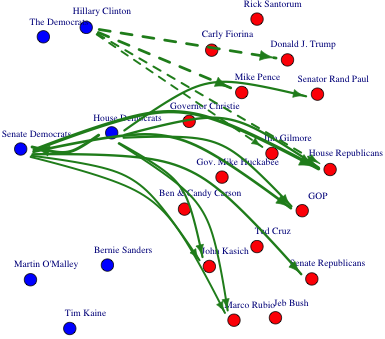}}
\subfigure[right $\rightarrow$ left]{
\includegraphics[width=.3\linewidth]{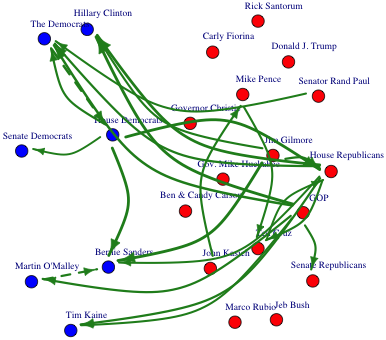}}
\subfigure[right $\rightarrow$ right]{
\includegraphics[width=.3\linewidth]{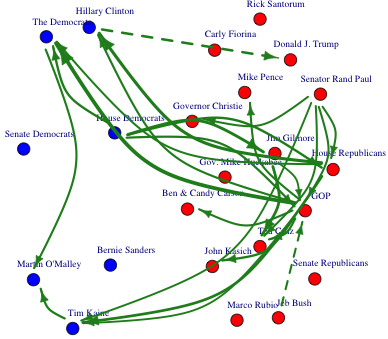}}
\caption{\small {\bf (Multinomial approach)} The absolute influence network among politicians on Twitter during 2016 presidential debates, estimated by the multinomial approach. We can see that {\em the partisanship of source users and target users of the edges align well with the categories of the sub-networks}. For example, Figure (a) and (b) suggest that left-leaning tweets sent by Democrats are more likely to trigger Democrats in sending left-leaning tweets and Republicans in right-leaning tweets. There are also more edges sent to Republicans in Figure (d) than Figure (c).}\label{fig:Twitter_network_abs}
\end{figure}

\begin{figure}[ht!]
\centering
	\subfigure[Multinomial: left $\rightarrow \frac{\mathrm{left}}{\mathrm{right}}$]{
	\includegraphics[width=.3\linewidth]{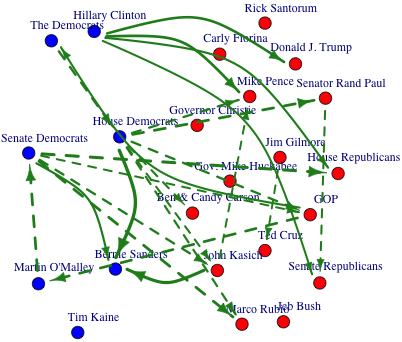}}
	\subfigure[Multinomial: right $\rightarrow \frac{\mathrm{left}}{\mathrm{right}}$]{
	\includegraphics[width=.3\linewidth]{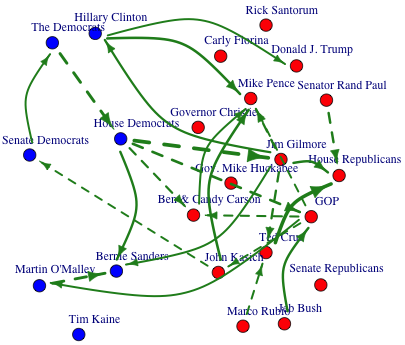}}
	\subfigure[Logistic-normal: left $\rightarrow \frac{\mathrm{left}}{\mathrm{right}}$]{
	\includegraphics[width=.3\linewidth]{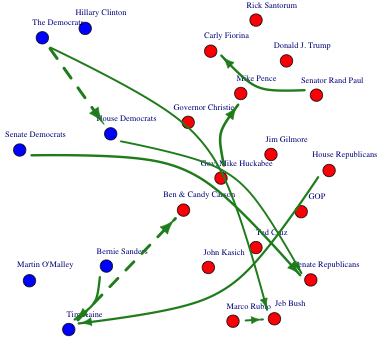}}
	\subfigure[Logistic-normal: right $\rightarrow \frac{\mathrm{left}}{\mathrm{right}}$]{
	\includegraphics[width=.3\linewidth]{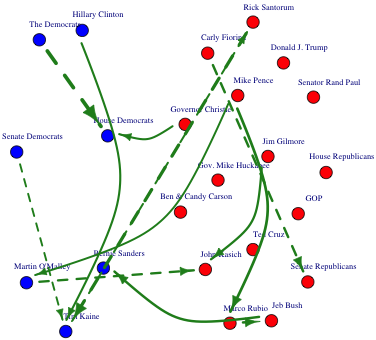}}
\caption{\small The relative influence networks among politicians on Twitter during 2016 presidential debates, estimated by the multinomial and logistic-normal approaches. Each edge in (a) and (c) represents the relative influence of the source user's left-leaning tweets upon \{the target user's left-leaning tweets compared to right-leaning ones\}, while those in (b) and (d) are relative influences of source users' right leaning tweets upon \{target users' left-leaning tweets compared to right-leaning ones\}. 
Solid edges suggest positive relative influences and thus encourage the future tweets sent by target users to be left-leaning, while dashed ones encourage them to be right-leaning. We can see more edges sent by Republicans in (b) than in (a), which suggests that Republicans have stronger influence when they post right-leaning tweets. Most dashed edges (encouraging right-leaning tweets) in (a) and (b) are sent to Republicans, which also shows an {\em alignment between categories of edges and partisanship of users in the estimated network by the multinomial approach}. As a comparison, {\em these patterns are not clear in the estimated network by the logistic-normal approach}, shown in Figure (c) and (d).}\label{fig:Twitter_network_rel}
\end{figure}

\subsection{MemeTracker Data Set}\label{sec:real_data_meme}

In this section we consider the question of how past posts sent by one online media source influence another media source in posting new articles, and how this influence network depends on the topics of articles. To answer this question, we apply our methods on the ``Raw phrases data'' in the MemeTracker data set \citep{leskovec2009meme}. This data set consists of news stories and blog posts from 1 million online sources (including mass media sources and personal blogs) over the time period from August 2008 to April 2009. For each news or blog item, only its phrases/quotes that have variants occurring frequently across the entire online news corpus are recorded in the data set, and we use them as the approximate content of the post. 

First note that most news media sources cover multiple topics (although not with the same amount of coverage), so we don't have labels for each news article and thus cannot use supervised learning like we did for the Twitter example to obtain the membership vectors as the political tweets example. Instead we use topic modeling (Latent Dirichlet Allocation proposed in \cite{blei2003latent}) for extracting mixed membership vectors, and we set the number of topics as $K=5$. Based on the top key words generated from topic modeling for each topic (shown in Table \ref{tab:kw_5topics} in Appendix \ref{sec:real_data_prep}), we choose the topic names as ``Sports'', ``International Affairs'', ``Lifestyle'', ``Finance'' and ``Health''. For simplicity and interpretability, we also filter out $M=58$ media sources based on their languages, frequencies, etc.. The detailed pre-processing of the data (how we obtain $\{X^t\}_{t=0}^T$ is contained in Appendix \ref{sec:real_data_prep}.

	\paragraph{Prediction performance:}
	We fit both models (multinomial and logistic-normal) using the first 70\% of the data (from September 1st, 2008 to February 16th, 2009), and test their prediction performance on the latter 30\% (from February 17 to April 30, 2009). 
	We choose the baseline topic as ``Health" for the logistic-normal approach since we believe the influence upon it should be weak, and thus the relative influence captured by the logistic-normal model is close to absolute influence. Detailed reasoning is contained in Appendix  \ref{sec:meme_baseline}. 
	
As explained before Section \ref{sec:real_data_tweet}, we calculate the prediction errors of the two fitted models and that of their corresponding fitted sub-models, which are presented in Table \ref{tab:meme_pred_comp_MN} and Table \ref{tab:meme_pred_comp_LN} respectively. Both the multinomial and logistic-normal approaches demonstrate the advantage of estimating context-dependent networks since the context-dependent network gives lower prediction error in both cases.
\begin{table}[ht]
		\centering
		\begin{tabular}{|c|c|c|c|}
		\hline
		\multirow{2}{*}{Method}	&\multirow{2}{*}{Constant Process} &Context-independent &Multinomial \\&&Network Model&(Our Model)\\
		\hline
		Prediction Error&0.49741&0.45062&0.43351\\
		\hline
		\end{tabular}
	\caption{\small The prediction errors of the fitted multinomial model (full model), and that of its two sub-models: fitted constant multinomial process and context-independent network model under multinomial framework, evaluated on latter 30\% of the data set. The prediction error of the full model is lower than the error of the context-independent one, {\em showing the benefit of incorporating context information using the multinomial approach}.}\label{tab:meme_pred_comp_MN}
	\end{table}


	
	\begin{table}[ht]
		\centering
		\begin{tabular}{|c|c|c|c|}
		\hline
		\multirow{2}{*}{Method}	&\multirow{2}{*}{Constant Process}&Context-independent &Logistic-normal\\&&Network Model&(Our Model)\\
		\hline
		Prediction Error&0.11269&0.10809&0.10229\\
		\hline
		\end{tabular}
	\caption{\small The prediction errors of the fitted logistic-normal model (full model), and that of its two sub-models: fitted constant logistic-normal process and context-independent network model under logistic-normal framework, evaluated on the latter 30\% of the data set. The prediction error of the full model is smaller than that of the context-independent one, {\em showing the benefit of incorporating context information using the logistic-normal approach}.}\label{tab:meme_pred_comp_LN}
	\end{table}
	
	\paragraph{Network estimates:}
	\begin{figure}[ht!]
	\centering
	\begin{minipage}[t]{\linewidth}
		\centering
		\subfigure[Logistic-normal estimate:\newline relative sub-network with\newline baseline topic ``Health"]{
			\includegraphics[width=0.3\linewidth]{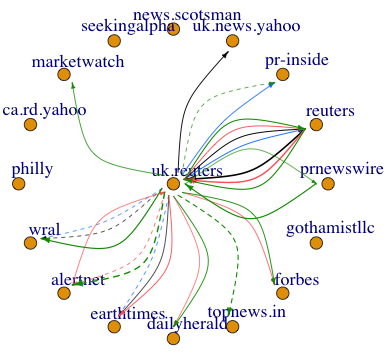}}
		\subfigure[Multinomial estimate:\newline relative sub-network with\newline baseline topic ``Health"]{
			\includegraphics[width=0.3\linewidth]{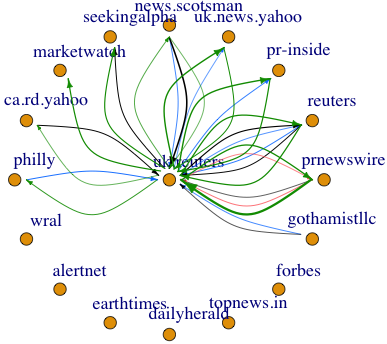}}
		\subfigure[Multinomial estimate:\newline absolute sub-network]{
			\includegraphics[width=0.3\linewidth]{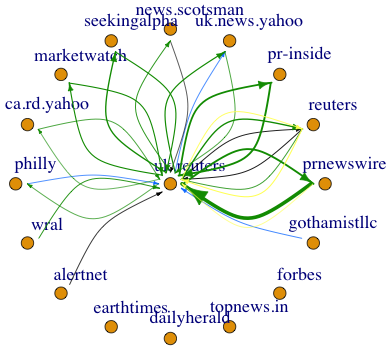}}
		\subfigure[Topic weights distribution\newline of {\em uk.reuters}]{
			\includegraphics[width=0.3\linewidth]{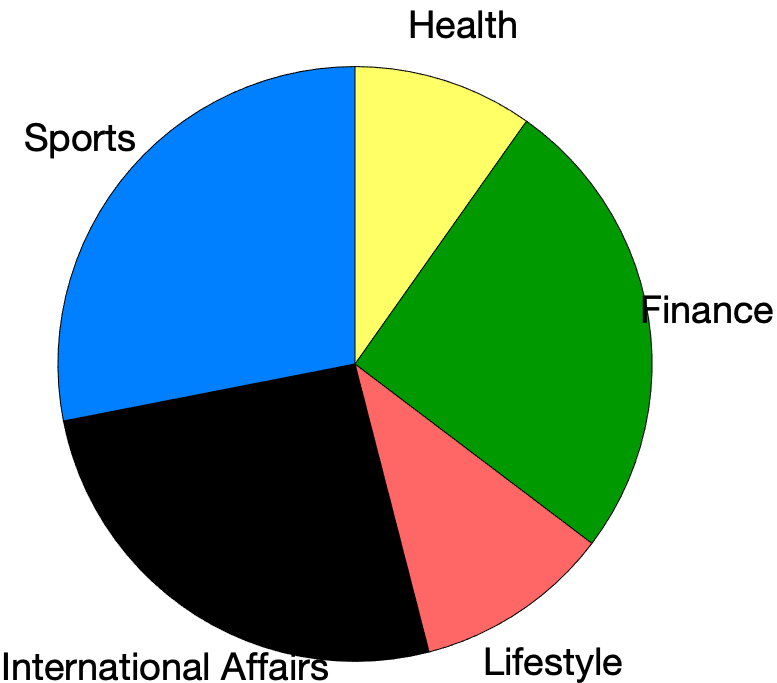}}
	\end{minipage}
	\caption{\small Three estimated neighborhoods around \emph{uk.reuters} by the two approaches and its weights distribution in 5 topics. Edge colors correspond to the topic colors in the pie charts; solid edges represent stimulatory influences while dashed ones are inhibitory. Edge widths are proportional to the absolute values of corresponding parameters after the maximal absolute entry of the network parameters is normalized to $1$. There are more red edges (influence in ``Lifestyle'') in the estimated relative sub-network by the logistic-normal approach (Figure (a)) than the estimated sub-networks by the multinomial approach (Figure (b) and (c)).}\label{fig:meme_network_ukreuters}
\end{figure}
\begin{figure}[ht!]
	\centering
	\begin{minipage}{\linewidth}
		\centering
		\subfigure[Logistic-normal estimate:\newline relative sub-network with\newline baseline topic ``Health"]{
			\includegraphics[width=0.3\linewidth]{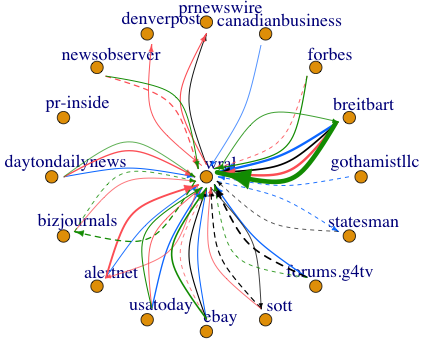}}
		\subfigure[Multinomial estimate:\newline relative sub-network with\newline baseline topic ``Health"]{
			\includegraphics[width=0.3\linewidth]{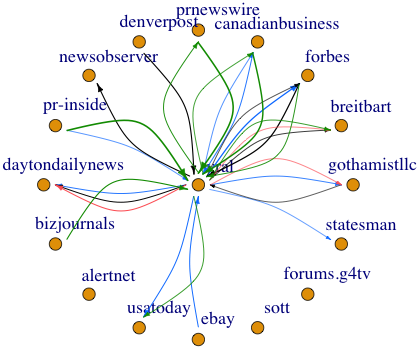}}
		\subfigure[Multinomial estimate:\newline absolute sub-network]{
			\includegraphics[width=0.3\linewidth]{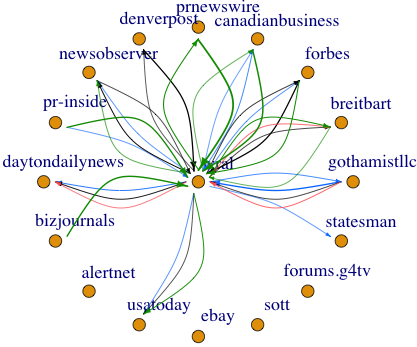}}
		\subfigure[Topic weights distribution\newline of {\em wral}]{
			\includegraphics[width=0.3\linewidth]{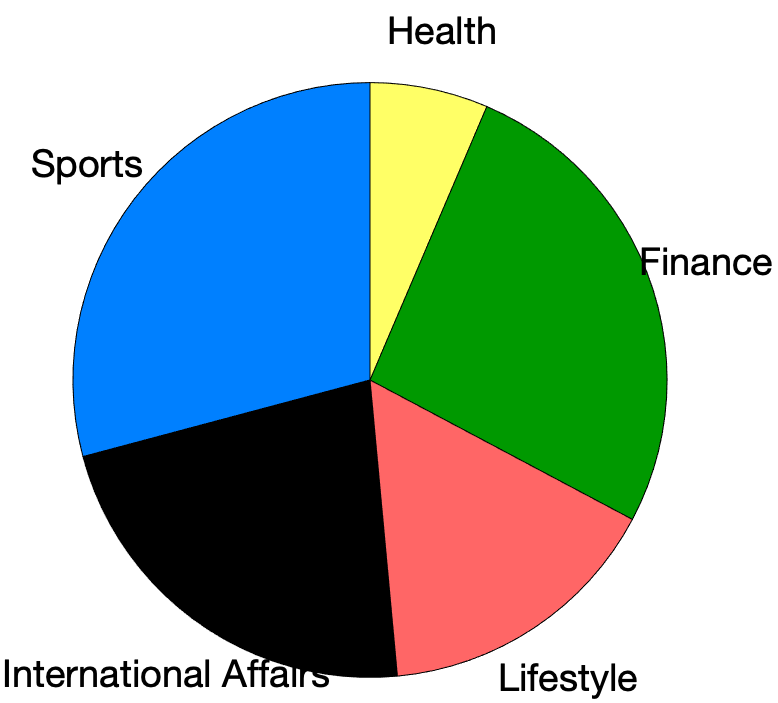}}
	\end{minipage}
	\caption{\small Three estimated neighborhoods around \emph{wral} by the two approaches and its weights distribution in 5 topics. Edge colors correspond to the topic colors in the pie charts; solid edges represent stimulatory influences while dashed ones are inhibitory. Edge widths are proportional to the absolute values of corresponding parameters after the maximal absolute entry of the network parameters is normalized to $1$. The neighbor media source ``rover.ebay" is abbreviated to ``ebay" to save space. There are more red edges (influence in ``Lifestyle'') in the estimated relative sub-network by the logistic-normal approach than the estimated sub-networks by the multinomial approach.}\label{fig:meme_network_wral}
\end{figure}
	We apply both the multinomial and logistic-normal approaches on the whole data set, with the same tuning parameters as those used in the prediction task. For simplicity, we present the neighborhood estimates around each media source, instead of the whole network estimates among 58 media sources. Figures \ref{fig:meme_network_ukreuters} and \ref{fig:meme_network_wral} present the three estimated sub-networks around \emph{uk.reuters} and \emph{wral},
	respectively, accompanied by the pie charts for the two central media sources' topic weights distribution among 5 topics. In each sub-network, we include the central media source's top 10 neighbors in any of the three network estimates. More details about the construction of the visualizations are contained in Appendix \ref{sec:meme_vis}. 
	
	As discussed in the beginning of Section \ref{sec:real_data_meme} and also seen from the pie charts in Figures \ref{fig:meme_network_ukreuters} and \ref{fig:meme_network_wral}, some media sources post on multiple topics. This is different from the political tweets example, where each Twitter user has exactly one ideological tendency that is known to us. Therefore, the validation used in the tweets example is not applicable for this example. Instead, we first comment on a general difference between the network estimates for the two approaches, and then validate some particular edges based on a cascade data set.
	
	\paragraph{General difference between the network estimates:} We can see from Figures \ref{fig:meme_network_ukreuters} and \ref{fig:meme_network_wral} that the logistic-normal approach estimates more red edges (influence in ''Lifestyle'') than the multinomial approach. In fact, the sections of the media sources' websites suggest that most media sources that post on multiple topics usually cover the topic ``Lifestyle'' (\eg {\em dailyherald}, {\em reuters}) while media sources focusing only on one topic seldom posts on ``Lifestyle'' (\eg {\em prnewswire}, {\em marketwatch}). For the first type of media sources, the logistic-normal approach may be more accurate since it captures the influences in ``Lifestyle'', while the multinomial approach may be more accurate for the latter kind of media sources. Neighborhood estimates around other media sources also show similar patterns, although not presented here. This supports our main hypothesis in Section \ref{sec:simulation_toymodel}. 

\paragraph{Phrase cluster data validation for edges:} We present supporting evidence based on a cascade data set (details provided shortly), suggesting that one method may do better than the other for the following 4 edges in Figures \ref{fig:meme_network_ukreuters} and \ref{fig:meme_network_wral}: {\em uk.reuters}$\rightarrow${\em reuters}, {\em breitbart}$\rightarrow${\em wral}, {\em canadianbusiness}$\rightarrow${\em wral} and {\em bizjournals}$\rightarrow${\em wral}. We first summarize the estimation results for the 4 edges in Table \ref{tab:meme_validation}. 
\begin{table}[ht!]
\centering
	\begin{tabular}{|c|c|c|c|}
		\hline
		\multirow{2}{*}{\backslashbox{Edges}{Networks}}&{\bf LN} relative&{\bf MN} relative&{\bf MN} absolute\\
		&network&network&network\\
		\hline
		\emph{uk.reuters}&\multirow{2}{*}{\bf S, I, L, F}&\multirow{2}{*}{S, I, L}&\multirow{2}{*}{I, F}\\
	    $\rightarrow$ \emph{reuters}&&&\\
		\hline
		\emph{breitbart}&\multirow{2}{*}{\bf S, I, L, F}&\multirow{2}{*}{No edge}&\multirow{2}{*}{F}\\
	    $\rightarrow$ \emph{wral}&&&\\
		\hline
		\emph{canadianbusiness}&\multirow{2}{*}{S}&\multirow{2}{*}{{\bf F} and S}&\multirow{2}{*}{{\bf F} and S}\\
	    $\rightarrow$ \emph{wral}&&&\\
		\hline
		\emph{bizjournals}&\multirow{2}{*}{L}&\multirow{2}{*}{\bf F}&\multirow{2}{*}{\bf F}\\
		$\rightarrow$ \emph{wral}&&&\\
		\hline
	\end{tabular}
	\caption{\small {\bf Edge topics} suggested by the estimated networks (column 2-4) in Figures \ref{fig:meme_network_ukreuters} and \ref{fig:meme_network_wral} for edges in column 1. Here we use ``S'', ``I'', ``L'' and ``F'' as abbreviations for the topics ``Sports'', ``International Affairs'', ``Lifestyle'' and ``Finance''. In the first row of the table, ``LN'' refers to the logistic-normal approach while ``MN'' refers to the multinomial approach. Our supporting evidence will validate the estimated edge topics marked in bold. We can see that {\em the logistic-normal approach works better for the first two edges, while the multinomial approach works better for the latter two}.}\label{tab:meme_validation}
\end{table}

Now we elaborate on our validation procedure. 
To validate our estimated edges we exploit a cascade data set: the \emph{``Phrase cluster data"} from August 2008 to January 2009 in the MemeTracker data set, which is also used in \cite{yu2017estimation} for studying influences among media sources. In contrast to the ``Raw phrases data'' used for our network estimation, where original phrases are recorded for each post, the ``Phrase cluster data'' collects phrase clusters consisting of variants of the same phrases, and for each phrase cluster there are records of which media source posts variants in it and when. 

For convenience, in the following we say that a media source posts a phrase cluster if it posts a phrase in that cluster. 
For each phrase cluster and any pair of influencer ($m$) and receiver ($m'$) media sources, if the first time $m'$ posts the phrase cluster is within an hour after $m$ posts it, we refer to it as an \emph{influence-involved phrase cluster} from $m$ to $m'$. Here we set the time limit as an hour since 1-hour discretization is used in the estimation task. In order to demonstrate the topics of these phrase clusters, we combine all the influence-involved phrase clusters from $m$ to $m'$ into one ``document'' and generate a word cloud and topics weights for the document. To assign topic weights, we apply the previously trained topic model (mentioned in the beginning of Section \ref{sec:real_data_meme}) on the document, quantifying how much the document falls in each topic. The words clouds and topic weights for the validated edges are presented in Figures \ref{fig:word_cloud_ukreuters_reuters}, \ref{fig:wc_breitbart_wral}, \ref{fig:wc_canadianbusiness_wral} and \ref{fig:wc_bizjournals_wral}. Details about the generation of words clouds and topic weights are deferred to Appendix \ref{sec:meme_validation}.



The {\em number} and {\em topics} of the influence-involved phrase clusters should reflect stimulatory influences between media sources {\em qualitatively}, and thus can facilitate our comparison between the logistic-normal and multinomial approaches given that there is no ground truth. However, we don't expect the validation procedure to provide us with an accurate network estimate due to the following reasons: this procedure only looks at marginal dependence of each receiver media source on an influencer media source, instead of its conditional dependence on that influencer media source given all media sources; meanwhile, the group sparsity structure is not leveraged to handle the high-dimensional problem.  

\paragraph{Validation for {\em uk.reuters}$\rightarrow${\em reuters}:} We look at the number of influence-involved phrase clusters, from \emph{uk.reuters} to each of its neighbors (those appearing in the sub-networks). 
We calculate the percentage of all phrase clusters each neighbor ever posts that are influence-involved, and the top 5 neighbors with highest percentages are presented in Table \ref{tab:phrase_number_ukreuters_out}, where {\em reuters} has the highest percentage. 
We further investigate the topics of influence from \emph{uk.reuters} to \emph{reuters}, through the word cloud and topic weights in Figure \ref{fig:word_cloud_ukreuters_reuters}, which suggest that the logistic-normal
approach estimates the edges more accurately than the multinomial approach. 
\begin{table}[ht!]
\centering
	\begin{tabular}{|c|c|c|c|}
		\hline
		\multirow{2}{*}{Neighbor}&Total Phrase Clusters &Influence-involved &\multirow{2}{*}{Percent}\\
		&this Neighbor Posts&Phrase Clusters&\\
		\hline
		\emph{reuters}&7928&875&11.04\% \\
		\emph{alertnet.org}&2552&139&5.45\%\\
		\emph{ca.rd.yahoo}&6227&362&5.81\%\\
		\emph{uk.news.yahoo}&16502&955&5.79\%\\
		\emph{earthtimes.org}&2808&89&3.17\%\\
		\hline
	\end{tabular}
	\caption{\small {Number of phrase clusters that are posted at least once by each neighbor media source of \emph{uk.reuters} (column 2)}, and number of influence-involved phrase clusters from \emph{uk.reuters} to each neighbor (column 3). The last column is the percentage of the phrase clusters each neighbor posts that are influence-involved. {\em  Top 5 neighbor media sources with largest percentages are presented, upon which the influence of \emph{uk.reuters} is supported by the ``Phrase cluster data''}. 
	}\label{tab:phrase_number_ukreuters_out}
\end{table}

			\begin{figure}[ht!]
			\begin{minipage}{0.38\linewidth}
			\centering
			\includegraphics[width=0.9\linewidth]{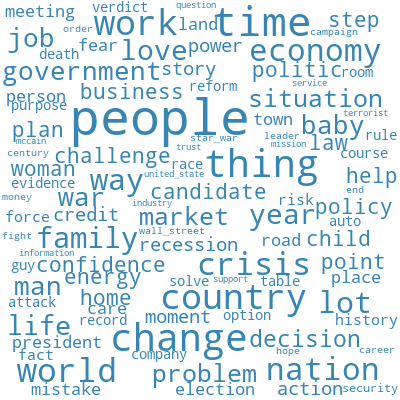}
			\end{minipage}
			\begin{minipage}{0.6\linewidth}
			\centering
				\begin{tabular}{|c|c|c|c|c|}
					\hline	\multirow{2}{*}{Sports}&International&\multirow{2}{*}{Lifestyle}&\multirow{2}{*}{Finance}&\multirow{2}{*}{Health}\\
					& Affairs&&&\\
					\hline
					\textbf{0.2359}&\textbf{0.4634}&\textbf{0.1274}&\textbf{0.1384}&0.0348\\
					\hline
				\end{tabular}
			\end{minipage}
			\caption{\small {\bf (\emph{uk.reuters}$\rightarrow$ \emph{reuters})} The word cloud and topic weights of the document consisting of influenced-involved phrase clusters from \emph{uk.reuters} to \emph{reuters}. We can see from the word cloud that these phrase clusters cover ``International affairs" (\eg words like ``country", ``world"), ``Finance", (\eg ``economy", ``market"), and ``Lifestyle
			 (\eg ``love", ``baby"). Although we can see few words clearly referring to sports, all the first 4 topics have non-negligible topic weights in the table above. We believe the reason is that, the topic ``Sports" is not exclusively about sports although we name it so, as indicated by the key words in Table \ref{tab:kw_5topics}. Specifically, its top 10 keywords include ``time", ``lot", ``thing", which do not clearly refer to any topic. {\em The word cloud together with the topic weights provide evidence for the edges estimated by the logistic-normal method other than the multinomial method}, since the latter does not estimate a red edge (influence in ``Lifestyle''), either in the absolute sub-network (Figure \ref{fig:meme_network_ukreuters}(c)) or the relative sub-network (Figure \ref{fig:meme_network_ukreuters}(d)).}\label{fig:word_cloud_ukreuters_reuters}
			\end{figure}
\paragraph{Validating the three edges pointing to {\em wral}:} We consider the number of influence-involved phrase clusters, from each neighbor to \emph{wral}. 
We also calculate the percentage of all phrase clusters each neighbor posts that are influence-involved, and the top 5 neighbors with highest percentages are presented in Table \ref{tab:phrase_number_wral_in}. Neighbors ({\em breitbart}, {\em canadianbusiness}, {\em bizjournals}) sending the three edges that will be validated are all listed in the table. 

	\begin{table}[ht!]
	\centering
		\begin{tabular}{|c|c|c|c|}
			\hline
			\multirow{2}{*}{Neighbor}&Total Phrase Clusters  &Influence-involved &\multirow{2}{*}{Percent}\\
			&this Neighbor Posts&Phrase Clusters&\\
			\hline
			\emph{daytondailynews}&6571&768&11.69\%\\
			\emph{canadianbusiness}&2339&252&10.77\%\\
			\emph{breitbart}&19279&1408&7.30\% \\
			\emph{bizjournals}&1069&27&2.53\%\\
			\emph{newsobserver}&5107&120&2.35\%\\
			\hline
		\end{tabular}
		\caption{\small  Number of phrase clusters that are posted for at least once by each neighbor media source of \emph{wral} (column 2), and number of influence-involved phrase clusters from each neighbor to \emph{wral} (column 3). The last column is the percentage of the phrase clusters each neighbor posts that are influence-involved. {\em Top 5 media sources with largest percentages are presented, whose influence upon \emph{wral} is supported by the ``Phrase cluster data''.}. 
		}\label{tab:phrase_number_wral_in}
	\end{table}
	

	\begin{figure}[hb!]
	\begin{minipage}{0.38\linewidth}
	\centering
		\includegraphics[width=0.9\linewidth]{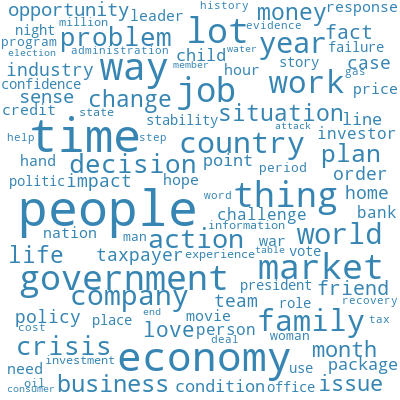}
	\end{minipage}
	\begin{minipage}{0.6\linewidth}
	\centering
		\begin{tabular}{|c|c|c|c|c|}
			\hline
			\multirow{2}{*}{Sports}&International&\multirow{2}{*}{Lifestyle}&\multirow{2}{*}{Finance}&\multirow{2}{*}{Health}\\
			& Affairs&&&\\
			\hline
			\textbf{0.2412}&\textbf{0.3392}&\textbf{0.1030}&\textbf{0.2744}&0.0423\\
			\hline
		\end{tabular}
	\end{minipage}
		\caption{\small {\bf (\emph{breitbart}$\rightarrow$ \emph{wral})} The word cloud and topic weights of the document consisting of influenced-involved phrase clusters from \emph{breitbart} to \emph{wral}
			. The word cloud suggests that the influence is mainly on ``International affairs" (\eg words like ``people", ``government") and ``Finance" (\eg ``market", ``crisis", ``company", ``money"), but also about ``Lifestyle" (\eg ``family", ``love", ``friend", ``child") and ``Sports" (\eg ``team", ``point"). Meanwhile, the first 4 topics all have non-negligible weights, as shown in the table above. {\em This is consistent with the edges estimated by the logistic-normal approach but not the multinomial approach}, since the latter only estimates a green edge in the absolute sub-network (Figure \ref{fig:meme_network_wral}(c)) and no edge in the relative sub-network (Figure \ref{fig:meme_network_wral}(d)), from \emph{breitbart} to \emph{wral}.
		}\label{fig:wc_breitbart_wral}
	\end{figure}
%
The word cloud and topic weights in Figure \ref{fig:wc_breitbart_wral} suggest that the logistic-normal approach estimates the edges from \emph{breitbart} more accurately, while those in Figure \ref{fig:wc_canadianbusiness_wral} and \ref{fig:wc_bizjournals_wral} demonstrate that the multinomial approach estimates the edges from \emph{canadianbusiness} and \emph{bizjournals} more accurately. 
	\begin{figure}[hb!]
	\begin{minipage}{0.38\linewidth}
	\centering
		\includegraphics[width=0.9\linewidth]{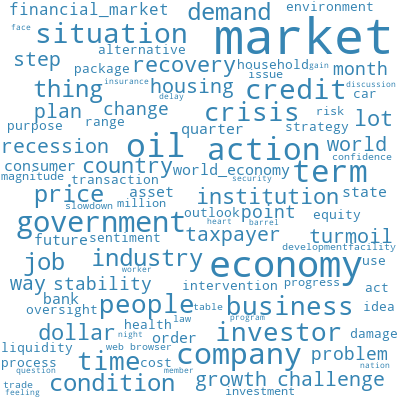}
	\end{minipage}
	\begin{minipage}{0.6\linewidth}
	\centering
		\begin{tabular}{|c|c|c|c|c|}
			\hline
			\multirow{2}{*}{Sports}&International&\multirow{2}{*}{Lifestyle}&\multirow{2}{*}{Finance}&\multirow{2}{*}{Health}\\
			& Affairs&&&\\
			\hline
			0.1062&0.2153&0.0144&\textbf{0.6255}&0.0387\\
			\hline
		\end{tabular}
	\end{minipage}
		\caption{\small {\bf (\emph{canadianbusiness}$\rightarrow$ \emph{wral})} The word cloud and topic weights of the document consisting of influenced-involved phrase clusters from \emph{canadianbusiness} to \emph{wral}. Both the word cloud and topic weights suggests the influence of \emph{canadianbusiness} on \emph{wral} to be primarily about ``Finance'': most words in the word cloud are finance-related, \eg ``market'', ``economy'', ``company'', ``demand''; the topic weight in ``Finance'' is more than 0.5. The multinomial approach estimates a green (``Finance'') edge and a blue (``Sports'') edge from \emph{canadianbusiness} to \emph{wral} (both in Figure \ref{fig:meme_network_wral}(c) and \ref{fig:meme_network_wral}(d)), while the logistic-normal approach only estimates a blue (``Sports'') edge. {\em Therefore the multinomial approach may be more accurate than the logistic-normal approach in estimating this edge}.}\label{fig:wc_canadianbusiness_wral}
	\end{figure}
	
\clearpage
	\begin{figure}[hb!]
	\begin{minipage}{0.38\linewidth}
	\centering
		\includegraphics[width=0.9\linewidth]{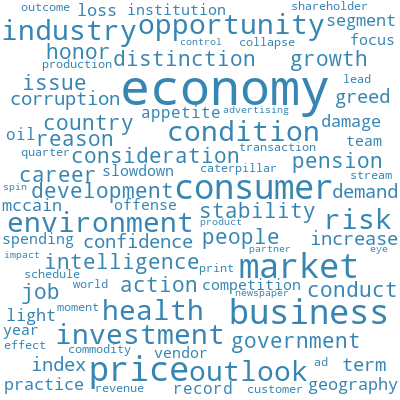}
	\end{minipage}
	\begin{minipage}{0.6\linewidth}
	\centering
		\begin{tabular}{|c|c|c|c|c|}
			\hline
			\multirow{2}{*}{Sports}&International&\multirow{2}{*}{Lifestyle}&\multirow{2}{*}{Finance}&\multirow{2}{*}{Health}\\
			& Affairs&&&\\
			\hline
			0.1020&0.1057&0.0017&\textbf{0.7472}&0.0435\\
			\hline
		\end{tabular}
	\end{minipage}
		\caption{\small {\bf (\emph{bizjournals}$\rightarrow$ \emph{wral})} The word cloud and topic weights of the document consisting of influenced-involved phrase clusters from \emph{bizjournals} to \emph{wral}
			. Both the word cloud and topic weights show that the influence of \emph{bizjournals} on \emph{wral} is primarily about ``Finance'': most words in the word cloud are finance-related, \eg ``economy'', ``price'', ''business'', ``market''; the topic weight in ``Finance'' is larger than 0.5. The multinomial approach estimates a green (``Finance'') edge in both the absolute and relative sub-networks, while the logistic-normal approach estimates a red (``Lifestyle'') edge and a dashed (inhibitory) green (``Finance'') edge from \emph{bizjournals} to \emph{wral}. {\em Therefore the multinomial method may be more accurate than the logistic-normal method in estimating this edge}. 
		}\label{fig:wc_bizjournals_wral}
	\end{figure}

	
\paragraph{Hypothesis support based on validated edges:} Table \ref{tab:meme_validation} and the detailed arguments above suggest that the logistic-normal method estimates edges better if they connect {\em uk.reuters}, {\em reuters} and {\em breitbart}, while the multinomial method estimates edges better if they connect {\em canadianbusiness} and {\em bizjournals}. The first three media sources tend to cover multiple topics, while the latter two media sources tend to be primarily about one topic. To further emphasize this {\em mixed membership} or {\em single category} behavior, we consider the \emph{top} topic weights of averaged posts sent by each media source within each time interval, and take an average over all time intervals when each media source posts. We present the average \emph{top} topic weights of these 5 media sources in Table \ref{tab:top_wts_4medias}. A higher top topic weight suggests less mixed membership. We can see that posts sent by {\em uk.reuters}, \emph{reuters}, \emph{breitbart} within the same time units are more mixed in topics, while those by \emph{bizjournals}, \emph{canadianbusiness} are more exclusively about one topic. This finding further validates our main hypothesis from the previous section.
\begin{table}[ht!]
	\centering
	\begin{tabular}{|c|c|c|c|c|c|}
		\hline
		\multirow{2}{*}{Media sources}&\multirow{2}{*}{breitbart}&\multirow{2}{*}{reuters}&\multirow{2}{*}{uk.reuters}&\multirow{2}{*}{bizjournals}&canadian-\\
		&&&&&business\\
		\hline
		Top topic weight&0.4061&0.4110&0.4400&0.5468&0.5694\\
		\hline
		\% of media sources&\multirow{2}{*}{8.62\%}&\multirow{2}{*}{10.34\%}&\multirow{2}{*}{27.59\%}&\multirow{2}{*}{68.98\%}&\multirow{2}{*}{84.48\%}\\
		with lower top weights&&&&&\\
		\hline
	\end{tabular}
\caption{\small Top topic weights of the averaged posts within each time unit sent by the 5 media sources, averaged over time. The 5 medias all have edges estimated well by one of the methods but not the other. The third row is the percentage of all 58 media sources that have lower top topic weights than the media source in the first row. A higher top topic weight and percentage suggests that the posts sent by the media source are more likely to fall in one topic, while a lower top topic weight suggests more mixed membership. The edges of the first three media sources ({\em lower top topic weights}) in this table are estimated well by the {\em logistic-normal} approach, while that of the last two media sources ({\em higher top topic weights}) are estimated well by the {\em multinomial} approach.}\label{tab:top_wts_4medias}
\end{table}
\subsection{Summary of findings}
Since  real data validation is quite involved, we briefly summarize the key findings in Table \ref{tab:real_data_summary}, which provides further evidence for the hypothesis that the logistic-normal approach will be more effective at estimating influences among nodes whose events exhibit mixed memberships in multiple categories; while for a node more likely to have events in one category than others and thus each of its events falls in that category, the multinomial approach will be more effective.
\begin{table}[ht!]
    \centering
    \begin{tabular}{|c|c|c|c|}
    \hline
         \multirow{2}{*}{Examples}&\multirow{2}{*}{Prediction}&\multirow{2}{*}{Network estimates}&Mixed membership\\
         &&&v.s. single category\\
         \hline
         Political&\multirow{2}{*}{MN is better}&\multirow{2}{*}{MN is better}&Each Twitter user has\\
         tweets&&&one ideology tendency\\
         \hline
         \multirow{6}{*}{MemeTracker}&&LN better for &{\em reuters}, {\em uk.reuters}\\
         &&{\em uk.reuters}  $\rightarrow$ {\em reuters}& and {\em breitbart}\\
         &Both methods&and {\em breitbart} $\rightarrow$ {\em wral}&cover multiple topics\\
         \cline{3-4}
         &work well&MN better for&{\em canadianbusiness}\\
         &&{\em canadianbusiness} $\rightarrow$  {\em wral}&and {\em bizjournals} are\\
         &&and {\em bizjournal} $\rightarrow$ {\em wral}&primarily about one topic\\
         \hline
    \end{tabular}
    \caption{Summary of comparison between the two methods in the two real data examples. ``MN'' refers to the multinomial method while ``LN'' refers to the logistic-normal method. The last column shows whether nodes exhibit mixed membership in multiple categories or falls mainly in single categories and further validates our main hypothesis.}
    \label{tab:real_data_summary}
\end{table}

\newpage

\section{Proofs}\label{sec:proofs}
In this section we provide proofs for Theorem \ref{thm:mult}, \ref{thm:cq} and \ref{thm:q_A}. Proofs for the lemmas are deferred to the appendix.
\subsection{Proof of Theorem \ref{thm:mult}}\label{sec:proof_mult}
We prove the error bounds for arbitrary $1\leq m\leq M$ and then take a union bound.
 Let $\Delta_m\in \mathbb{R}^{K\times M\times K}$, and define
\begin{equation}
\begin{split}
    F(\Delta_m)=& L_m^{\MN}(A_m^{\MN}+\Delta_m)-L_m^{\MN}(A_m^{\MN})+\lambda \|A_m^{\MN}+\Delta_m\|_R-\lambda \|A_m^{\MN}\|_R,
\end{split}
\end{equation}
where $$L_m^{\MN}(A_m)=\frac{1}{T}\sum_{t=0}^{T-1}\left[f(\langle A_m, X^{t}\rangle+\nu^{\MN}_m)-\sum_{k=1}^K \langle A_{mk}, X^t\rangle X^{t+1}_{mk}\right],\quad f(x)=\log \left(\sum_{i=1}^K e^{x_i}+1\right),$$ and $$\|A_m\|_R=\sum_{m'=1}^M\|A_{m,:,m',:}\|_F.$$
Our goal is to show that if $F(\Delta_m)\leq 0$, the following holds with high probability:
\begin{equation}\label{eq:err_bounds_mult}
    \|\Delta_m\|_F^2\leq \frac{C\rho^{\MN}_m\log M}{T},\quad\|\Delta_m\|_R\leq C\rho^{\MN}_m\sqrt{\frac{\log M}{T}}.
\end{equation}
The following lemma shows that we only need to prove the claim above for $\|\Delta_m\|_R\leq C$.
\begin{lemma}\label{lem:bounded_err}
    For any convex function $g$ and norm $\|\cdot\|$, if $g(0)=0$, $g(x)>0$ as long as $\|x\|= C$, then $g(x)\leq 0$ implies $\|x\|<C$.
\end{lemma}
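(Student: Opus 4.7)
The plan is a short proof by contradiction exploiting convexity of $g$ together with positive homogeneity of the norm. I would suppose that $g(x) \le 0$ yet $\|x\| \ge C$, and aim to contradict the assumption that $g$ is strictly positive on the sphere $\{\|\cdot\| = C\}$. The case $\|x\| = C$ is dispatched instantly, since then $g(x) > 0$ by hypothesis, contradicting $g(x) \le 0$. So the meat of the argument is the case $\|x\| > C$.

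For the remaining case, I would scale $x$ down to land on the sphere. Setting $\alpha = C/\|x\| \in (0,1)$ and $y = \alpha x$, positive homogeneity of the norm gives $\|y\| = \alpha \|x\| = C$, so the hypothesis yields $g(y) > 0$. On the other hand, writing $y = \alpha x + (1-\alpha)\cdot 0$ and applying convexity of $g$ together with $g(0)=0$ gives
\[
g(y) \le \alpha\,g(x) + (1-\alpha)\,g(0) = \alpha\,g(x) \le 0,
\]
which contradicts $g(y) > 0$. Hence $\|x\| < C$, as claimed.

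There is essentially no obstacle here: the argument is the standard convex-sublevel-set trick and occupies only a few lines. The only thing worth flagging is the tacit use of positive homogeneity $\|\alpha x\| = \alpha\|x\|$ for $\alpha \ge 0$, which is built into the definition of a norm and is what makes the scaling step $y = \alpha x$ legitimate. In the application to Theorem~\ref{thm:mult}, this lemma will be invoked with $g(\Delta_m) = F(\Delta_m)$ and $\|\cdot\| = \|\cdot\|_R$, reducing the task of bounding $\|\Delta_m\|_R$ to showing that $F$ is strictly positive on an appropriately chosen sphere of radius $C$, after which the rest of the error analysis can be carried out assuming $\|\Delta_m\|_R \le C$.
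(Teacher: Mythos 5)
Your proof is correct and is essentially identical to the paper's: both argue by contradiction, rescale $x$ by $\gamma = C/\|x\|$ to land on the sphere, and use convexity with $g(0)=0$ to force $g(\gamma x)\le \gamma g(x)\le 0$, contradicting positivity on the sphere. Your explicit handling of the boundary case $\|x\|=C$ is a minor extra care the paper glosses over, but the argument is the same.
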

Since $F(\cdot)$ is convex, we only need to show that $F(\Delta_m)\leq 0$ and $\|\Delta_m\|_R\leq C$ imply the error bounds \eqref{eq:err_bounds_mult}. This is because that the error bounds suggest $\|\Delta_m\|_R\leq C\rho^{\MN}_m\sqrt{\frac{\log M}{T}}<C$, thus the condition in Lemma \ref{lem:bounded_err} holds.

Denote the Bregman divergence induced by any function $g$ as $D_g(\cdot, \cdot)$, then if $F(\Delta_m)\leq 0$, 
\begin{equation}\label{eq:std_eq_mult}
    \begin{split}
        D_{L_m^{\MN}}(A_m^{\MN}+\Delta_m,A_m^{\MN})\leq &-\langle \nabla L_m^{\MN}(A_m^{\MN}),\Delta_m\rangle+\lambda \|A_m^{\MN}\|_R-\lambda \|A_m^{\MN}+\Delta_m\|_R,
    \end{split}
\end{equation}
The following lemmas provide an upper bound for the R.H.S.
\begin{lemma}\label{lem:dev_bnd_mult}
   	   		Under the model generation process \eqref{eq:mult}, with probability at least $1-\exp\{-c\log M\}$, 
   	   		\begin{equation*}
   	   		\left\|L_m^{\MN}(A_m^{\MN})\right\|_{R^*}< CK\sqrt{\frac{\log M}{T}}\leq \frac{\lambda}{2},
   	   		\end{equation*}
   	   		where $C>0$ is a universal constant.
   	   	\end{lemma}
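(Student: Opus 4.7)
}

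The target is a bound on the dual norm $\|\cdot\|_{R^*}$ of the gradient $\nabla L_m^{\MN}(A_m^{\MN})$ at the true parameter. Since $\|A_m\|_{R}=\sum_{m'}\|A_{m,:,m',:}\|_{F}$ is a group-$\ell_1$ norm with Frobenius norm inside each group, its dual is the group-$\ell_\infty$ norm $\|G\|_{R^*}=\max_{m'}\|G_{m,:,m',:}\|_{F}$. Thus the problem reduces to showing that, for every $m'$, the $K\times K$ block of $\nabla L_m^{\MN}(A_m^{\MN})$ indexed by $(k,k')$ is uniformly small.

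Differentiating the loss, the $(k,m',k')$ entry of $\nabla L_m^{\MN}(A_m^{\MN})$ equals
\begin{equation*}
\frac{1}{T}\sum_{t=0}^{T-1}\Bigl[\tfrac{e^{\mu^{t+1}_{mk}}}{1+\sum_{k''}e^{\mu^{t+1}_{mk''}}}-X^{t+1}_{mk}\Bigr]X^{t}_{m'k'}=:\frac{1}{T}\sum_{t=0}^{T-1}\xi^{t+1}_{mk,m'k'}.
\end{equation*}
At the true parameter, the term in brackets is $\mathbb{P}(X^{t+1}_m=e_k\mid \mathcal{F}_t)-\mathbf{1}\{X^{t+1}_m=e_k\}$, so conditionally on $\mathcal{F}_t=\sigma(X^{0:t})$ it is mean zero. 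Multiplying by the $\mathcal{F}_t$-measurable factor $X^t_{m'k'}$ preserves this, so $\{\xi^{t+1}_{mk,m'k'}\}_t$ is a martingale difference sequence with respect to $\{\mathcal{F}_t\}$. Since $X^t_{m'k'}\in\{0,1\}$ and the bracket lies in $[-1,1]$, each $\xi^{t+1}_{mk,m'k'}\in[-1,1]$.

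Apply the Azuma--Hoeffding inequality entrywise: for any $\varepsilon>0$,
\begin{equation*}
\mathbb{P}\!\left(\Bigl|\tfrac{1}{T}\!\sum_{t=0}^{T-1}\xi^{t+1}_{mk,m'k'}\Bigr|\geq\varepsilon\right)\leq 2\exp(-T\varepsilon^2/2).
\end{equation*}
Take a union bound over $m=1,\dots,M$, $m'=1,\dots,M$, and $k,k'=1,\dots,K$ (total $M^2K^2$ events) with $\varepsilon=c_0\sqrt{\log M/T}$ for a sufficiently large $c_0$; the resulting failure probability is at most $\exp(-c\log M)$. Conjoint with the crude inequality $\|G\|_{F}\leq K\max_{k,k'}|G_{kk'}|$ applied to each $K\times K$ block $G=\nabla L_m^{\MN}(A_m^{\MN})_{:,m',:}$, this yields
\begin{equation*}
\max_{m,m'}\bigl\|\nabla L_m^{\MN}(A_m^{\MN})_{:,m',:}\bigr\|_{F}\leq CK\sqrt{\log M/T},
\end{equation*}
which is exactly the dual-norm bound claimed. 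The choice $\lambda=CK\sqrt{\log M/T}$ in Theorem \ref{thm:mult} (doubling the constant if necessary) then ensures the right-hand side is at most $\lambda/2$.

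The only subtlety is recognizing the martingale-difference structure—unlike in the i.i.d.\ regression setting, the past covariates $X^t$ are correlated with previous noise, so Hoeffding does not apply directly, but Azuma does once we verify adaptedness and conditional centering. All remaining steps are a standard entrywise concentration followed by a union bound and a trivial relation between the max and Frobenius norms on $K\times K$ blocks.
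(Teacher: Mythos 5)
Your proof is correct and follows essentially the same route as the paper's: identify the gradient at the true parameter as $-\frac{1}{T}\sum_t \epsilon^{t+1}_m\otimes X^t$ with $\epsilon^{t+1}_m$ a conditionally centered (martingale-difference) term bounded in $[-1,1]$, apply Azuma--Hoeffding entrywise, union bound over all indices, and pass from the entrywise maximum to the blockwise Frobenius norm via the factor $K$. The only cosmetic difference is that the paper fixes $m$ inside the lemma and defers the union over $m$ to the theorem, whereas you fold it in directly; this changes nothing.
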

   	   	
      	Thus we can bound the R.H.S. of \eqref{eq:std_eq_mult} by 
      	\begin{equation*}
      	    \frac{\lambda}{2}\|\Delta_m\|_{R}+\lambda\|\Delta_{m,:,S_m^{\MN},:}\|_R-\lambda\|\Delta_{m,:,(S_m^{\MN})^c,:}\|_R\leq \frac{3\lambda}{2}\|\Delta_{m,:,S_m^{\MN},:}\|_R-\frac{\lambda}{2}\|\Delta_{m,:,(S_m^{\MN})^c,:}\|_R.
      	\end{equation*}
      	By the definition of $L_m^{\MN}$, 
      	\begin{equation*}
      	    \begin{split}
      	         D_{L_m^{\MN}}(A_m^{\MN}+\Delta_m,A_m^{\MN})=&\frac{1}{T}\sum_{t=0}^{T-1}D_f(\langle A_m^{\MN}, X^t\rangle+\nu^{\MN}_m+\langle \Delta_m,X^t\rangle,\langle A_m^{\MN}, X^t\rangle+\nu^{\MN}_m)\\
      	         \geq &\frac{1}{T}\sum_{t=0}^{T-1}\frac{\lambda_{\min}(\nabla^2f(\xi^t))}{2}\|\langle \Delta_m,X^t\rangle\|_2^2,
      	    \end{split}
      	\end{equation*}
       where $\xi^t\in\bbR^K$ is some point lying between $\langle A_m^{\MN}+\Delta_m,X^t\rangle+\nu_m^{\MN}$ and $\langle A^{\MN}_m,X^t\rangle+\nu_m^{\MN}$. Since we have assumed 
       $$\|A^{\MN}\|_{\infty,\infty,1,\infty}\leq R_{\max}^{\MN},\quad \|\Delta_m\|_R\leq C,
       $$
       we know $\langle A_m^{\MN}+\Delta_m,X^t\rangle+\nu^{\MN}_m,\langle A^{\MN}_m,X^t\rangle+\nu^{\MN}_m\in [-C,C]^K$ where $C$ depends on $R_{\max}^{\MN}, \|\nu^{\MN}\|_{\infty}$, and thus $\xi^t\in [-C,C]^K$.
       
       The next step is to lower bound $\lambda_{\min}(\nabla^2 f(\xi^t))$. First we calculate the Hessian matrix of $f$:
      	\begin{equation*}
      	\left(\nabla^2 f(x)\right)_{ij}=-\frac{e^{x_i+x_j}}{\left(\sum_{k=1}^K e^{x_k}+1\right)^2}+\frac{e^{x_i}\ind{i=j}}{\sum_{k=1}^K e^{x_k}+1},
      	\end{equation*}
      	then for any $u\in \bbR^K$,
      	\begin{equation*}
      	\begin{split}
      	u^\top \nabla^2 f(x)u=&\sum_{i,j}u_iu_j \left(\nabla^2 f(x)\right)_{ij}\\
      	=&\left(\sum_{k=1}^K e^{x_k}+1\right)^{-2}\left\{-\left(\sum_{i=1}^K u_ie^{x_i}\right)^2+\left(\sum_{i=1}^K u_i^2e^{x_i}\right)\left(\sum_{i=1}^K e^{x_i}+1\right)\right\}\\
      	\geq &\left(\sum_{k=1}^K e^{x_k}+1\right)^{-2}\left(\sum_{i=1}^K u_i^2e^{x_i}\right)\\
      	\geq &\|u\|_2^2 \min_i e^{x_i}\left(\sum_{k=1}^K e^{x_k}+1\right)^{-2}.
      	\end{split}
      	\end{equation*}
      	The third line is due to Cauchey-Schwartz inequality: 
      	\begin{equation*}
      	    \left(\sum_{i=1}^K u_ie^{x_i}\right)^2=\left(\sum_{i=1}^K u_ie^{\frac{x_i}{2}}e^{\frac{x_i}{2}}\right)^2\leq \left(\sum_{i=1}^K u_i^2e^{x_i}\right)\left(\sum_{i=1}^K e^{x_i}\right).
      	\end{equation*} 
      	Therefore, $\lambda_{\min}(\nabla^2 f(\xi^t))\geq \frac{e^{-C}}{\left(K e^{C}+1\right)^{2}}> 0$. Combining this with \eqref{eq:std_eq_mult}, we know that 
      	\begin{equation}\label{eq:Delta_Cone_mult}
      	\left\|(\Delta_m)_{:,(S_m^{\MN})^c,:}\right\|_{R}\leq 3\left\|(\Delta_m)_{:,S_m^{\MN},:}\right\|_{R},
      	\end{equation}  
       
       Now we would like to lower bound $\frac{1}{T}\sum_{t=0}^{T-1}\left\|\langle \Delta_m, X^t\rangle\right\|_2^2$ with the following restricted eigenvalue condition. First we define set $\cC(S,\kappa)$ of $K\times M\times K$ tensors, for any set $S\subseteq \{1,\dots,M\}$, and constant $\kappa>0$:
       $$
       \cC(S,\kappa)=\{U\in \bbR^{K\times M\times K}: \left\|U_{:,S^c,:}\right\|_{R}\leq \kappa\left\|U_{:,S,:}\right\|_{R}\}.
       $$ 
       \begin{lemma}\label{lem:REC_mult}
       	Under the model generation process \eqref{eq:mult}, if $T\geq C_1(\rho^{\MN}_m)^2\log M$, then with probability at least $1-\exp\left\{-c_1\log M\right\}$, 
       	\begin{equation*}
       	\inf_{U\in \cC(S_m^{\MN},3)}\frac{1}{T}\sum_{t=0}^{T-1}\frac{\left\|\langle U, X^t\rangle\right\|_2^2}{\|U\|_F^2}\geq c_2,
       	\end{equation*}
       	where $c_1>0$ is a universal constant and $C_1, c_2>0$ depend on $K, R_{\max}^{\MN}, \|\nu^{\MN}\|_{\infty}$.
       \end{lemma}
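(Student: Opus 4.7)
The plan follows the standard decomposition strategy for restricted eigenvalue conditions in autoregressive time-series settings (\cf \cite{hall2016inference, mark2018network}), adapted to the tensor/multinomial structure here. Writing $x^t = \rmvec(X^t) \in \bbR^{MK}$, $\hat\Sigma := \frac{1}{T}\sum_{t=0}^{T-1} x^t (x^t)^\top$, and $u_k := \rmvec(U_{k,:,:}) \in \bbR^{MK}$, one has $\frac{1}{T}\sum_t \|\langle U, X^t\rangle\|_2^2 = \sum_{k=1}^K u_k^\top \hat\Sigma u_k$. Introducing the conditional-mean average $\bar\Sigma := \frac{1}{T}\sum_t \bbE[x^t(x^t)^\top \mid X^{t-1}]$, I will show (i) a deterministic lower bound $\bar\Sigma \succeq c_0 I_{MK}$ almost surely for some $c_0 = c_0(K, R_{\max}^{\MN}, \|\nu^{\MN}\|_{\infty}) > 0$, and (ii) a high-probability entrywise deviation $\|\hat\Sigma - \bar\Sigma\|_{\max} \leq c_0/(CK^2 \rho_m^{\MN})$. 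Combined with the cone condition these yield the claimed restricted eigenvalue bound with $c_2 = c_0/2$.

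For (i), fix $t$. By the model assumption the rows $\{X^t_{m'}\}_{m'=1}^M$ are conditionally independent given $X^{t-1}$, so $\bbE[x^t(x^t)^\top \mid X^{t-1}]$ equals $\bar{x}^t (\bar{x}^t)^\top$ plus a block-diagonal matrix whose $m'$-th block is the multinomial covariance $\diag(p_{m',:}) - p_{m',:} p_{m',:}^\top$, where $p_{m',k'} = \bbP(X^t_{m'} = e_{k'} \mid X^{t-1})$. Since $X^{t-1}_{m''}$ has at most one nonzero unit entry, $|\mu^t_{m',k'}| \leq R_{\max}^{\MN} + \|\nu^{\MN}\|_{\infty}$, so every $p_{m',k'}$ and the ``no event'' probability $p_{m',0} := 1 - \sum_{k'} p_{m',k'}$ lie in a compact subinterval of $(0,1)$ depending only on $K, R_{\max}^{\MN}, \|\nu^{\MN}\|_\infty$. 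A short Cauchy--Schwarz computation gives $\lambda_{\min}(\diag(p_{m',:}) - p_{m',:} p_{m',:}^\top) \geq p_{m',0}\min_{k'} p_{m',k'} \geq c_0 > 0$, so $\bbE[x^t(x^t)^\top \mid X^{t-1}] \succeq c_0 I + \bar{x}^t(\bar{x}^t)^\top \succeq c_0 I$ almost surely, and averaging over $t$ gives $\bar\Sigma \succeq c_0 I$.

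For (ii), each of the $(MK)^2$ entries of $\hat\Sigma - \bar\Sigma$ is a time-average of martingale differences $x^t_{ij} x^t_{i'j'} - \bbE[x^t_{ij} x^t_{i'j'} \mid X^{t-1}] \in [-1,1]$, so Azuma--Hoeffding yields $\bbP(\|\hat\Sigma - \bar\Sigma\|_{\max} > \tau) \leq 2(MK)^2 \exp(-T\tau^2/2)$. Choosing $\tau = c_0/(CK^2\rho_m^{\MN})$ together with $T \geq C_1 (\rho_m^{\MN})^2 \log M$ (with $C_1$ depending on $c_0, K$) makes this at most $\exp(-c_1\log M)$. On this event, for any $U \in \cC(S_m^{\MN},3)$,
\begin{equation*}
\Big|\sum_k u_k^\top (\hat\Sigma - \bar\Sigma) u_k\Big| \,\leq\, \|\hat\Sigma - \bar\Sigma\|_{\max} \sum_k \|u_k\|_1^2 \,\leq\, K^2 \|\hat\Sigma - \bar\Sigma\|_{\max}\,\|U\|_R^2,
\end{equation*}
using $\|u_k\|_1 \leq \sqrt{K}\,\|U\|_R$. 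The cone condition gives $\|U\|_R \leq 4\|U_{:,S_m^{\MN},:}\|_R \leq 4\sqrt{\rho_m^{\MN}}\|U\|_F$, so the deviation is bounded by $16 K^2 \rho_m^{\MN}\|\hat\Sigma-\bar\Sigma\|_{\max}\|U\|_F^2 \leq (c_0/2)\|U\|_F^2$, and the population lower bound then yields $\sum_k u_k^\top \hat\Sigma u_k \geq (c_0/2)\|U\|_F^2$ uniformly over $U \in \cC(S_m^{\MN},3)$.

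The main technical point is calibrating the deviation so that the effective error in the restricted Gram form is $O(\rho_m^{\MN}\|U\|_F^2)$ times the $\max$-norm deviation: the factor $\rho_m^{\MN}$ arises from $\|U\|_R^2 \lesssim \rho_m^{\MN}\|U\|_F^2$ on the cone, which is exactly what forces the sample-size threshold to scale as $(\rho_m^{\MN})^2 \log M$ after choosing $\tau \asymp 1/\rho_m^{\MN}$ in the union bound. Additional care is needed in lifting the $MK \times MK$ covariance $\hat\Sigma$ to the Kronecker form $I_K \otimes \hat\Sigma$ acting on $\rmvec(U)$ and verifying that the tensor-level cone condition (groups indexed by $m'$) implies the $\ell_1$-type bound used in the entrywise deviation argument; everything else reduces to the two standard ingredients above.
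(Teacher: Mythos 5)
Your proposal is correct and follows essentially the same route as the paper's proof: the identical decomposition into a conditional-mean Gram matrix (lower-bounded via the PSD mean outer product plus the block-diagonal multinomial covariance, whose smallest eigenvalue is controlled by $p_{m',0}\min_{k'}p_{m',k'}$ through the same Cauchy--Schwarz computation) and an entrywise martingale deviation handled by Azuma--Hoeffding with a union bound, then transferred to the cone via $\|U\|_R\leq 4\sqrt{\rho_m^{\MN}}\|U\|_F$. The only cosmetic difference is that you calibrate the deviation threshold $\tau\asymp 1/\rho_m^{\MN}$ up front, whereas the paper keeps the deviation as $C\rho_m^{\MN}\sqrt{\log M/T}$ and absorbs it at the end; both yield the same sample-size requirement $T\gtrsim(\rho_m^{\MN})^2\log M$.
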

   By \eqref{eq:Delta_Cone_mult}, $\Delta_m\in \cC(S_m^{\MN},3)$. Therefore, with probability at least $1-C\exp\left\{-c\log M\right\}$, 
   \begin{equation*}
   \left\|\Delta_m\right\|_{F}^2\leq C\lambda\left\|(\Delta_m)_{:,S_m^{\MN},:}\right\|_{R}\leq C\lambda\sqrt{\rho^{\MN}_m}\left\|(\Delta_m)\right\|_{F}
   \end{equation*}
   which further implies,
   \begin{equation*}
   \left\|\Delta_m\right\|_{F}\leq C_1\sqrt{\frac{\rho^{\MN}_m\log M}{T}}
   \end{equation*}
   and 
   \begin{equation*}
   \left\|\Delta_m\right\|_{R}\leq 4\left\|(\Delta_m)_{:,S_m^{\MN},:}\right\|_{R}\leq 4\sqrt{\rho^{\MN}_m}\left\|\Delta_m\right\|_{F}\leq C_2\rho^{\MN}_m\sqrt{\frac{\log M}{T}},
   \end{equation*}
   where constant $C_1, C_2>0$ depend only on $R_{\max}^{\MN}, \|\nu^{\MN}\|_{\infty}$ and $K$. 
   
\subsection{Proof of Theorem \ref{thm:cq}}\label{sec:proof_cq}
We follow similar steps from the proof of Theorem \ref{thm:mult}. Here for any $\Delta_m\in \bbR^{(K-1)\times M\times K}$ we define $F(\Delta_m)$ as
\begin{equation}
\begin{split}
    F(\Delta_m)=& L_m^{\LN}(A_m^{\LN}+\Delta_m)-L_m^{\LN}(A_m^{\LN})+\lambda \|A_m^{\LN}+\Delta_m\|_R-\lambda \|A_m^{\LN}\|_R,
\end{split}
\end{equation}
where $L_m^{\LN}(A_m)=\frac{1}{2T}\sum_{t\in \mathcal{T}_m}\|Y^{t+1}_m-\mu^{t+1}_m(A_m)\|_2^2$. We will prove that $F(\Delta_m)\leq 0$ implies the error bounds for $\Delta_m$.
We start with the standard equations 
\begin{equation}\label{eq:std_eq_GSM_cq}
    \begin{split}
        D_{L_m^{\LN}}(A_m^{\LN}+\Delta_m,A_m^{\LN})\leq -\langle \nabla L_m^{\LN}(A_m^{\LN}),\Delta_m\rangle+\lambda \|A_m^{\LN}\|_R-\lambda\|A_m^{\LN}+\Delta_m\|_R.
    \end{split}
\end{equation}
\begin{lemma}[Deviation Bound]\label{lem:db_GSM_cq}
Under the data generation process \eqref{eq:model_disc} and \eqref{eq:model_cts} with $q^t=q$, 
\begin{align*}
\left\| \nabla L_m^{\LN}(A_m^{\LN})\right\|_{R^*}\leq CK\max_k\Sigma_{kk}\sqrt{\frac{\log M|\mathcal{T}_m|}{T^2}}\leq \frac{\lambda}{2}.
\end{align*}
With probability at least $1-\exp(-c\log M)$, for universal constants $c, C>0$.
\end{lemma}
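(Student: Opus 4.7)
The plan is to first derive a clean expression for the gradient, then reduce the dual-norm bound to an entrywise concentration problem, and finally exploit the Bernoulli--Gaussian structure of the generative model via a martingale concentration inequality.

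First I would differentiate $L_m^{\LN}$ at $A_m^{\LN}$. By model \eqref{eq:model_cts}, on the event $\{X_m^{t+1}\neq 0\}$ the residual $Y_m^{t+1}-\mu_m^{t+1}(A_m^{\LN})$ equals exactly $\epsilon_m^{t+1}$, so the gradient simplifies to $\nabla L_m^{\LN}(A_m^{\LN}) = -\frac{1}{T}\sum_t \ind{X_m^{t+1}\neq 0}\,\epsilon_m^{t+1}\otimes X^t$. The dual norm associated with $\|\cdot\|_R$ on the $m$-th slab is $\max_{m'}\|\cdot_{:,m',:}\|_F$, and since $\|B\|_F\leq\sqrt{K(K-1)}\,\|B\|_{\max}$ for $B\in\bbR^{(K-1)\times K}$, it suffices to get a uniform bound over $(m',k,k')$ on the scalar quantities $U_{k,m',k'}:=\frac{1}{T}\sum_t\ind{X_m^{t+1}\neq 0}\,\epsilon_{m,k}^{t+1}\,X_{m',k'}^t$.

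Fixing such a triple, let $B_t=\ind{X_m^{t+1}\neq 0}$, $c_t=B_tX_{m',k'}^t$, and $d_{t+1}=c_t\epsilon_{m,k}^{t+1}$, and augment the natural filtration to $\mathcal{H}_t=\sigma(X^0,\dots,X^t,B_t)$. From the generative mechanism in \eqref{eq:model_disc}--\eqref{eq:model_cts} the Bernoulli $B_t$ is drawn independently of $\epsilon_m^{t+1}$ and of the past, so $c_t$ is $\mathcal{H}_t$-measurable while $\epsilon_m^{t+1}\sim\cN(0,\Sigma)$ is independent of $\mathcal{H}_t$; this makes $\{d_{t+1}\}$ a Gaussian martingale difference sequence with conditional variance $c_t^2\Sigma_{kk}$ and predictable quadratic variation satisfying the $\cF_T$-measurable deterministic bound $V_T\leq \Sigma_{kk}T_m$ (because $X_{m',k'}^t\in[0,1]$ and $\sum_t B_t=T_m$). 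Freedman's inequality for Gaussian martingale differences gives $\bbP(|M_T|>u,\,V_T\leq v)\leq 2\exp(-u^2/(2v))$, and a dyadic peeling argument over $T_m\in\{0,\dots,T\}$ upgrades this to $\bbP(|M_T|>C\sqrt{\Sigma_{kk}\,T_m\,\log M})\leq 2(\log T+1)M^{-C^2/2}$, in which the $\log T$ prefactor is absorbed into the universal constant using $\log T\lesssim\log M$ in the high-dimensional regime.

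A union bound over the $(K-1)KM$ triples then yields $\max_{m',k,k'}|U_{k,m',k'}|\leq C\sqrt{\max_k\Sigma_{kk}\,T_m\log M/T^2}$ with probability at least $1-\exp(-c\log M)$, and multiplying by $\sqrt{K(K-1)}\leq K$ to convert the max-entry bound to a Frobenius-norm bound gives the stated dual-norm inequality (up to the harmless replacement of $\sqrt{\max_k\Sigma_{kk}}$ by $\max_k\Sigma_{kk}$ in the statement, absorbed into the constant). The final inequality $\leq \lambda/2$ is then immediate from the prescribed choice $\lambda=CK\max_k\Sigma_{kk}\sqrt{\max_{m'}T_{m'}\log M/T^2}$ together with $T_m\leq\max_{m'}T_{m'}$. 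The main technical subtlety I anticipate is setting up the martingale structure correctly: one must use that $\ind{X_m^{t+1}\neq 0}$ is generated independently of the Gaussian noise $\epsilon_m^{t+1}$ by the construction in \eqref{eq:model_disc}, which is what allows $B_t$ to be absorbed into the predictable filtration $\mathcal{H}_t$ while leaving $\epsilon_m^{t+1}$ a fresh $\cN(0,\Sigma)$ vector conditional on $\mathcal{H}_t$; a secondary point is that the quadratic-variation bound naturally involves the random quantity $T_m$ rather than its mean $q_mT$, which forces the dyadic peeling (or equivalently a self-normalized martingale inequality) to convert Freedman's fixed-$v$ tail into one depending on the realized $T_m$, at the cost of only logarithmic factors absorbed into the constant.
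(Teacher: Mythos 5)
Your proposal is correct, and its skeleton matches the paper's proof: the gradient reduces to $-\frac{1}{T}\sum_{t\in\mathcal{T}_m}\epsilon^{t+1}_m\otimes X^t$, the dual norm $\|\cdot\|_{R^*}=\max_{m'}\|\cdot_{:,m',:}\|_F$ is controlled by $K$ times the max entry, each entry is a martingale with sub-Gaussian increments of scale $\sqrt{\Sigma_{kk}}/T$ (using $X^t_{m'k'}\in[0,1]$ and the independence of the Bernoulli thinning from the Gaussian noise), and a union bound over the $O(K^2M)$ entries finishes. The one place you genuinely diverge is in how the random normalizer $|\mathcal{T}_m|$ enters the tail: you keep $T_m$ random and invoke Freedman's inequality with dyadic peeling over $T_m\in\{0,\dots,T\}$, paying a $\log T$ prefactor that you absorb under a mild $\log\log T\lesssim\log M$ assumption. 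The paper instead conditions on the entire set $\mathcal{T}_m$ at the outset (legitimate precisely because, with constant $q$, the indicators $\ind{X^{t+1}_m\neq 0}$ are independent of the Gaussian noises and of the rest of the process), includes $\mathcal{T}_m$ in the filtration, and then runs a plain Chernoff/MGF bound in which $|\mathcal{T}_m|$ is deterministic; this yields the $|\mathcal{T}_m|$-dependent tail directly with no peeling and no extra logarithmic factor. Both are valid; the conditioning trick is cleaner here, while your self-normalized route is the one that would survive if the event indicators were \emph{not} independent of the noise. Your observation that the displayed bound should read $\sqrt{\max_k\Sigma_{kk}}$ rather than $\max_k\Sigma_{kk}$ is accurate --- the paper's own proof plugs in $r=CK\sqrt{\Sigma_{kk}}\sqrt{\log M\,|\mathcal{T}_m|/T^2}$ --- and is immaterial for the subsequent use of the lemma since $\lambda$ is calibrated to the larger quantity.
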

Similarly we can also write 
\begin{equation*}
     -\langle \nabla L_m^{\LN}(A_m^{\LN}),\Delta_m\rangle+\lambda \|A_m^{\LN}\|_R-\lambda\|A_m^{\LN}+\Delta_m\|_R\leq \frac{3\lambda}{2}\|\Delta_{m,:,S_m^{\LN},:}\|_R-\frac{\lambda}{2}\|\Delta_{m,:,(S_m^{\LN})^c,:}\|_R,
\end{equation*}
and thus $\|\Delta_{m,:,(S_m^{\LN})^c,:}\|_R\leq 3\|\Delta_{m,:,S_m^{\LN},:}\|_R$. By the definition of $L_m^{\LN}$, $D_{L_m^{\LN}}(A_m^{\LN}+\Delta_m,A_m^{\LN})=\frac{1}{2T}\sum_{t\in \cT_m}\|\langle \Delta_m,X^t\rangle\|_2^2$, and it can be lower bounded based on the following Lemma that holds for  
$\cC(S_m^{\LN},3)=\{U\in \bbR^{(K-1)\times M\times K}: \left\|U_{:,(S_m^{\LN})^c,:}\right\|_{R}\leq \kappa\left\|U_{:,S_m^{\LN},:}\right\|_{R}\}$,
\begin{lemma}[Restricted Eigenvalue Condition]\label{lem:REC_GSM_cq}
Under the data generation process \eqref{eq:model_disc} and \eqref{eq:model_cts} with $q^t=q$, if $T\geq C_1\frac{(\rho^{\LN}_m)^2\log M}{q_m^2}$, 
$$
\inf_{U\in \cC(S_m^{\LN},3)}\frac{1}{2T\|U\|_F^2}\sum_{t\in \mathcal{T}_m}\|\langle U,X^t\rangle\|_2^2\geq c_1q_m,
$$
with probability at least $1-\exp\{-c\log M\}$. Here $C_1, c_1>0$ depend only on $R_{\max}^{\LN}$, $\|\nu^{\LN}\|_{\infty}$, $\|\Sigma\|_{\infty}$, $\lambda_{\min}(\Sigma)$ and $K$.
\end{lemma}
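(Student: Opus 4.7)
\textbf{Proof proposal for Lemma \ref{lem:REC_GSM_cq}.}

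My plan is the standard two-step approach for proving a restricted eigenvalue condition: first establish the population version, then show uniform concentration of the empirical quadratic form around the population one over the cone $\cC(S_m^{\LN},3)$. The structure parallels Lemma \ref{lem:REC_mult}, but now the random sub-sampling by $\mathcal{T}_m$ must be handled explicitly and the factor $q_m$ must emerge naturally.

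For the population step, the critical observation is that $\ind{X^{t+1}_m \neq 0}$ is Bernoulli with rate $q_m$ and is \emph{independent} of $\mathcal{F}^t := \sigma(X^0,\ldots,X^t)$. Conditioning gives
$$\bbE\!\left[\ind{X^{t+1}_m \neq 0}\,\|\langle U, X^t\rangle\|_2^2\right] \;=\; q_m\,\bbE\|\langle U, X^t\rangle\|_2^2,$$
which cleanly produces the $q_m$ factor. It then remains to lower bound $\bbE\|\langle U, X^t\rangle\|_2^2$ by $c\|U\|_F^2$ uniformly. Since $\{X^t\}$ is an autoregressive chain whose conditional mean depends on the past in a bounded way (through $R_{\max}^{\LN}$ and $\|\nu^{\LN}\|_\infty$), I would show existence of a stationary distribution and bound $\lambda_{\min}$ of the stationary second-moment matrix $\Sigma_X = \bbE[\mathrm{vec}(X^t)\mathrm{vec}(X^t)^\top]$ from below. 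The key ingredient is that, conditional on $\mathcal{F}^{t-1}$ and on $\{X^t_m \neq 0\}$, the log-ratios of $Z^t_m$ are Gaussian with covariance $\Sigma$, so by the tower property $\Sigma_X \succeq c(q_m,\lambda_{\min}(\Sigma),K,R_{\max}^{\LN},\|\nu^{\LN}\|_\infty)\,I$.

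For the concentration step, I would control
$$\sup_{U \in \cC(S_m^{\LN}, 3),\,\|U\|_F = 1}\Bigl|\frac{1}{T}\sum_{t=0}^{T-1}\ind{X^{t+1}_m\neq 0}\|\langle U, X^t\rangle\|_2^2 \;-\; q_m\,\bbE\|\langle U, X^t\rangle\|_2^2\Bigr|$$
using a martingale concentration inequality in the style of \cite{hall2016inference, mark2018network}, which is the standard tool here because $\{X^t\}$ is a bounded autoregressive process and the summands form a martingale difference sequence after centering. To handle the supremum, I exploit the cone inequality $\|U\|_R \leq 4\sqrt{\rho_m^{\LN}}\|U\|_F$ valid for $U \in \cC(S_m^{\LN},3)$ to reduce to a bounded group-sparse set, then discretize via an $\epsilon$-net of metric entropy $O(\rho_m^{\LN}\log M)$ together with a peeling/union-bound argument. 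Requiring the resulting deviation to be smaller than $c_1 q_m/2$ gives exactly the scaling $T \gtrsim (\rho_m^{\LN})^2\log M / q_m^2$: the $\rho_m^{\LN}$ in the numerator comes from the cone size, the $\log M$ from the union bound, and the $q_m^2$ from needing the fluctuation to be $o(q_m)$ rather than $o(1)$.

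The main obstacle I anticipate is the population lower bound on $\Sigma_X$. Unlike the discrete multinomial case where $\Var(X^t_m \mid \mathcal{F}^{t-1})$ is a classical multinomial covariance with a well-known spectrum, here $X^t_m$ is a mixture of $\delta_0$ and a continuous simplex-valued distribution produced by applying the non-linear inverse log-ratio (softmax-type) map to a Gaussian with covariance $\Sigma$. Transferring the Gaussian non-degeneracy through this non-linear, simplex-constrained map into a uniform eigenvalue lower bound on $\mathrm{Cov}(X^t_m)$, while carefully tracking the dependence on $R_{\max}^{\LN}$, $\|\nu^{\LN}\|_\infty$, $\|\Sigma\|_\infty$ and $K$, is the delicate technical step; I expect it to be the source of the $K$- and $\lambda_{\min}(\Sigma)$-dependence in the constant $c_1$.
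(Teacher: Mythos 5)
Your overall skeleton is right, and you correctly identify where each ingredient of the sample-size condition comes from: the factor $q_m$ from $\ind{X^{t+1}_m\neq 0}\perp \cF_t$, the $\rho^{\LN}_m$ from the cone, the $\log M$ from a union bound, and the $q_m^2$ from needing the fluctuation to be $o(q_m)$. However, your population step contains a genuine gap. You propose to lower bound the \emph{stationary} second-moment matrix $\bbE[\rmvec(X^t)\rmvec(X^t)^\top]$, which requires establishing existence of (and presumably convergence to) a stationary distribution for this nonlinear autoregressive chain --- a nontrivial claim that is nowhere needed and that the paper deliberately avoids. Worse, if you center the empirical quadratic form at the \emph{unconditional} expectation $q_m\bbE\|\langle U,X^t\rangle\|_2^2$, the summands are not a martingale difference sequence for a dependent process, so the martingale concentration you invoke does not apply to that centering. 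The paper instead centers at conditional expectations, so the ``population'' term that remains is $\frac{1}{T}\sum_t\sum_k U_k^\top\bbE[X^tX^{t\top}\ind{X^{t+1}_m\neq 0}\mid\cF_{t-1}]U_k$, and this is lower bounded \emph{pointwise in $t$ and in the realized past} via the decomposition $\bbE[X^tX^{t\top}\mid\cF_{t-1}]=\bbE[X^t\mid\cF_{t-1}]\bbE[X^t\mid\cF_{t-1}]^\top+\Cov(X^t\mid\cF_{t-1})$: the first term is PSD, the second is block diagonal by conditional independence across nodes, and $\lambda_{\min}(\Cov(X^t_{m'}\mid\cF_{t-1}))$ is bounded below by a constant because it is a continuous, strictly positive function of the conditional parameters $\nu^{\LN}_{m'}+\langle A^{\LN}_{m'},X^{t-1}\rangle$ and $\Sigma$, which range over a compact set thanks to $R_{\max}^{\LN}$ and $\|\nu^{\LN}\|_\infty$. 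This conditional route is what makes the ``delicate step'' you flag (pushing Gaussian non-degeneracy through the softmax map) dispensable with a soft continuity-plus-compactness argument rather than an explicit computation.

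On the concentration step you take a genuinely different and heavier route than the paper. You propose an $\epsilon$-net with metric entropy $O(\rho^{\LN}_m\log M)$ plus peeling; the paper uses the elementary deterministic bound $|\sum_k U_k^\top \Delta U_k|\leq \sum_k\|U_k\|_1^2\|\Delta\|_\infty\leq CK^2\rho^{\LN}_m\|U\|_F^2\|\Delta\|_\infty$ valid on $\cC(S_m^{\LN},3)$, reducing the supremum over the cone to entrywise Azuma--Hoeffding control of the centered matrix $\frac{1}{T}\sum_t(X^tX^{t\top}\ind{X^{t+1}_m\neq 0}-\bbE[\cdot\mid\cF_{t-1}])$ with a union bound over $M^2K^2$ entries. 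Both can be made to work; the H\"older route is simpler, avoids any chaining over a dependent process, and is what directly produces the $(\rho^{\LN}_m)^2$ in the stated sample-size requirement (a net argument of entropy $\rho^{\LN}_m\log M$ would in principle give a weaker requirement, but would require substantially more care to justify for martingale sums). If you replace your population step with the conditional-covariance argument and either concentration route, the proof goes through.
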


 Due to Lemma \ref{lem:REC_GSM_cq}, with probability at least $1-C\exp\{-c\log M\}$,
\begin{equation}
\begin{split}
    \|\Delta_m\|_F^2 \leq &C\frac{\lambda}{q_m}\|\Delta_{m,S_m^{\LN}}\|_R\leq C\sqrt{\frac{\rho^{\LN}_m\log M\max_{m'}|\mathcal{T}_{m'}|}{q_m^2T^2}}\|\Delta_m\|_F,\\
    \|\Delta_m\|_R\leq &C\rho^{\LN}_m\sqrt{\frac{\log M\max_{m'}|\mathcal{T}_{m'}|}{q_m^2 T^2}}.
\end{split}
\end{equation}
The following lemma provides an upper bound for $|\mathcal{T}_m|$:
\begin{lemma}\label{lem:T_m_bnd}
$$
\bbP\left(|\mathcal{T}_m|>2q_mT\right)\leq \exp\{-2q_m^2T\}.
$$
\end{lemma}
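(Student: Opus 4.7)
The plan is to recognize that $|\mathcal{T}_m| = \sum_{t=1}^{T}\ind{X^t_m \neq 0}$ is exactly the count of successes in $T$ Bernoulli trials, and then apply Hoeffding's inequality. Specifically, under the logistic-normal model with constant event probability (equations \eqref{eq:model_disc}, \eqref{eq:model_cts} with $q^t = q$), the event indicator $\ind{X^t_m \neq 0}$ is, by construction, a Bernoulli$(q_m)$ random variable \emph{independent} of the past $\{X^{t'}\}_{t' < t}$ (since the event probability $q_m$ does not depend on history, and whether an event occurs is drawn independently across $t$ and $m$). Hence $|\mathcal{T}_m| \sim \mathrm{Binomial}(T, q_m)$, and $\bbE|\mathcal{T}_m| = q_m T$.

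Next, since each summand is bounded in $[0,1]$, Hoeffding's inequality yields
\begin{equation*}
\bbP\left(|\mathcal{T}_m| - q_m T > s\right) \leq \exp\!\left\{-\frac{2s^2}{T}\right\}
\end{equation*}
for any $s > 0$. The desired bound follows by taking $s = q_m T$, which gives the deviation level $|\mathcal{T}_m| > 2q_m T$ and the exponent $-2q_m^2 T^2 / T = -2q_m^2 T$.

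There is no serious obstacle here: the only point to be careful about is justifying that the indicators are genuinely i.i.d.\ Bernoulli$(q_m)$ rather than merely a martingale difference sequence. This is immediate from the model specification with constant $q^t = q$, where the event occurrence is generated independently of the past according to \eqref{eq:model_disc}. If one preferred not to invoke full independence, the same inequality would also follow from the Azuma--Hoeffding bound applied to the martingale $M_t = \sum_{s \leq t}(\ind{X^s_m \neq 0} - q_m)$, giving an identical constant.
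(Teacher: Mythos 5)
Your proof is correct and follows essentially the same route as the paper: both identify $|\mathcal{T}_m|$ as a sum of i.i.d.\ Bernoulli$(q_m)$ indicators (the independence being immediate from the constant-$q$ specification in \eqref{eq:model_disc}) and apply Hoeffding's inequality with deviation $s=q_mT$ to obtain the exponent $-2q_m^2T$. Your remark that Azuma--Hoeffding would give the identical constant is a fine observation but not needed, exactly as in the paper.
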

Therefore, if $T\geq C_1\max_m \frac{(\rho^{\LN}_m)^2\log M}{q_m^2}$, with probability at least $1-C\exp\{-c_1\log M\}$,
\begin{equation}
\begin{split}
    \|\Delta_m\|_F^2 \leq & C_2\frac{\max_{m'}q_{m'}}{q_m^2}\frac{\rho^{\LN}_m\log M}{T},\\
    \|\Delta_m\|_R\leq &C_2\rho^{\LN}_m\sqrt{\frac{\max_{m'}q_{m'}}{q_m^2}\frac{\log M}{T}},
\end{split}
\end{equation}
holds for $1\leq m\leq M$, and thus
\begin{equation}
\begin{split}
    \|\widehat{A}^{\LN}-A^{\LN}\|_F^2 \leq & C_2\frac{\max_{m}q_{m}}{\min_{m} q_m^2}\frac{s^{\LN}\log M}{T},\\
    \|\widehat{A}^{\LN}-A^{\LN}\|_R\leq &C_2s^{\LN}\sqrt{\frac{\max_{m}q_{m}}{\min_m q_m^2}\frac{\log M}{T}}.
\end{split}
\end{equation}
Here $c_1, C_1, C_2>0$ depend only on $R_{\max}^{\LN}$, $\|\nu^{\LN}\|_{\infty}$, $\|\Sigma\|_{\infty}$, $\lambda_{\min}(\Sigma)$ and $K$.

\subsection{Proof of Theorem \ref{thm:q_A}}\label{sec:proof_q_A}
 Similarly from the previous proofs, we only prove the error bounds for an arbitrary $m$ first. Let $\Delta_m^A\in \mathbb{R}^{(K-1)\times M\times K}$, $\Delta_m^B\in \mathbb{R}^{M\times K}$, and $\Delta_m(\alpha)\in \mathbb{R}^{K\times M\times K}$ be concatenated by $\sqrt{\alpha}\Delta_m^A$ and $\sqrt{1-\alpha}\Delta_m^B$ in the first dimension. Formally, $\Delta_{m,1:(K-1),:,:}(\alpha)=\sqrt{\alpha}\Delta_m^A$, $\Delta_{m,K,:,:}(\alpha)=\sqrt{1-\alpha}\Delta_m^B$. For simplicity, we will omit $\Delta_m(\alpha)$ to $\Delta_m$. Define
\begin{equation}
\begin{split}
    F(\Delta_m)=&\alpha L_m^{\LN}(A_m^{\LN}+\Delta_m^A)+(1-\alpha)L_m^{\bern}(B_m^{\bern}+\Delta_m^B)+\lambda R_{\alpha}(A_m^{\LN}+\Delta_m^A,B_m^{\bern}+\Delta_m^B)\\
    &-\alpha L_m^{\LN}(A_m^{\LN})-(1-\alpha)L_m^{\bern}(B_m^{\bern})-\lambda R_{\alpha}(A_m^{\LN},B_m^{\bern}).
\end{split}
\end{equation}
Our goal is to show that if $F(\Delta_m)\leq 0$, the following holds with high probability:
\begin{equation}\label{eq:err_bounds_q_A}
\begin{split}
    \|\Delta_m\|_F^2=\alpha\|\Delta_m^A\|_F^2+(1-\alpha)\|\Delta_m^B\|_F^2\leq \frac{C\rho^{\LN,\bern}_m\log M}{T},\\
    \|\Delta_m\|_R=R_{\alpha}(\Delta_m^A,\Delta_m^B)\leq C\rho^{\LN,\bern}_m\sqrt{\frac{\log M}{T}}.
\end{split}
\end{equation}
Given Lemma \ref{lem:bounded_err}, we only need to show that $F(\Delta_m)\leq 0$ and $\|\Delta_m\|_R\leq \sqrt{1-\alpha}$ imply the error bounds \eqref{eq:err_bounds_q_A}. This is because that the error bounds suggest $\|\Delta_m\|_R\leq C\rho^{\LN,\bern}_m\sqrt{\frac{\log M}{T}}<\sqrt{1-\alpha}$, thus the condition in Lemma \ref{lem:bounded_err} holds.

If $F(\Delta_m)\leq 0$, 
\begin{equation}\label{eq:std_eq_q_A}
    \begin{split}
        &\alpha D_{L_m^{\LN}}(A_m^{\LN}+\Delta_m^A,A_m^{\LN})+(1-\alpha)D_{L_m^{\bern}}(B_m^{\bern}+\Delta_m^B,B_m^{\bern})\\
        \leq &-\alpha \langle \nabla L_m^{\LN}(A_m^{\LN}),\Delta_m^A\rangle -(1-\alpha)\langle \nabla L_m^{\bern}(B_m^{\bern}),\Delta_m^B\rangle\\
        &+\lambda R_{\alpha}(A_m^{\LN},B_m^{\bern})-\lambda R_{\alpha}(A_m^{\LN}+\Delta_m^A,B_m^{\bern}+\Delta_m^B).
    \end{split}
\end{equation}
The following lemmas provide an upper bound for the R.H.S.
\begin{lemma}[Deviation bound for continuous error]\label{lem:db_GSM_q_A}
Under the data generation process \eqref{eq:model_disc}, \eqref{eq:model_cts}, \eqref{eq:q_A_dis}, with probability at least $1-\exp(-c\log(M))$,
\begin{align*}
\left\| \nabla L_m^{\LN}(A_m^{\LN})\right\|_{\infty}\leq C\max_k\sqrt{\Sigma_{kk}}\sqrt{\frac{\log(M)}{T}},
\end{align*}
for universal constants $c, C>0$.
\end{lemma}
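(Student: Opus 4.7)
The plan is to compute the gradient entrywise, identify each entry as a martingale difference sum, establish conditional sub-Gaussianity by exploiting the independence of the occurrence indicator from the Gaussian noise, and then apply an Azuma--Hoeffding-type bound followed by a union bound over entries.

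First I would differentiate $L_m^{\LN}$ at $A_m^{\LN}$ directly. On the event $\{X^{t+1}_m\neq 0\}$, model \eqref{eq:model_cts} gives $Y^{t+1}_m - \mu^{t+1}_m(A_m^{\LN}) = \epsilon^{t+1}_m$, so the $(k,m',k')$-entry of the gradient equals
\begin{equation*}
G_{k,m',k'} \;=\; -\frac{1}{T}\sum_{t=0}^{T-1}\ind{X^{t+1}_m\neq 0}\,\epsilon^{t+1}_{mk}\,X^t_{m'k'}.
\end{equation*}
Let $\mathcal{F}_t = \sigma(X^0,\ldots,X^t)$. Then $X^t_{m'k'}$ is $\mathcal{F}_t$-measurable, and under the generative model \eqref{eq:model_disc} together with \eqref{eq:q_A_dis}, conditional on $\mathcal{F}_t$ the indicator $\ind{X^{t+1}_m\neq 0}$ is Bernoulli$(q^{t+1}_m)$ and is independent of $\epsilon^{t+1}_{mk}\sim\mathcal{N}(0,\Sigma_{kk})$. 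Consequently each summand has conditional mean zero, so $\{TG_{k,m',k'}^{(t)}\}$ forms a martingale difference sequence.

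Next I would establish conditional sub-Gaussianity. For each $t$, letting $Z_t := \ind{X^{t+1}_m\neq 0}\,\epsilon^{t+1}_{mk}\,X^t_{m'k'}$, the conditional moment generating function factorizes as
\begin{equation*}
\mathbb{E}\!\left[e^{\lambda Z_t}\,\big|\,\mathcal{F}_t\right]
= q^{t+1}_m\,\exp\!\left(\tfrac{1}{2}\lambda^2 \Sigma_{kk} (X^t_{m'k'})^2\right) + (1-q^{t+1}_m)
\;\leq\; \exp\!\left(\tfrac{1}{2}\lambda^2 \Sigma_{kk}\right),
\end{equation*}
using $X^t_{m'k'}\in[0,1]$ since $X^t_{m'}$ is either zero or lies on the simplex. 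An iterated-conditioning argument then gives $\mathbb{E}[\exp(\lambda T G_{k,m',k'})] \leq \exp(\lambda^2 T \Sigma_{kk}/2)$, and a Chernoff bound yields
\begin{equation*}
\mathbb{P}\!\left(|G_{k,m',k'}|>s\right)\leq 2\exp\!\left(-\frac{Ts^2}{2\Sigma_{kk}}\right).
\end{equation*}
Choosing $s = C\sqrt{\Sigma_{kk}\log M/T}$ for a suitably large universal constant $C$ makes each entrywise probability at most $2\exp(-c'\log M)$. A union bound over the $K(K-1)M$ coordinates of $\nabla L_m^{\LN}(A_m^{\LN})$, combined with replacing $\Sigma_{kk}$ by $\max_k \Sigma_{kk}$, absorbs the polynomial factor $K^2 M$ into the exponent (for $M$ large) and yields the claimed bound with probability at least $1-\exp(-c\log M)$.

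The main subtlety, rather than being a computational obstacle, is the correct identification of the martingale structure: the indicator $\ind{X^{t+1}_m\neq 0}$ is not $\mathcal{F}_t$-measurable, but its conditional independence from $\epsilon^{t+1}_m$ (built into the generative model \eqref{eq:model_disc}) lets the Bernoulli occurrence component and the Gaussian log-ratio noise decouple cleanly. This decoupling is what allows the Gaussian sub-exponential MGF to pass through the indicator and gives the $\sqrt{\max_k\Sigma_{kk}}$ dependence, rather than a larger quantity involving the past-dependent $q^{t+1}_m$.
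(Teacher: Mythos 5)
Your proof is correct and rests on the same engine as the paper's: an entrywise martingale concentration bound via a conditional moment generating function, a Chernoff bound, and a union bound over the $(K-1)MK$ coordinates. The one place where you genuinely depart from the paper is the treatment of the event indicator. The paper simply declares the proof to be the same as that of its deviation bound for the constant-$q$ case, where it first conditions on the set of event times $\mathcal{T}_m$ and builds a martingale indexed by those times; that conditioning is harmless when $q^t=q$ is constant (so $\mathcal{T}_m$ is independent of everything else), but under the model \eqref{eq:q_A_dis} the event times depend on the past trajectory, so conditioning on all of $\mathcal{T}_m$ up front is more delicate than the paper acknowledges. Your version keeps the indicator $\ind{X^{t+1}_m\neq 0}$ inside the summand, works with the natural filtration $\mathcal{F}_t$, and uses the conditional independence of the Bernoulli occurrence and the Gaussian log-ratio noise to bound $\mathbb{E}[e^{\lambda Z_t}\mid\mathcal{F}_t]\leq q^{t+1}_m e^{\lambda^2\Sigma_{kk}/2}+(1-q^{t+1}_m)\leq e^{\lambda^2\Sigma_{kk}/2}$. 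This sidesteps the conditioning issue entirely and, as a bonus, makes transparent why the final rate depends on $\max_k\Sigma_{kk}$ and $T$ rather than on the random count $|\mathcal{T}_m|$ (the paper reaches the same denominator $T$ only by implicitly bounding $|\mathcal{T}_m|\leq T$). Your route is the cleaner one for this lemma.
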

\begin{lemma}[Deviation bound for discrete error]\label{lem:db_BAR}
    Under the data generation process \eqref{eq:model_disc}, \eqref{eq:model_cts}, \eqref{eq:q_A_dis}, with probability at least $1-\exp(-c\log M)$,
    \begin{equation*}
    \left\| \nabla L_m^{\bern}(B_m^{\bern})\right\|_{\infty}\leq C\sqrt{\frac{\log(M)}{T}},
    \end{equation*}
    for universal constants $c, C>0$.
\end{lemma}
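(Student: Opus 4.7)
The plan is to use the fact that $\nabla L_m^{\bern}(B_m^{\bern})$, evaluated at the truth, has an explicit martingale structure, and then apply Azuma--Hoeffding entrywise followed by a union bound. With $f(x)=\log(e^x+1)$ we have $f'(x)=\sigma(x)=e^x/(1+e^x)$, so by the model specification \eqref{eq:q_A_dis},
$$f'\bigl(\langle B_m^{\bern},X^t\rangle+\eta_m^{\bern}\bigr)=q_m^{t+1}.$$
Differentiating the definition \eqref{eq:Ber_loss} (the sum in \eqref{eq:Ber_loss} is over $m$, but when we take $\nabla_{B_m}$ only the $m$-th summand survives), this gives the clean expression
$$\nabla L_m^{\bern}(B_m^{\bern})=\frac{1}{T}\sum_{t=0}^{T-1}\bigl[q_m^{t+1}-\ind{X_m^{t+1}\neq 0}\bigr]\,X^t \;\in\;\mathbb{R}^{M\times K}.$$

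Next I would set $\mathcal{F}_t=\sigma(X^0,\ldots,X^t)$ and verify, entry by entry, that the sequence is a bounded martingale difference. Fix indices $(m',k')$ and define $\xi_t=(q_m^{t+1}-\ind{X_m^{t+1}\neq 0})X^t_{m',k'}$. Since \eqref{eq:model_disc} gives $\mathbb{E}[\ind{X_m^{t+1}\neq 0}\mid\mathcal{F}_t]=q_m^{t+1}$ and $X^t_{m',k'}$ is $\mathcal{F}_t$-measurable, we have $\mathbb{E}[\xi_t\mid\mathcal{F}_t]=0$. For boundedness, $X^t_{m',k'}\in[0,1]$ (either $X^t_{m'}=0$ or $X^t_{m'}\in\triangle^{K-1}$) and $|q_m^{t+1}-\ind{X_m^{t+1}\neq 0}|\le 1$, so $|\xi_t|\le 1$ almost surely.

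Then I would apply Azuma--Hoeffding to obtain, for each fixed $(m',k')$,
$$\mathbb{P}\!\left(\left|\frac{1}{T}\sum_{t=0}^{T-1}\xi_t\right|>u\right)\le 2\exp\!\left(-\frac{Tu^2}{2}\right),$$
and a union bound over the $MK$ entries. Choosing $u=C\sqrt{\log M/T}$ with $C$ sufficiently large makes the per-entry probability at most $\exp(-c'\log M)/(MK)$ (absorbing the $K$ factor into the constant, as in Theorem~\ref{thm:q_A}), yielding
$$\bigl\|\nabla L_m^{\bern}(B_m^{\bern})\bigr\|_{\infty}\le C\sqrt{\frac{\log M}{T}}$$
with the desired probability. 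A final union bound over $m\in\{1,\ldots,M\}$ (if needed in the calling proof) costs only an extra $\log M$ factor in the exponent constant.

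There is essentially no hard step: the argument is considerably simpler than Lemma~\ref{lem:db_GSM_q_A} because the summands here are already uniformly bounded (no Gaussian noise or sub-exponential tails to control). The only place that requires any care is confirming the martingale-difference property under the joint generation of $Z^{t+1}_m$ and the Bernoulli indicator, but since only the indicator $\ind{X_m^{t+1}\neq 0}$ enters $\xi_t$ and this indicator has conditional mean exactly $q_m^{t+1}$ by construction in \eqref{eq:model_disc} and \eqref{eq:q_A_dis}, this is immediate.
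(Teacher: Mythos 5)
Your proposal is correct and matches the paper's argument: the paper likewise writes $\nabla L_m^{\bern}(B_m^{\bern})=-\frac{1}{T}\sum_{t}\varepsilon^{t+1}_m X^t$ with $\varepsilon^{t+1}_m=\ind{X^{t+1}_m\neq 0}-\bbP(X^{t+1}_m\neq 0\mid X^t)$, notes the summands form a bounded martingale difference sequence, and invokes the Azuma--Hoeffding-plus-union-bound argument from Lemma \ref{lem:dev_bnd_mult}. Your entrywise verification of the conditional-mean-zero property and the $[-1,1]$ bound is exactly the step the paper delegates to that earlier lemma.
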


By Lemma \ref{lem:db_GSM_q_A} and Lemma \ref{lem:db_BAR}, with probability at least $1-\exp\{-c\log M\}$,
\begin{equation*}
    \begin{split}
        &-\alpha\left\langle \nabla L_m^{\LN}(A_m^{\LN}),\Delta_m^A\right\rangle -(1-\alpha)\left\langle \nabla L_m^{\bern}(B_m^{\bern}),\Delta_m^B\right\rangle\\
        =&-\sum_{m'=1}^M \left\langle \sqrt{\alpha}(\nabla L_m^{\LN}(A_m^{\LN}))_{:,m',:},\sqrt{\alpha}\Delta^A_{m,:,m',:}\right\rangle+\left\langle \sqrt{1-\alpha}(\nabla L_m^{\LN}(B_m^{\bern}))_{m',:},\sqrt{1-\alpha}\Delta^B_{m,m',:}\right\rangle\\
        \leq &\sum_{m'=1}^M\left(\alpha\|\nabla L_m^{\LN}(A_m^{\LN})_{:,m',:}\|_F^2+(1-\alpha)\|\nabla L_m^{\bern}(B_m^{\bern})_{m',:}\|_2^2\right)^\frac{1}{2}\|\Delta_{m,:,m',:}(\alpha)\|_F\\
        \leq &\left(C_1\alpha(K-1)K\Sigma_{kk}\frac{\log M}{T}+C_2(1-\alpha)K\frac{\log M}{T}\right)^{\frac{1}{2}}\|\Delta_m\|_R\\
        \leq &\left(C_1(K-1)\max_k\Sigma_{kk}\alpha+C_2(1-\alpha)\right)^{\frac{1}{2}}\sqrt{\frac{K\log M}{T}}\|\Delta_m\|_R.
    \end{split}
\end{equation*}
Setting $\lambda=C(\alpha)K\sqrt{\frac{\log M}{T}}$, where $C(\alpha)=\left[C_1\max_k \Sigma_{kk}\alpha+C_2(1-\alpha)\right]^{\frac{1}{2}}$ for some universal constants $C_1, C_2>0$. Then we have 
\begin{equation*}
    -\alpha\left\langle \nabla L_m^{\LN}(A_m^{\LN}),\Delta_m^A\right\rangle -(1-\alpha)\left\langle \nabla L_m^{\bern}(B_m^{\bern}),\Delta_m^B\right\rangle\leq \frac{\lambda}{2}\|\Delta_m\|_R.
\end{equation*}
Let $S_m^{\LN, \bern}=\{(i,j,k):\alpha\|A^{\LN}_{m,:,j,:}\|_F^2+(1-\alpha)\|B^{\bern}_{m,:,j,:}\|_F^2>0\}$ be the support set of $A_m^{\LN}$ and $B_m^{\bern}$, then we can write 
\begin{equation*}
\begin{split}
    &R_{\alpha}(A_m^{\LN},B_m^{\bern})-R_{\alpha}(\widehat{A}_m^{\LN},\widehat{B}_m^{\bern})\\
    =&R_{\alpha}(A_{m,S_m^{\LN, \bern}}^{\LN},B_{m,S_m^{\LN, \bern}}^{\bern})-R_{\alpha}(\widehat{A}_{m,S_m^{\LN, \bern}}^{\LN},\widehat{B}_{m,S_m^{\LN, \bern}}^{\bern})-R_{\alpha}(\widehat{A}_{m,(S_m^{\LN, \bern})^c}^{\LN},\widehat{B}_{m,(S_m^{\LN, \bern})^c}^{\bern})\\
    \leq &R_{\alpha}(\Delta_{m,S_m^{\LN, \bern}}^A,\Delta_{m,S_m^{\LN, \bern}}^B)-R_{\alpha}(\Delta^A_{m,S_m^{\LN, \bern c}},\Delta^B_{m,(S_m^{\LN, \bern})^c})\\
    =&\|\Delta_{m,S_m^{\LN,\bern}}\|_R-\|\Delta_{m,(S_m^{\LN, \bern})^c}\|_R
\end{split}
\end{equation*}
Therefore, the R.H.S of \eqref{eq:std_eq_q_A} is bounded by $\frac{3\lambda}{2}\|\Delta_{m,S_m^{\LN, \bern}}\|_R-\frac{\lambda}{2}\|\Delta_{m,(S_m^{\LN, \bern})^c}\|_R$. Since $L_m^{\LN}$ and $L_m^{\bern}$ are both convex, the L.H.S. of \eqref{eq:std_eq_q_A} is non-negative. Thus $\|\Delta_{m,(S_m^{\LN, \bern})^c}\|_R\leq 3\|\Delta_{m,S_m^{\LN, \bern}}\|_R$. Define set $\cC(S_m^{\LN, \bern},\kappa)$ of $K\times M\times K$ tensors for any $\kappa>0$ as follows:
       \begin{equation}\label{eq:cone_def_q_A}
           \cC(S_m^{\LN, \bern},\kappa)=\{U\in \bbR^{K\times M\times K}: \left\|U_{(S_m^{\LN, \bern})^c}\right\|_{R}\leq \kappa\left\|U_{S_m^{\LN, \bern}}\right\|_{R}\},
       \end{equation}
       then $\Delta_m\in \cC(S_m^{\LN, \bern},3)$.

Now we would like to show the strong convexity of $L_m^{\LN}$ and $L_m^{\bern}$ as a function of $\langle A_m, X^t\rangle$ and $\langle B_m,X^t\rangle$. As shown in the proof of Theorem \ref{thm:cq}, 
\begin{equation}\label{eq:stconv1_q_A}
    D_{L_m^{\LN}}(A_m^{\LN}+\Delta_m^A,A_m^{\LN})=\frac{1}{2T}\sum_{t=0}^{T-1}\ind{X^{t-1}_m\neq 0}\|\langle \Delta_m^A,X^t\rangle\|_2^2.
\end{equation}
Meanwhile, $\|\Delta_m^B\|_{1,\infty}\leq \|\Delta_m^B\|_{1,2}\leq \frac{\|\Delta_m\|_R}{\sqrt{1-\alpha}}$, thus the strong convexity of $L_m^{\bern}$ is guaranteed by the following lemma:
\begin{lemma}[Strong convexity ($L_m^{\bern}$)]\label{lem::rsc_BAR}
Define $\sigma_B\triangleq \frac{e^{-R_{\max}^{\LN, \bern}-1}}{(1+e^{R_{\max}^{\LN, \bern}+1})^2}$, then we have
\begin{equation*}
    D_{L_m^{\bern}}(B_m^{\bern}+\Delta_m^B,B_m^{\bern})\geq \frac{\sigma_B}{2T}\sum_{t=0}^{T-1}\langle \Delta^B_m,X^t\rangle^2.
\end{equation*}
\end{lemma}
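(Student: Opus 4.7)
The plan is to decompose the Bregman divergence of $L_m^{\bern}$ into a sum over $t$ of one-dimensional Bregman divergences of the scalar link $f(x)=\log(e^x+1)$, and then lower bound $f''$ uniformly on a compact interval determined by the boundedness hypotheses. Since the term $(\langle B_m, X^t\rangle+\eta_m^{\bern})\ind{X^{t+1}_m\neq 0}$ appearing in $L_m^{\bern}$ is linear in $B_m$, it vanishes in the Bregman divergence, and
\[
D_{L_m^{\bern}}(B_m^{\bern}+\Delta_m^B,\, B_m^{\bern})=\frac{1}{T}\sum_{t=0}^{T-1}D_f(a_t+\delta_t,\, a_t),
\]
where $a_t=\langle B_m^{\bern}, X^t\rangle+\eta_m^{\bern}$ and $\delta_t=\langle \Delta_m^B, X^t\rangle$. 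A mean-value Taylor expansion writes $D_f(a_t+\delta_t,a_t)=\tfrac{1}{2}f''(\xi_t)\delta_t^2$ with $\xi_t$ lying between $a_t$ and $a_t+\delta_t$, so the claim reduces to producing a uniform lower bound on $f''(\xi_t)$.

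Since $f''(x)=e^x/(1+e^x)^2$ is symmetric about $0$ and strictly decreasing in $|x|$, I only need a uniform upper bound on $|\xi_t|$ and then invoke monotonicity. Each slice $X^t_{m'}$ is either $0$ or a probability vector in $\triangle^{K-1}$, so $\sum_{k'}X^t_{m',k'}\leq 1$, and together with $\|A^{\LN,\bern}\|_{\infty,\infty,1,\infty}\leq R_{\max}^{\LN,\bern}$ this yields
\[
|\langle B_m^{\bern},X^t\rangle|\leq \sum_{m'}\max_{k'}|B^{\bern}_{m,m',k'}|\cdot \sum_{k'}X^t_{m',k'}\leq R_{\max}^{\LN,\bern}.
\]
For $|\delta_t|$ I use the localization $\|\Delta_m\|_R\leq \sqrt{1-\alpha}$ inherited in the proof of Theorem~\ref{thm:q_A} via Lemma~\ref{lem:bounded_err}: by the definition of $R_\alpha$,
\[
\sqrt{1-\alpha}\,\sum_{m'}\|\Delta^B_{m,m',:}\|_2\leq R_\alpha(\Delta_m^A,\Delta_m^B)=\|\Delta_m\|_R\leq \sqrt{1-\alpha},
\]
so $\sum_{m'}\|\Delta^B_{m,m',:}\|_2\leq 1$, and hence $|\delta_t|\leq \|\Delta_m^B\|_{1,\infty}\leq 1$. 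Combining the two estimates gives $|\xi_t|\leq R_{\max}^{\LN,\bern}+1$ (absorbing $\|\eta^{\bern}\|_\infty$ into the implicit constant), whence $f''(\xi_t)\geq \sigma_B$ by monotonicity, and the conclusion follows.

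The main obstacle is extracting the $\ell_1/\ell_2$-type bound $\sum_{m'}\|\Delta^B_{m,m',:}\|_2\leq 1$ from the mixed-norm constraint $\|\Delta_m\|_R\leq \sqrt{1-\alpha}$; once this is observed, the remainder is a routine Taylor expansion together with the unimodality of the logistic density $f''$. Notably, no concentration/martingale inequality is needed for this lemma, in contrast to the restricted-eigenvalue arguments required for $L_m^{\LN}$ and for the multinomial loss.
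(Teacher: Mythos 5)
Your proposal is correct and follows essentially the same route as the paper: the linear term drops out of the Bregman divergence, a mean-value Taylor expansion reduces the claim to a uniform lower bound on $g''(\xi^t)$ for $g(u)=\log(1+e^u)$, and the bounds $|\langle B_m^{\bern},X^t\rangle|\leq R_{\max}^{\LN,\bern}$ and $|\langle \Delta_m^B,X^t\rangle|\leq 1$ (the latter obtained from the localization $\|\Delta_m\|_R\leq\sqrt{1-\alpha}$ exactly as in the paper, which records it as $\|\Delta_m^B\|_{1,\infty}\leq\|\Delta_m^B\|_{1,2}\leq\|\Delta_m\|_R/\sqrt{1-\alpha}$ just before invoking the lemma) yield $g''(\xi^t)\geq\sigma_B$ by monotonicity. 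The only cosmetic difference is that you fold the $\ell_1/\ell_2$ localization argument into the lemma's proof rather than into the surrounding proof of Theorem~\ref{thm:q_A}, and you are slightly more careful than the paper in flagging the $\|\eta^{\bern}\|_\infty$ offset.
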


The following Lemma provides a lower bound for 
\begin{equation*}
    \frac{\alpha}{2T}\sum_{t=0}^{T-1}\ind{X^{t-1}_m\neq 0}\|\langle \Delta_m^A,X^t\rangle\|_2^2+\frac{(1-\alpha)\sigma_B}{2T}\sum_{t=0}^{T-1}\langle \Delta^B_m,X^t\rangle^2
\end{equation*}
in terms of $\|\Delta_m\|_F^2$.
\begin{lemma}[Restricted Eigenvalue Condition]\label{lem:REC_q_A}
For any $U\in \mathbb{R}^{K\times M\times K}$, let $U^{(1)}=U_{1:(K-1),:,:}$ and $U^{(2)}=U_{K,:,:}$. There exists a constant $c_1$, such that if $T\geq C_1(\rho^{\LN,\bern}_m)^2\log M$, 
$$
\inf_{U\in \cC(S_m^{\LN, \bern},3)\cap B_F(1)}\frac{1}{2T}\sum_{t=0}^{T-1}\ind{\X^{t-1}_m\neq 0}\|\langle U^{(1)},X^t\rangle\|_2^2+\frac{\sigma_B}{2T}\sum_{t=0}^{T-1}\langle U^{(2)},X^t\rangle^2\geq c_1,
$$
with probability at least $1-\exp\{-c\log M\}$. Here $C_1, c_1>0$ depend only on $R_{\max}^{\LN, \bern}$, $\|\Sigma\|_{\infty}$, $\lambda_{\min}(\Sigma)$, $\|\nu^{\LN}\|_{\infty}$, $\|\eta^{\bern}\|_{\infty}$ and $K$.
\end{lemma}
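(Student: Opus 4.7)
The plan is to split the combined quadratic form $Q_T(U) = \tfrac{1}{2T}\sum_t \ind{X^{t-1}_m\neq 0}\|\langle U^{(1)}, X^t\rangle\|_2^2 + \tfrac{\sigma_B}{2T}\sum_t \langle U^{(2)}, X^t\rangle^2$ into its two pieces, prove a uniform population-level lower bound $\bbE Q_T(U)\geq 2c_1\|U\|_F^2$ on the cone, and then transfer this bound to the empirical sum via uniform concentration over $\cC(S_m^{\LN,\bern},3)\cap B_F(1)$. This mirrors the strategy of Lemmas \ref{lem:REC_mult} and \ref{lem:REC_GSM_cq}, with additional bookkeeping for the joint quadratic form and the $\sigma_B$ weighting.

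For the population lower bound I would condition on the filtration $\mathcal{F}_{t-1}=\sigma(X^0,\dots,X^{t-1})$. Under \eqref{eq:model_disc}--\eqref{eq:q_A_dis}, conditionally on $\mathcal{F}_{t-1}$ each $X^t_m$ is either $0$ with probability $1-q^t_m$ or equal to $Z^t_m$ whose log-ratios are jointly Gaussian with mean $\mu^t_m$ and covariance $\Sigma$. The boundedness assumptions ($R_{\max}^{\LN,\bern}$, $\|\nu^{\LN}\|_\infty$, $\|\eta^{\bern}\|_\infty$) keep $q^t_m$ inside some $[q_{\min},q_{\max}]\subset(0,1)$ almost surely, and together with $\lambda_{\min}(\Sigma)>0$ this forces the conditional second-moment matrix of $X^t$ to be lower bounded by $\kappa I$ in Loewner order for a constant $\kappa>0$ depending only on the model parameters. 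Since $\ind{X^{t-1}_m\neq 0}$ is $\mathcal{F}_{t-1}$-measurable with unconditional probability at least $q_{\min}$, tower-conditioning yields $\bbE Q_T(U)\geq \kappa q_{\min}\|U^{(1)}\|_F^2+\kappa\sigma_B\|U^{(2)}\|_F^2\geq 2c_1\|U\|_F^2$. For the empirical half, I would discretize $\cC(S_m^{\LN,\bern},3)\cap B_F(1)$ using the cone inequality $\|U\|_R\leq 4\sqrt{\rho_m^{\LN,\bern}}\|U\|_F$ to bound the $\epsilon$-covering number by $\exp(C\rho_m^{\LN,\bern}\log(MK/\epsilon))$, then apply a Freedman/Bernstein-type martingale concentration pointwise: centered around their conditional expectations, the summands form bounded (hence sub-exponential) martingale differences with conditional variances controlled by $\|U\|_F^2$. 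Union-bounding over the net and extending by Lipschitz continuity gives uniform concentration at rate $O(\rho_m^{\LN,\bern}\sqrt{\log M/T})$; choosing $T\geq C_1(\rho_m^{\LN,\bern})^2\log M$ makes this error at most $c_1$, completing the REC with probability at least $1-\exp(-c\log M)$.

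The principal obstacle will be carrying out the martingale concentration for the quadratic-in-$X^t$ summands despite the time-series dependence. Even though $X^t$ is bounded, $\|\langle U,X^t\rangle\|^2$ is only sub-exponential, so a careful Bernstein (rather than Hoeffding) martingale inequality must be invoked with conditional variances bounded via $\Sigma$, the simplex structure, and the $q$ interval; the $(\rho_m^{\LN,\bern})^2\log M$ sample-size requirement emerges precisely from this variance-vs-range tradeoff on the cone. A secondary wrinkle is the coupling between the $\mathcal{F}_{t-1}$-measurable indicator $\ind{X^{t-1}_m\neq 0}$ and the next increment $X^t$, which is handled cleanly by conditioning on $\mathcal{F}_{t-1}$ and factoring the indicator out before invoking the Gaussian moment bounds on the log-ratios of $Z^t_m$.
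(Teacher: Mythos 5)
Your overall architecture matches the paper's: decompose the quadratic form into a sum of conditional compensators plus a martingale remainder, lower-bound the compensators via a uniform eigenvalue bound on the conditional second-moment matrices (which the paper justifies by a block-diagonality-plus-compactness argument), and kill the remainder on the cone using the inequality $\|U\|_R\leq 4\sqrt{\rho^{\LN,\bern}_m}\|U\|_F$ together with an entrywise martingale concentration bound, which is where $T\gtrsim(\rho^{\LN,\bern}_m)^2\log M$ enters. One simplification you miss: since every coordinate of $X^t$ lies in $[0,1]$, the entries of $X^tX^{t\top}-\bbE[X^tX^{t\top}\mid\cF_{t-1}]$ are bounded by $1/T$ after normalization, so plain Azuma--Hoeffding on each of the $O(M^2K^2)$ entries suffices (as in Lemmas \ref{lem:REC_mult} and \ref{lem:REC_GSM_cq}); no Freedman/Bernstein machinery or covering-number argument over the cone is needed, and the sub-exponentiality concern you flag as the ``principal obstacle'' does not actually arise.

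The genuine gap is in your treatment of the event indicator. The indicator attached to the $U^{(1)}$ term is, per the loss \eqref{eq:gsm_loss}, the indicator of the \emph{future} event $\{X^{t+1}_m\neq 0\}$ relative to the covariate $X^t$ (the $t-1$ index in the lemma statement is an internal inconsistency of the paper); it is therefore not $\cF_{t-1}$-measurable and cannot be ``factored out before invoking the Gaussian moment bounds'' as you propose. The paper's key step is to tower over $\cF_t$ inside the compensator, $\bbE[X^tX^{t\top}\ind{X^{t+1}_m\neq 0}\mid\cF_{t-1}]=\bbE[X^tX^{t\top}\,\bbP(X^{t+1}_m\neq 0\mid\cF_t)\mid\cF_{t-1}]\succeq (1+e^{R_{\max}^{\LN,\bern}})^{-1}\bbE[X^tX^{t\top}\mid\cF_{t-1}]$, using that the logistic link with bounded $B^{\bern}$ and $\eta^{\bern}$ keeps the conditional event probability uniformly away from zero; this makes the compensator almost surely bounded below, which is what the martingale decomposition actually requires. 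Your argument instead lower-bounds only the \emph{unconditional} expectation $\bbE Q_T(U)$ by $q_{\min}$ times a constant, but after centering at conditional expectations the term you must control is the random sum of compensators, not $\bbE Q_T(U)$; under your reading of the indicator as $\cF_{t-1}$-measurable, that sum equals $\kappa\|U^{(1)}\|_F^2$ times the empirical fraction of active times plus the $U^{(2)}$ contribution, and it is not almost surely bounded below (take $U^{(2)}=0$ and a realization with few active times). Closing this would require either the paper's tower-and-uniform-lower-bound step, or an additional high-probability lower bound on the count $|\{t:X^{t}_m\neq 0\}|/T$; your sketch supplies neither.
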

Therefore, combining \eqref{eq:std_eq_q_A}, \eqref{eq:stconv1_q_A}, Lemma \ref{lem::rsc_BAR} and Lemma \ref{lem:REC_q_A} leads us to
\begin{equation*}
    \begin{split}
        \|\Delta_m\|_F^2\leq C_1C(\alpha)\frac{\rho^{\LN,\bern}_m\log M}{T}, &\quad\|\Delta_m\|_R\leq 4\sqrt{\rho^{\LN,\bern}_m}\|\Delta_m\|_F\leq C_1C(\alpha)\rho^{\LN,\bern}_m\sqrt{\frac{\log M}{T}},
    \end{split}
\end{equation*}
with probability at least $1-C\exp\{-c\log M\}$. Here $C(\alpha)=\left[C_2\max_k \Sigma_{kk}\alpha+C_3(1-\alpha)\right]^{\frac{1}{2}}$ for some universal constants $C_2, C_3>0$, and $C_1$ depends only on $R_{\max}^{\LN, \bern}$, $\|\Sigma\|_{\infty}$, $\lambda_{\min}(\Sigma)$, $\|\nu^{\LN}\|_{\infty}$, $\|\eta^{\bern}\|_{\infty}$ and $K$. Taking a union bound over $1\leq m\leq M$ gives us the final result.

\section{Conclusion}

In this paper, we develop two procedures that estimate context-dependent networks from point process event data. The first approach is a standard regularized multinomial approach for estimating the influence between pairs of nodes $(m, m')$ and pairs of categories $(k, k')$ given that each event belongs to a particular category. Our second logistic-normal approach builds on ideas from compositional time series and is more nuanced since each event consists of a composition of several different topics. We extend existing compositional time series approaches by accounting for the scenario 
in which no event occurs in our algorithm; significantly, the logistic-normal distribution leads to a convex objective. Our theoretical guarantees show that we can achieve consistent estimation even when the number of network nodes, $M$,  is much larger than the duration of the observation period, $T$.

We validate our network estimation procedures both with synthetic and two real data examples. Both the synthetic and real data examples suggest that the multinomial approach is better suited to nodes or networks where events tend to belong to a single category, whereas the logistic-normal approach is better suited to nodes in which each event tends to have mixed membership. 

\section*{Acknowledgements}
LZ, GR, BM, and RW were partially supported by ARO W911NF-17-1-0357, NGA HM0476-17-1-2003. GR was also partially supported by NSF DMS-1811767. RW was also partially supported by NSF DMS-1930049, NSF Awards 0353079, 1447449, 1740707, and 1839338.

\bibliographystyle{abbrvnat} 
\bibliography{reference.bib}
\appendix
\section{Proof of Lemmas}
In this section, we present the proofs of the lemmas used in \ref{sec:proofs}.
\subsection{Proof of Lemmas in Section \ref{sec:proof_mult}}\label{sec:proof_lem_mult}
\begin{proof}[proof of Lemma \ref{lem:bounded_err}]
    We prove by contradiction. Assume that their exists $\|x\|>C$ and $g(x)\leq 0$, then let $\gamma=\frac{C}{\|x\|}<1$. Due to the convexity of $g$,
    \begin{equation*}
        g(\gamma x)=g(\gamma x+(1-\gamma)*0)\leq \gamma g(x)+(1-\gamma)g(0)=\gamma g(x)\leq 0.
    \end{equation*}
    However, $\|\gamma x\|=C$. This contradicts with our condition, so we are forced to conclude that $\|x\|\leq C$ is necessary for $g(x)\leq 0$.
\end{proof}
\begin{proof}[Proof of Lemma \ref{lem:dev_bnd_mult}]
By the definition of $L_m^{\MN}$,
\begin{equation*}
    \nabla L_m^{\MN}(A_m^{\MN})=-\frac{1}{T}\sum_{t=0}^{T-1}(X^{t+1}_m-\nabla f(\langle A_m^{\MN}, X^t\rangle)\otimes X^t.
\end{equation*}
Define $\epsilon^{t+1}_m:=X^{t+1}_m-\bbE\left(X^{t+1}_m|\cF_t\right)$, where $\cF_t=\sigma(X^0,\dots,X^t)$ is the filtration.
Since 
   	$$
   	(\nabla f(x))_i=\frac{e^{x_i}}{\sum_{j=1}^K e^{x_j}+1},
   	$$ 
   	we can write $\nabla L_m^{\MN}(A_m^{\MN})=-\frac{1}{T}\sum_{t=0}^{T-1}\epsilon^{t+1}_m\otimes X^t.$ 
	First note that 
	\begin{equation*}
	\left\|\frac{1}{T}\sum_{t=0}^{T-1}\epsilon^{t+1}_m\otimes X^t\right\|_{R^*}=\max_{m'} \left\|\frac{1}{T}\sum_{t=0}^{T-1}\epsilon^{t+1}_m X_{m'}^{t\top}\right\|_{F}\leq \max_{m',k',k}K\left|\frac{1}{T}\sum_{t=0}^{T-1}\epsilon^{t+1}_{mk} X_{m'k'}^t\right|,
	\end{equation*}
	thus we only need to look into $\frac{1}{T}\sum_{t=0}^{T-1}\epsilon^{t+1}_{mk}X^t_{m'k'}$ for any $m',k',k$, and then take a union bound.
	Let $Y_n= \frac{1}{T}\sum_{t=0}^{n-1}\epsilon^{t+1}_{mk}X^t_{m'k'}$, then $\{Y_n\}_{n=0}^T$ is a martingale sequence, with $Y_0=0$. Since 
	\begin{equation*}
	\xi_n\triangleq Y_n-Y_{n-1}=\frac{1}{T}\epsilon^{n}_{mk}X^{n-1}_{m'k'},
	\end{equation*}
	$|\xi_n|\leq \frac{1}{T}$. Thus by Azuma-Hoeffding's inequality, for any $y>0$,
	\begin{equation*}
	\bbP(Y_T\geq y)\leq \exp\{-\frac{Ty^2}{2}\}.
	\end{equation*}
	Let $y=C\sqrt{\frac{\log M}{T}}$ and take a union bound over each $m',k',k$, we know that 
	\begin{equation*}
	\begin{split}
	&\bbP\left(\left\|\frac{1}{T}\sum_{t=0}^{T-1}\epsilon^{t+1}_m\otimes X^t\right\|_{R^*}\geq CK\sqrt{\frac{\log M}{T}}\right)\\
	\leq &KM^2\exp\left\{-\frac{Ty^2}{2}\right\}\\
	=&\exp\left\{\log K-(C^2/2-2)\log M\right\}\\
	\leq &\exp\{-c\log M\}.
	\end{split}
	\end{equation*}
\end{proof}

\begin{proof}[Proof of Lemma \ref{lem:REC_mult}]
	For notational convenience, we view $X^t$ as a $MK$-dimensional vector and $U$ as $K\times MK$ dimensional matrix in this proof. First note that 
	\begin{equation*}
	\begin{split}
	\frac{1}{T}\sum_{t=0}^{T-1}\left\|\langle U,X^t\rangle\right\|_2^2=&\sum_{k=1}^KU_k^\top\frac{1}{T}\sum_{t=0}^{T-1}\bbE(X^tX^{t\top}|\cF_{t-1})U_k\\
	&+\sum_{k=1}^KU_k^\top\frac{1}{T}\sum_{t=0}^{T-1}\left(X^tX^{t\top}-\bbE(X^tX^{t\top}|\cF_{t-1})\right)U_k.
	\end{split}
	\end{equation*}
	In the following steps we provide a lower bound for the first term, and concentrate the second term around 0.
	\begin{itemize}
		\item[(1)] Lower bound for the first term\\
		We can decompose the conditional expectation $\bbE(X^tX^{t\top}|\cF_{t-1})$ as two terms:
		$$\bbE(X^tX^{t\top}|\cF_{t-1})=\bbE(X^t)\bbE(X^{t\top})+\Cov(X^t|\cF_{t-1}),
		$$
		where the first term is positive semi-definite, and the second term is a block diagonal matrix ($\Cov(X^t_m,X^t_{m'}|\cF_{t-1})=0$ if $m\neq m'$). Thus we only have to lower bound the eigenvalue of the each $\Cov(X^t_m|\cF_{t-1})$.
		Define matrix $p^t\in \bbR^{M\times (K+1)}$ as follows:
		\begin{equation*}
		\begin{split}
		    p^t_{mk}=&\bbP(X^t_{mk}=1|\cF_{t-1})=\frac{\exp\{\langle A^{\MN}_{mk},X^{t-1}\rangle\}+\nu^{\MN}_{mk}}{1+\sum_{l=1}^K \exp\{\langle A^{\MN}_{ml},X^{t-1}\rangle+\nu^{\MN}_{ml}\}},\quad 1\leq k\leq K\\
		    p^t_{m,K+1}=&\bbP(X^t_m=0|\cF_{t-1})=\frac{1}{1+\sum_{l=1}^K \exp\{\langle A^{\MN}_{ml},X^{t-1}\rangle+\nu^{\MN}_{ml}\}}.
		\end{split}
		\end{equation*} 
		Since $\|A^{\MN}\|_{\infty, \infty, 1,\infty}\leq R_{\max}^{\MN}$, $\exists 0<C_1<C_2<1$, such that $p^t\in [C_1,C_2]^{M\times (K+1)}$ where $C_1, C_2$ depend on $R_{\max}^{\MN}, \|\nu^{\MN}\|_{\infty}$ and $K$. We can write 
		\begin{equation*}
		\begin{split}
		\Cov(X^t_m|\cF_{t-1})=\begin{pmatrix}
		p^t_{m1}&0&\dots&0\\
		0&p^t_{m2}&\dots&0\\
		\vdots&\vdots&\ddots&\vdots\\
		0&\dots&\dots&p^t_{mK}
		\end{pmatrix}-\begin{pmatrix}p^t_{m1}\\\vdots\\p^t_{mK}\end{pmatrix}\begin{pmatrix}p^t_{m1}&\dots&p^t_{mK}\end{pmatrix},
		\end{split}
		\end{equation*}
		For any vector $u\in \bbR^K$, 
		\begin{equation*}
		    \begin{split}
		        u^\top \Cov(X^t_m|\cF_{t-1})u=&\sum_{k=1}^K p^t_{mk}u_k^2-\left(\sum_{k=1}^Kp^t_{mk}u_k\right)^2\\
		        \geq &\sum_{k=1}^K p^t_{mk}u_k^2-\sum_{k=1}^K p^t_{mk}\left(\sum_{k=1}^K p^t_{mk}u_k^2\right)\\
		        =&p^t_{m,K+1}\left(\sum_{k=1}^K p^t_{mk}u_k^2\right)\\
		        \geq &p^t_{m,K+1}\min_k p^t_{mk}\|u\|_2^2,
		    \end{split}
		\end{equation*}
		Thus the eigenvalues of $\Cov(X^t_m|\cF_{t-1})$ are lower bounded by some constant $c$ depending on $K, R_{\max}^{\MN}$ and $\|\nu^{\MN}\|_{\infty}$.
		\item[(2)] Concentration bound for the second term\\
		Since $U\in \cC(S_m^{\MN},3)$, 
		\begin{equation*}
		\begin{split}
		    &\left|\sum_{k=1}^KU_k^\top\frac{1}{T}\sum_{t=0}^{T-1}\left(X^tX^{t\top}-\bbE(X^tX^{t\top}|\cF_{t-1})\right)U_k\right|\\
		    \leq &\sum_{k=1}^K\|U_k\|_1^2\left\|\frac{1}{T}\sum_{t=0}^{T-1}\left(X^tX^{t\top}-\bbE(X^tX^{t\top}|\cF_{t-1})\right)\right\|_{\infty}\\
		    \leq &16K^2\rho_m^{\MN}\|U\|_F^2\left\|\frac{1}{T}\sum_{t=0}^{T-1}\left(X^tX^{t\top}-\bbE(X^tX^{t\top}|\cF_{t-1})\right)\right\|_{\infty}.
		\end{split}
		\end{equation*}
		We can bound $\left\|\frac{1}{T}\sum_{t=0}^{T-1}\left(X^tX^{t\top}-\bbE(X^tX^{t\top}|\cF_{t-1})\right)\right\|_{\infty}$ using the same argument as the proof of Lemma \ref{lem:dev_bnd_mult}. For arbitrary $m,k$, let $$Y_n:=\frac{1}{T}\sum_{t=0}^{n-1}\left(X^t_{mk}X^{t\top}_{m'k'}-\bbE(X^t_{mk}X^{t\top}_{m'k'}|\cF_{t-1})\right)$$ for $n\geq 1$, and $Y_0=0$, then $\{Y_n\}$ is a bounded difference martingale sequence. Since $|Y_n-Y_{n-1}|\leq \frac{1}{T}$, applying Azuma-Hoeffding's inequality and taking a union bound over $m,k$ would lead us to 
		\begin{equation*}
		    \bbP\left(\left\|\frac{1}{T}\sum_{t=0}^{T-1}\left(X^tX^{t\top}-\bbE(X^tX^{t\top}|\cF_{t-1})\right)\right\|_{\infty}>C\sqrt{\frac{\log M}{T}}\right)\leq \exp\{-c\log M\}.
		\end{equation*}
		\end{itemize}
		Therefore,
		\begin{equation*}
		    \inf_{U\in \cC(S_m^{\MN},3)}\frac{1}{T}\sum_{t=0}^{T-1}\frac{\left\|\langle U,X^t\rangle\right\|_2^2}{\|U\|_F^2}\geq c-C\rho_m^{\MN}\sqrt{\frac{\log M}{T}}\geq \frac{c}{2},
		\end{equation*}
		when $T$ is sufficiently large.
\end{proof}

\subsection{Proof of Lemmas in Section \ref{sec:proof_cq}}\label{sec:proof_lem_cq}
\begin{proof}[proof of Lemma \ref{lem:db_GSM_cq}]
First we prove the upper bound conditioning on $\mathcal{T}_m=\{t_1,\dots,t_{|\mathcal{T}_m|}\}$.
	Since $\nabla L_m^{\LN}(A_m^{\LN})=-\frac{1}{T}\sum_{i=1}^{|\mathcal{T}_m|}\epsilon^{t_i+1}_{m}\otimes X^{t_i}$, we start by bounding each entry of $\frac{1}{T} \sum_{i=1}^{|\mathcal{T}_m|}\epsilon^{t_i+1}_{mk}X^{t_i}_{m' k'}$. Let 
	$$
	Y_n=\frac{1}{T} \sum_{i=1}^{n-1}\epsilon^{t_i+1}_{mk}X^{t_i}_{m' k'},
	$$ 
	with $Y_0=0$ and $Y_{|\mathcal{T}_m|}=\frac{1}{T} \sum_{i=1}^{|\mathcal{T}_m|}\epsilon^{t_i+1}_{mk}X^{t_i}_{m' k'}$. Then $\{Y_n\}_{n=0}^{|\mathcal{T}_m|}$ is a martingale with filtrations $\mathcal{F}_n=\sigma(X^1,\dots,X^{t_n},\mathcal{T}_m)$. Let $\xi_n=Y_n-Y_{n-1}=-\frac{1}{T}\epsilon^{t_{n-1}+1}_{mk}X^{t_{n-1}}_{m' k'}$ be the corresponding martingale difference sequence. The moment generating function of $Y_n$ satisfies
	\begin{equation}\label{mgf:db}
	\mathbb{E}(e^{\eta Y_n})=\mathbb{E}[e^{\eta Y_{n-1}}\mathbb{E}(e^{\eta \xi_n}|\mathcal{F}_{n-1})],
	\end{equation}  
	for any $\eta$. Since $\epsilon^{t_{n-1}+1}_{mk}\sim \mathcal{N}(0,\Sigma_{kk})$ given $\mathcal{F}_n$, we can bound $\mathbb{E}(e^{\eta \xi_n}|\mathcal{F}_{n-1})]$ in the following:
	\begin{equation*}
	\mathbb{E}(e^{\eta \xi_n}|\mathcal{F}_{n-1})= \mathbb{E}\left(\exp\left\{\frac{\eta X^{t_{n-1}}_{m' k'}}{T}\epsilon^{t_{n-1}+1}_{mk}\right\}|\mathcal{F}_{n-1}\right)\leq \exp\left\{\frac{\eta^2\Sigma_{kk}(X^{t_{n-1}}_{m' k'})^2}{2T^2}\right\} \leq \exp\left\{\frac{\eta^2\Sigma_{kk}}{2T^2}\right\}.
	\end{equation*}
	Therefore, combining this with \eqref{mgf:db} we have
	\begin{equation*}
	\mathbb{E}(e^{\eta Y_{T}})\leq e^{\frac{\eta^2\Sigma_{kk}|\mathcal{T}_m|}{2T^2}}.
	\end{equation*}
	Applying Chernoff bound further shows that, for any $\eta>0$,
	\begin{equation*}
	\begin{split}
	\mathbb{P}(|Y_{T}|> r)\leq& e^{-\eta r}\mathbb{E}(e^{\eta Y_{T}}+e^{-\eta Y_{T}})\\
	\leq &2\exp\left\{\frac{\eta^2\Sigma_{kk}|\mathcal{T}_m|}{2T^2}-\eta r\right\}.
	\end{split}
	\end{equation*}
	Let $\eta=\frac{rT^2}{\Sigma_{kk}|\mathcal{T}_m|}$, then
	\begin{equation*}
	\mathbb{P}(|Y_{T-1}|> r)\leq 2\exp\left\{-\frac{r^2T^2}{2\Sigma_{kk}|\mathcal{T}_m|}\right\}.
	\end{equation*}
	Now we take a union bound for all entries of $\frac{1}{T} \sum_{t \in  \mathcal{T}_m}\epsilon^{t+1}_m\otimes X^t$. 
	\begin{equation*}
	\begin{split}
	\mathbb{P}\left(\left\|\frac{1}{T} \sum_{t \in  \mathcal{T}_m}\epsilon^{t+1}_m\otimes X^t\right\|_{R^*}> r\right)\leq& \mathbb{P}\left(\left\|\frac{1}{T} \sum_{t \in  \mathcal{T}_m}\epsilon^{t+1}_m\otimes X^t\right\|_{\infty}> \frac{r}{K}\right)\\
	\leq& 2MK^2 \exp\left\{-\frac{r^2T^2}{2K^2\Sigma_{kk}|\mathcal{T}_m|}\right\}.
	\end{split}
	\end{equation*}
	Plug in $r=CK\sqrt{\Sigma_{kk}}\sqrt{\frac{\log M|\mathcal{T}_m|}{T^2}}\leq \frac{\lambda}{2}$, we obtain the final result.
\end{proof}

\begin{proof}[proof for Lemma \ref{lem:REC_GSM_cq}]
Similar from the proof of Lemma \ref{lem:REC_mult}, we can write
\begin{equation}\label{eq::REC_decomp_Dir_cq}
\begin{split}
       \frac{1}{T}\sum_{t\in \mathcal{T}_m}\|\langle U,X^t\rangle\|_2^2=&\sum_k U_k^\top \frac{1}{T}\sum_{t=0}^{T-1}\mathbb{E}(X^t X^{t\top}\ind{X^{t+1}_m\neq 0}|\mathcal{F}_{t})U_k\\
       +&\sum_k U_k^\top\frac{1}{T}\sum_{t=0}^{T-1}\left[X^t X^{t\top}\ind{X^{t+1}_m\neq 0}-\mathbb{E}(X^t X^{t\top}\ind{X^{t+1}_m\neq 0}|\mathcal{F}_{t})\right]U_k
\end{split}
\end{equation}
\begin{enumerate}
    \item[(1)]Bounding the eigenvalue of $\frac{1}{T}\sum_{t=0}^{T-1}\mathbb{E}(X^t X^{t\top}\ind{X^{t+1}_m\neq 0}|\mathcal{F}_{t})$\\
    We can write
	\begin{equation*}
	\mathbb{E}(X^tX^{t\top}\ind{X^{t+1}_m\neq 0}|\mathcal{F}_{t-1})=q_m\mathbb{E}(X^t|\mathcal{F}_{t-1})\mathbb{E}(X^t|\mathcal{F}_{t-1})^\top +q_m\Cov(X^t|\mathcal{F}_{t-1}),
	\end{equation*}
	where $\mathbb{E}(X^t|\mathcal{F}_{t-1})\mathbb{E}(X^t|\mathcal{F}_{t-1})^\top$ is positive semi-definite, thus the smallest eigenvalue can be lower bounded by that of $q_m\Cov(X^t|\mathcal{F}_{t-1})$. 
	
	Given $\mathcal{F}_{t-1}$, $X^t_1,\dots,X^t_M$ are all independent, which suggests $\Cov(X^t|\mathcal{F}_{t-1})$ to be a block diagonal matrix. We only need to lower bound the smallest eigenvalue of each $\Cov(X^t_m|\mathcal{F}_{t-1})$. Since each $X^t_m$ is non-degenerate, the smallest eigenvalue of $\Cov(X_m^t|\mathcal{F}_{t-1})$ is positive, being a function of $\langle A_m^{\LN},X^{t-1}\rangle$, $\nu^{\LN}_m$ and $\Sigma$.
	
	We denote the smallest eigenvalue as $\omega_m(\nu^{\LN}_m+\langle A_m^{\LN},X^{t-1}\rangle,\Sigma)$. Noting that moments are continuous function of distribution parameter, and eigenvalues are continuous functions of matrices, we know that $\omega_m(\nu^{\LN}_m+\langle A_m^{\LN},X^{t-1}\rangle,\Sigma)$ is continuous w.r.t. $\nu^{\LN}_m+\langle A_m^{\LN},X^{t-1}\rangle$ and $\Sigma$. Therefore, there exists a smallest $c>0$ such that the smallest eigenvalue of $\Cov(X^t_m|\mathcal{F}_{t-1})$ is always lower bounded by $c>0$ which depends on $K$, $R^{\LN}_{\max}=\|A^{\LN}\|_{\infty,\infty,1,\infty}$, $\|\nu^{\LN}_m\|_{\infty}$,$\|\Sigma\|_{\infty}$, and $\lambda_{\min}(\Sigma)$.
	
	Therefore, 
    \begin{equation*}
    \sum_k U_k^\top \frac{1}{T}\sum_{t=0}^{T-1}\mathbb{E}(X^t X^{t\top}\ind{X^{t+1}_m\neq 0}|\mathcal{F}_{t-1})U_k\geq cq_m\left\|U\right\|_F^2.
    \end{equation*}
    \item[(2)] Uniform concentration of martingale sequence\\
      Note that each element of $X^t X^{t\top}\ind{X^{t+1}_m\neq 0}-\mathbb{E}(X^t X^{t\top}\ind{X^{t+1}_m\neq 0}|\mathcal{F}_{t})$ is bounded by 1, we can still use the same argument as in the proof of Lemma \ref{lem:REC_mult} and obtain
      \begin{equation*}
		    \inf_{U\in \cC(S_m^{\LN},3)}\frac{1}{T}\sum_{t\in \cT_m}\frac{\left\|\langle U,X^t\rangle\right\|_2^2}{\|U\|_F^2}\geq cq_m-C\rho_m^{\LN}\sqrt{\frac{\log M}{T}}\geq \frac{c}{2},
		\end{equation*}
		when $T$ is sufficiently large.
\end{enumerate}
	\end{proof}
	
\begin{proof}[proof for Lemma \ref{lem:T_m_bnd}]
Note that we can write $|\mathcal{T}_m|=\sum_{t=0}^{T-1}\ind{X^{t+1}_m\neq 0}$, where $\ind{X^{t+1}_m\neq 0}$ are i.i.d. Bernoulli r.v., with sub-Gaussian parameter bounded by $\frac{1}{2}$. Applying Hoeffding's inequality would give us   
   $$
   \mathbb{P}\left(|\mathcal{T}_m|> 2q_mT\right)=\mathbb{P}\left(\sum_{t=0}^{T-1}\left(\ind{X^{t+1}_m\neq 0}-q_m\right)> q_mT\right)\leq \exp\{-2q_m^2T\}.
   $$
\end{proof}

\subsection{Proof of Lemmas in Section \ref{sec:proof_q_A}}\label{sec:proof_lem_q_A}
\begin{proof}[proof of Lemma \ref{lem:db_GSM_q_A}]
    The proof is the same as that of Lemma \ref{lem:db_GSM_cq}, except that we need to bound the infinity norm instead of $\|\cdot\|_R$. Using the same argument as in the proof of Lemma \ref{lem:db_GSM_cq}, we obtain
    \begin{equation*}
        \mathbb{P}\left(\left\|\nabla L_m^{\LN}(A_m^{\LN})\right\|_{\infty}>\eta\right)\leq 2K^2M\exp\{-\frac{\eta^2T}{2\Sigma_{kk}}\}.
    \end{equation*}
    Let $\eta=C\sqrt{\Sigma_{kk}}\sqrt{\frac{\log M}{T}}$, we have the final result.
\end{proof}

\begin{proof}[proof of Lemma \ref{lem:db_BAR}]
    By the definition of $L_m^{\bern}$, 
    $$
    \nabla L_m^{\bern}(B_m^{\bern})=-\frac{1}{T}\sum_{t=0}^{T-1}\varepsilon^{t+1}_m X^t,
    $$
    where $\varepsilon^{t+1}_m=\ind{X^{t+1}_m\neq 0}-P(X^{t+1}_m\neq 0|X^t).$ Since $\bbE(\varepsilon^{t+1}_m X^t|\cF_{t})=0$ each element of $\varepsilon^{t+1}_m X^t$ is bounded by $[-1,1]$, the argument used in the proof of Lemma \ref{lem:dev_bnd_mult} can be directly applied here, and leads us to
	\begin{equation*}
	    \bbP\left(\left\|\nabla L_m^{\bern}(B_m^{\bern})\right\|_{\infty}>C\sqrt{\frac{\log M}{T}}\right)\leq \exp\{-c\log M\}.
	\end{equation*}. 
\end{proof}

\begin{proof}[proof of Lemma \ref{lem::rsc_BAR}]
 Define $g(u)=\log(1+e^u)$, and $u^{t*}_m=\langle B_m^{\bern},X^t\rangle$, $\Delta u^t_m=\langle \Delta_m^B, X^t\rangle$, then we have
\begin{equation*}
\begin{split}
    D_{L_m^{\bern}}(B_m^{\bern}+\Delta_m^B,B_m^{\bern}) =&\frac{1}{T}\sum_{t=0}^{T-1}\left[g(u^{t*}_m+\Delta u^t_m)-g(u^{t*}_m)-g'(u^{t*}_m)\Delta u^t_m\right]\\
    =&\frac{1}{2T}\sum_{t=0}^{T-1}g''(\xi^t)(\Delta u^t_m)^2,
\end{split}
\end{equation*}
where $\xi^t$ lies between $u^{t*}_m$ and $u^{t*}_m+\Delta u^t_m$. Since $\|B_m^{\bern}\|_{1,\infty}\leq R_{\max}^{\LN, \bern}$, $\|\Delta_m^B\|_{1,\infty}\leq 1$, $u^{t*}_m\in [-R_{\max}^{\LN,\bern},R_{\max}^{\LN,\bern}]$, $\Delta u^t_m\in [-1,1]$. Therefore, 
\begin{equation*}
    g''(\xi^t)=\frac{e^{-\xi^t}}{(1+e^{-\xi^t})^2}\geq \exp\{-R_{\max}^{\LN,\bern}-1\}(1+\exp\{R_{\max}^{\LN,\bern}+1\})^{-2}=\sigma_B.
\end{equation*}
This implies
\begin{equation*}
    D_{L_m^{\bern}}(B_m^{\bern}+\Delta_m^B,B_m^{\bern})\geq \frac{\sigma_B}{2T}\sum_{t=0}^{T-1}\langle \Delta_m^B, X^t\rangle^2.
\end{equation*}
\end{proof}

\begin{proof}[proof of Lemma \ref{lem:REC_q_A}]
The proof is very similar to that of Lemma \ref{lem:REC_GSM_cq}. For notational convenience, we view $X^t$ and $U^{(2)}$ as $MK$-dimensional vector, $U^{(1)}$ as $(K-1)\times MK$ dimensional matrix. We can still write 
\begin{equation}
\begin{split}
      &\frac{1}{2T}\sum_{t=0}^{T-1}\ind{X^{t+1}_m\neq 0}\|\langle U^{(1)},X^t\rangle\|_2^2+\frac{\sigma_B}{2T}\sum_{t=0}^{T-1}\langle U^{(2)},X^t\rangle^2\\
      =&\frac{1}{2T}\sum_{t=0}^{T-1}\left\{\sum_{k=1}^{K-1}U^{(1)\top}_k\mathbb{E}\left[X^tX^{t\top}\ind{X^{t+1}_m\neq 0}|\mathcal{F}_{t-1}\right]U^{(1)}_k+\sigma_BU^{(2)\top}\mathbb{E}\left[X^tX^{t\top}|\mathcal{F}_{t-1}\right]U^{(2)}\right\}\\
      &+\frac{1}{2T}\sum_{t=0}^{T-1}\sum_{k=1}^{K-1}U^{(1)\top}_kP_1^{t}U_k^{(1)}+\sigma_B U^{(2)\top}P_2^tU^{(2)},
\end{split}
\end{equation}
where $$
P_1^t=X^tX^{t\top}\ind{X^{t+1}_m\neq 0}-\mathbb{E}\left[X^tX^{t\top}\ind{X^{t+1}_m\neq 0}|\mathcal{F}_{t-1}\right],\quad P_2^t=X^tX^{t\top}-\mathbb{E}\left[X^tX^{t\top}|\mathcal{F}_{t-1}\right].
$$
The last two terms can be bounded using the same argument as that in the proof of Lemma \ref{lem:REC_GSM_cq}. We only have to deal with the first two terms.
    Since 
    $$
    \bbE\left(X^tX^{t\top}\ind{X^{t+1}_m\neq 0}|\mathcal{F}_{t-1}\right)=\bbE\left(X^tX^{t\top}\bbP(X^{t+1}_m\neq 0|\cF_t)|\mathcal{F}_{t-1}\right),
    $$
    and 
    $$
    \bbP(X^{t+1}_m\neq 0|\cF_t)=\left(1+\exp\{-\langle B_m^{\bern},X^t\rangle\}\right)^{-1}\geq \frac{1}{1+e^{R_{\max}^{\LN,\bern}}},
    $$
    we have
    \begin{equation*}
    \begin{split}
         &U^{(1)\top}_k\mathbb{E}\left[X^tX^{t\top}\ind{X^{t+1}_m\neq 0}|\mathcal{F}_{t-1}\right]U^{(1)}_k\\
         =&\mathbb{E}\left[\bbP(X^{t+1}_m\neq 0|\cF_t)U^{(1)\top}_kX^tX^{t\top}U^{(1)}_k|\mathcal{F}_{t-1}\right]\\
         \geq& \frac{1}{1+e^{R_{\max}^{\LN,\bern}}}\mathbb{E}\left[U^{(1)\top}_kX^tX^{t\top}U^{(1)}_k|\mathcal{F}_{t-1}\right]\\
         \geq &\frac{\lambda_{\min}(\mathbb{E}\left[X^tX^{t\top}|\mathcal{F}_{t-1}\right])}{1+e^{R_{\max}^{\LN,\bern}}}\|U^{(1)}_k\|_F^2.
    \end{split}
    \end{equation*}
    Thus, 
    \begin{equation*}
        \begin{split}
            &\frac{1}{2T}\sum_{t=0}^{T-1}\left\{\sum_{k=1}^{K-1}U^{(1)\top}_k\mathbb{E}\left[X^tX^{t\top}\ind{X^{t+1}_m\neq 0}|\mathcal{F}_{t-1}\right]U^{(1)}_k+\sigma_BU^{(2)\top}\mathbb{E}\left[X^tX^{t\top}|\mathcal{F}_{t-1}\right]U^{(2)}\right\}\\
            \geq &\min_t \lambda_{\min}(\mathbb{E}\left[X^tX^{t\top}|\mathcal{F}_{t-1}\right])\left[\frac{\|U^{(1)}\|_F^2}{2(1+e^{R_{\max}^{\LN,\bern}})}+\frac{\sigma_B\|U^{(2)}\|_F^2}{2}\right]\\
            \geq &c\|U\|_F^2\min_t \lambda_{\min}(\mathbb{E}\left[X^tX^{t\top}|\mathcal{F}_{t-1}\right]).
        \end{split}
    \end{equation*}
    To lower bound $\min_t \lambda_{\min}(\mathbb{E}\left[X^tX^{t\top}|\mathcal{F}_{t-1}\right])$, we can use the same argument as in the proof of Lemma \ref{lem:REC_GSM_cq}. The only difference lies that $\lambda_{\min}(\mathbb{E}\left[X^tX^{t\top}|\mathcal{F}_{t-1}\right])$ depends on more parameters: $\{\langle A_m^{\LN}, X^{t-1}\rangle, \nu^{\LN}_m, \Sigma, \langle B_m^{\bern}, X^{t-1}\rangle, \eta_m^{\bern}\}_{m=1}^M$ and $K$. Therefore $\min_t \lambda_{\min}(\mathbb{E}\left[X^tX^{t\top}|\mathcal{F}_{t-1}\right])\geq c>0$ for $c$ depending on $K$, $R_{\max}^{\LN,\bern}$, $\|\Sigma\|_{\infty}$, $\lambda_{\min}(\Sigma)$, $\|\nu^{\LN}\|_{\infty}$, $\|\eta^{\bern}\|_{\infty}$.
	Therefore, 
	$$
	\inf_{U\in \cC(S_m^{\LN, \bern},3)\cap B_F(1)}\frac{1}{2T}\sum_{t=0}^{T-1}\ind{X^{t-1}_m\neq 0}\|\langle U^{(1)},X^t\rangle\|_2^2+\frac{\sigma_B}{2T}\sum_{t=0}^{T-1}\langle U^{(2)},X^t\rangle^2\geq c,
	$$
	with probability at least $1-\exp\left\{-c\log M\right\}$.
\end{proof}

\section{Detailed Procedures in Numerical Experiments}
\subsection{Data Generation Process of Synthetic Mixture Model}\label{sec:dgp_toymodel}
Formally, let $\mathcal{M}_1, \mathcal{M}_2\subset \{1,\dots,M\}$ be disjoint sets of nodes such that $\mathcal{M}_1\cup \mathcal{M}_2=\{1,\dots,M\}$, where $\mathcal{M}_1$ includes nodes of the first type (logistic-normally distributed), while nodes in $\mathcal{M}_2$ are of the second type (following multinomial distribution). Parameter sets $\{A_m^{\LN}\in \bbR^{(K-1)\times M\times K}, : m\in \mathcal{M}_1\}$, $\{B_m^{\bern}\in \bbR^{M\times K}: m\in \mathcal{M}_1\}$, $\{\nu_m^{\LN}\in \bbR^{K-1}: m\in \mathcal{M}_1\}$ and $\{\eta_m^{\bern}\in \bbR: m\in \mathcal{M}_1\}$ determine the conditional distribution of the first type of nodes, while $\{A_m^{\MN}\in \bbR^{K\times M\times K}: m\in \mathcal{M}_2\}$ and $\{\nu_m^{\MN}\in \bbR^{K}: m\in \mathcal{M}_2\}$ determine the conditional distribution of the second type of nodes. The data set $\{X^t\}_{t=0}^T$ is then generated as follows: initial data $\{X^0_m\in \bbR^K\}_{m=1}^M$ are i.i.d.
    multinomial random vectors, and at each time point $t+1$, $\{X^{t+1}_m\}_{m=1}^M$ are independent given the past.
    \begin{itemize}
        \item If $m\in \mathcal{M}_1$, then the distribution of $X^{t+1}_m\in \bbR^K$ given $X^t$ is specified by the logistic-normal modeling defined in \eqref{eq:model_disc}, \eqref{eq:model_cts} and \eqref{eq:q_A_dis}, with parameters $A_m^{\LN}$, $B_m^{\bern}$, $\nu_m^{\LN}$ and $\eta_m^{\bern}$;
        \item If $m\in \mathcal{M}_2$, the true categorical vector $\widetilde{X}^{t+1}_m\in \bbR^K$ follows multinomial distribution given $X^t$, as specified by \eqref{eq:mult} with parameters $A_m^{\MN}$ and $\nu_m^{\MN}$. Observed data $X^{t+1}_m=0^{K\times 1}$ if $\widetilde{X}^{t+1}_m=0^{K\times 1}$, otherwise, $X^{t+1}_m\in \bbR^K$ is a noisy version of $\widetilde{X}^{t+1}_m$, following logistic-normal distribution:
    \begin{equation}
	X^{t+1}_m\sim \begin{cases}
	\LN((-1,\dots,-1),\sigma),& \widetilde{X}^{t+1}_m=e_K,\\
	\LN(e_k,\sigma), & \widetilde{X}^{t+1}_m=e_k\text{ for }k<k,
	\end{cases}
	\end{equation}
	where $e_k$ refers to the $k$th vector in the canonical basis. Here we say a vector $Y\in \bbR^K$ follows $\LN(\mu,\sigma)$ for $\mu\in \bbR^{K-1}$ and $\sigma>0$, if $\log(\frac{Y_{1:(K-1)}}{Y_K})\sim \mathcal{N}(\mu,\sigma^2 I_{(K-1)\times (K-1)})$. Again, we assume $X^{t+1}_m$ to follow logistic-normal distribution, since it is widely used for modeling compositional data. In fact, the distribution of $X^{t+1}_m$ given $\widetilde{X}^{t+1}_m$ is designed to ensure that $\bbE\left(\log \frac{X^{t+1}_{mk}}{X^{t+1}_{mk'}}\right)=1$ if $\widetilde{X}^{t+1}_m=e_k$ and $k'\neq k$, for $1\leq k\leq K$. 
    \end{itemize} 
    
    We specify the parameters in the following. For simplicity, we assume 
the influence of events in one category is only imposed on future events in the same category, which is reasonable if we think of the categories as topics of news articles; also, events in the last category exerts and receives no influence, so that the relative influence encoded by $A_m^{\LN}$ can be interpreted as the absolute influence, as explained in Section \ref{sec:setup_mixed}. Therefore, for $m\in \mathcal{M}_1$, we set $B^{\bern}_{m,:,K}=0$, $A^{\LN}_{m,k,:,k}=B^{\bern}_{m,:,k}$, $A^{\LN}_{m,k,:,k'}=0$ for $1\leq k\leq K-1$ and $k'\neq k$; while for $m\in \mathcal{M}_2$, $A^{\MN}_{m,k,:,k'}=0$ for $k\neq k'$ or $k'=K$.

    The network parameters $\{A^{\LN}_m: m\in \mathcal{M}_1\}$ and $\{A^{\MN}_m: m\in \mathcal{M}_2\}$ have been visualized in Figure \ref{fig:toy_model_true_network}. For reproducibility, we present the non-zero parameter values here:
    \begin{equation}
    \begin{split}
    A^{\LN}_{1,(m-3)/3,m,(m-3)/3}=&0.5,\quad m=6, 9, 12,15,\\
    A^{\LN}_{m,k,1,k}=&1, \quad 2\leq m\leq 5, 1\leq k\leq 4\\
    A^{\MN}_{m,(m-3)/3,1,(m-3)/3}=&2, \quad m=6, 9, 12, 15,\\
    A^{\MN}_{(m+1):(m+2),(m-3)/3,m,(m-3)/3}=&(0.7, 0.7)^\top, \quad m=6, 9, 12, 15.
    \end{split}
    \end{equation}
    The intercept terms $\{\nu_m^{\MN}: m\in \mathcal{M}_1\}$, $\{\nu^{\LN}_m: m\in \mathcal{M}_2\}$ and $\{\eta ^{\bern}_m: m\in \mathcal{M}_2\}$ are defined to align with the preference of each node, so that nodes 1-5 are equally likely to have events in any of the first 4 categories, while each of nodes 6-8 (9-11, etc) is more likely to have events in one category than the other. More specifically, we set
    \begin{equation}
    \begin{split}
    \nu^{\LN}_{m,:}=(1,1,1,1,0), \quad 1\leq m\leq 5,\\
    \nu^{\MN}_{m,:}=
    \begin{cases}
    (1,0.5,0.5,0.5) &6\leq m\leq 8,\\
    (0.5,1,0.5,0.5), &9\leq m\leq 11,\\
    (0.5,0.5,1,0.5), &12\leq m\leq 14,\\
    (0.5,0.5,0.5,1), &15\leq m\leq 17.
    \end{cases}
    \end{split}
    \end{equation}
The noise level $\sigma$ for the contaminated multinomial vectors is set as $0.2$. The comparison results can be influenced by $\sigma^2$: when $\sigma^2$ gets too large, neither method works well and thus the performance gap between the two estimated networks on nodes 6-17 would be negligible.

\subsection{Data Preprocessing in Section \ref{sec:real_data}}\label{sec:real_data_prep}
Some details about how we obtain the membership vectors for each post in both examples are listed below.
\begin{enumerate}
    \item {\bf Identifying political tendencies of tweets:}\\
    We first use the tweets from the first half of the time period (55,859 tweets from Jan 1, 2016 to June 6, 2016) to train a neural network for categorizing tweets into two political tendencies (left- and right-leaning). The input feature vector of the neural network is an embedded vector of each tweet obtained by the standard pre-trained model BERT \citep{devlin2018bert, xiao2018bertservice} (uncased, 24-layer); and the partisanship of the user is used as the label (tweets sent by Democrats are all labeled as ``left-leaning''). The partisanship may not represent the true label, but due to the lack of human annotated labels, we believe the partisanship serves as a reasonable approximation, especially since politicians usually sent tweets with clear ideology. 
    
    The neural network is composed of three fully connected layers (two hidden layers of 128 nodes). RELU and softmax are the activation functions of the first two layers and the last layer respectively, and the cross entropy loss is used for training. 
    
    Since the tweets from the first half of the time period are already used for training the neural network, we don't include them in the input data set to our methods
to avoid over-fitting. The trained neural network model outputs a 2-dimensional vector on the simplex for each of the 27,600 tweets from June 7, 2016 to November 11, 2016, the second half of the time period. The neural network predicts the tweet to be left-leaning if the vector has larger value in its first coordinate, and right-leaning otherwise. Therefore, we use this vector as the mixed membership vector of the tweet, where the first coordinate is the membership in the left-leaning category and the second being that in the right-leaning category. 

    \item {\bf Topic membership vectors for memes in the MemeTracker example:}\\
    We first filter for the English media sources with high frequencies (more than 1500 posts included in the data set each month), 
which leads to a total of 5,684,791 posts from 101 media sources. For each post, we combine its recorded phrases/quotes together as the approximate content of the post. We then run topic modeling (Latent Dirichlet Allocation proposed in \cite{blei2003latent}) 
on these posts, where the number of topics is set as 5 ($K=5$), using the module \verb|gensim|\citep{rehurek_lrec} in python. For each topic, we present the top 10 keywords generated from topic modeling in the second column of Table \ref{tab:kw_5topics}, and we choose the topic names (the first column of Table \ref{tab:kw_5topics}) based on these keywords. 
\begin{table}[ht]
		\centering
		\begin{tabular}{|c|c|}
		\hline
			Topics &Keywords\\
			\hline
			Sports &time, people, lot, thing, game, way, team, work, player, year\\
			\hline
			International &people, country, government,  time, united\_states,\\
			Affairs& state, law, issue, case, work\\
			\hline
			Lifestyle &life, people, man, family, love, water, woman, world, story, music\\
			\hline
			\multirow{2}{*}{Finance}&market, company, business, economy, customer,\\& time, service, industry, bank, product\\
			\hline
			\multirow{2}{*}{Health} &child, patient, food, health, people, drug, hospital, \\&information, research, risk\\
			\hline
		\end{tabular}
		\caption{Keywords for the 5 topics generated from topic modeling.}\label{tab:kw_5topics}
	\end{table}
For each post item, topic modeling also outputs a corresponding $K$-dimensional weight vector on the simplex, indicating its memberships in the $K$ topics.

	Using 1-hour discretizations, we obtain a sample of size $T+1=5807$, 
	and if we want to learn the network among all of  the $101$ media sources, there would be $255,025$ ($101^2\times 5^2$) network parameters to estimate for both methods. Therefore for simplicity and interpretability, we select a subset of the 101 media sources and learn the network among them. To preserve a variety of topics covered in the posts, for each of the first 4 topics, we select the top 15 media sources that have the highest average topic weights in it.\footnote{No selected media has high weights in the topic ``health", so that we have a good choice for the baseline topic, as explained shortly.} 
	This leads us to a list of 58 media sources ($M=58$), due to some overlaps among top media sources in different topics, so the total number of network parameters to estimate is reduced to $84,100$.
\end{enumerate}
After we get the mixed membership vector of each post for each example, the time series data $\{X^t\in \bbR^{M\times K}\}_{t=0}^T$ is obtained as follows. For the political tweets data, the time period is discretized into $T+1=1000$ intervals of length approximately 3.7 hrs, while for the MemeTracker data, we use 1-hour discretization and end up with $T+1=5807$. After discretizing the time period into $T+1$ time intervals, the input data $\{X^t_m\in \bbR^K, 1\leq m\leq M, 0\leq t\leq T\}$ ($M$ is the number of nodes) is then constructed as follows: for each time window $t$, if there is no event associated with node $m$, let $X^t_m=0$; otherwise, (1) for the logistic-normal approach, let $X^t_m \in \mathbb{R}^{K}$ be the mixed membership vector (over the categories) of the event; (2) for the multinomial approach, let $X^t_m\in \mathbb{R}^K$ be the rounded mixed membership vector, that is, $X^t_m=e_k$ if the membership vector takes the largest value in the $k$th category, where $e_k$ is the $k$th canonical vector in $\bbR^K$. If there are multiple events associated with one node in the same time window, we average the mixed membership vector and use that as $X^t_m$ for the logistic-normal approach, and the rounded version of that average vector as $X^t_m$ for the multinomial approach.
\subsection{Choice of Baseline Topic for the MemeTracker example}\label{sec:meme_baseline}
We choose the baseline topic for the logistic-normal model in the MemeTracker example due to the following reasons. Due to our choice of the 58 media sources (the top 15 media sources in each of the first 4 topics) as explained in Appendix \ref{sec:real_data_prep}, there is no media focusing on the topic ``Health". 
	Therefore, we believe that the influence exerted upon the topic ``Health" might be weak. Thus, (1) it might be more interesting to see the influences received by the other 4 topics than that received by ``Health"; (2) 
	 the relative influence of a source topic on (a target topic compared to ``Health") should be close to the absolute influence of that source topic on the target topic, as mentioned in Section \ref{sec:setup_mixed}.
\subsection{Definition of Prediction Errors in Section \ref{sec:real_data}}\label{sec:real_data_prediction}
The prediction errors for the two methods are evaluated on hold-out sets (latter 30\% of each data set), after fitting the models using training sets (first 70\% of each data set). Throughout the real data experiments, all tuning parameters are chosen using cross-validation on the training sets\footnote{We use the same cross-validation method as that in the synthetic toy model experiment.}. 
	 The prediction error on a hold-out set is defined as follows:
	 \begin{itemize}
	     \item For a fitted multinomial model, given $X^{t-1}\in \bbR^{M\times K}$ (rounded data at time $t-1$ in the hold-out set), a one-step-ahead predicted probability vector $\hat{p}^t_m\in \bbR^{K+1}$ (the last dimension is the probability of no event) is output for each user $m$, according to \eqref{eq:mult}. The prediction for $X^t_m$ is defined as
\begin{equation*}
   \widehat{X}^t_m= \begin{cases}0,&\quad \arg\max_{k'} \hat{p}^t_{mk'}=K+1,\\
   e_k,&\quad \arg\max_{k'} \hat{p}^t_{mk'}=k\leq K,
    \end{cases}
\end{equation*} and the prediction error is calculated by $\frac{1}{TM}\sum_{t,m}\|X^t_m-\widehat{X}^t_m\|_2^2$, which is the proportion of wrong predictions for all nodes and time units in the hold-out set. Here $X^t_m$ is the observed rounded data. 
\item For a fitted logistic-normal model, given $X^{t-1}\in \bbR^{M\times K}$ (original, unrounded) in the hold-out set, a probability $\hat{q}^t_m$ is output for an event associated with node $m$ to occur at time $t$, specified by \eqref{eq:q_A_dis}; the expected log-ratios $\{\log \frac{\widehat{Z}^{t}_{mk}}{\widehat{Z}^{t}_{mK}}\}_{k=1}^{K-1}$ of the mixed membership vector $Z^t_m\in\triangle^{K-1}$ can also be specified by \eqref{eq:model_cts} with $\epsilon^t_{mk}=0$. Then we can transform the expected log-ratios back to $\widehat{Z}^t_m$ as the prediction for true mixed membership vector. Hence we define the prediction for $X^t_m$ as $\widehat{X}^{t}_m=\hat{q}^t_m \widehat{Z}^t_m$, and prediction error as $\sum_{t,m}\frac{\|X^t_m-\widehat{X}^t_m\|_2^2}{TM}$ (mean squared error). 
	 \end{itemize}

\subsection{Construction of Neighborhood Visualization for the MemeTracker example}\label{sec:meme_vis}
We present the neighborhood estimates around each media, instead of the whole network estimates among 58 media sources. In each sub-network, we include the central media's top 10 neighbors in any of the three networks. Edges sent to or from the central media node are presented, if their corresponding parameters have absolute values larger than $0.1$,\footnote{We use a smaller threshold here than the political tweets example (0.1 instead of 0.5), since we present the sub-networks around each node, instead of the whole network among all nodes. Smaller threshold can still preserve clarity of presentation.} and they encode influences between the same topic. That is to say, for relative sub-networks, we present the edges from each of the first 4 topics to \{the same topic compared to ``Health"\}; and for absolute sub-networks, we present the edges from each of the 5 topics to the same topic. 

\subsection{Generation of Word Clouds and Topic Weights in the MemeTracker Example}\label{sec:meme_validation}
 To understand the topics of the influence, we also combine those influence-involved phrase clusters together as one document. We remove the stop words and only preserve nouns in this document, just as what we did for the pre-processing of the topic modeling. Then we generate a word cloud for this pre-processed document using the module \verb|wordcloud|\footnote{https://github.com/amueller/word\_cloud} in Python, which assigns larger fonts to words with higher frequencies. The top 100 words with highest frequencies are included in each word cloud. 
	We also apply the previously trained topic model (mentioned in the beginning of Section \ref{sec:real_data_meme}) on the pre-processed document to obtain its topic weights, as a quantitative characterization of the influence strength in each topic.
\end{document}